\documentclass[10pt]{article}

\usepackage[margin=0.70in,twocolumn]{geometry}
\usepackage{times}

\usepackage{microtype}
\usepackage{graphicx}
\usepackage{float}
\usepackage{balance}
\graphicspath{{./}}
\usepackage{booktabs}
\usepackage{amsmath,amssymb,amsthm}
\usepackage{mathtools}
\usepackage{enumitem}
\usepackage{xcolor}
\usepackage[hidelinks]{hyperref}
\usepackage{xurl}
\usepackage[super,sort&compress]{natbib}
\usepackage{tikz}
\usetikzlibrary{arrows.meta,positioning,shapes.geometric,calc}
\newenvironment{boxedtext}{%
  \begin{center}
  \setlength{\fboxsep}{8pt}
  \begin{tabular}{|p{0.94\linewidth}|}
  \hline
  \small
}{%
  \\
  \hline
  \end{tabular}
  \end{center}
}

\newcommand{\repoURL}{https://github.com/SR123/LLM-text-ecosystems}

\newtheorem{theorem}{Theorem}
\newtheorem{proposition}{Proposition}
\newtheorem{lemma}{Lemma}
\newtheorem{corollary}{Corollary}
\theoremstyle{definition}
\newtheorem{definition}{Definition}
\newtheorem{example}{Example}
\theoremstyle{remark}
\newtheorem{remark}{Remark}

\newcommand{\1}{\mathbf{1}}
\newcommand{\Prob}{\mathbb{P}}
\newcommand{\E}{\mathbb{E}}
\newcommand{\KL}{\mathrm{KL}}

\definecolor{tutorialblue}{RGB}{0,51,102}

\title{\bfseries Drift and selection in LLM text ecosystems}
\author{S\o ren Riis\\
\small Queen Mary University of London, London, United Kingdom\\
\small \texttt{s.riis@qmul.ac.uk}}
\date{}

\begin{document}

\twocolumn[
\maketitle
\vspace{-1.05em}
\begin{center}
\begin{minipage}{0.97\textwidth}\small
\textbf{Abstract.}
The public text record---the material from which both people and AI systems now learn---is increasingly shaped by its own outputs.
Generated text enters the public record, later agents learn from it, and the cycle repeats.
Here we develop an exactly solvable mathematical framework for this recursive process, based on variable-order $n$-gram agents, and separate two forces acting on the public corpus.
The first is drift: unfiltered reuse progressively removes rare forms, and in the infinite-corpus limit we characterise the stable distributions exactly.
The second is selection: publication, ranking and verification filter what enters the record, and the outcome depends on what is selected.
When publication merely reflects the statistical status quo, the corpus converges to a shallow state in which further lookahead brings no benefit.
When publication is normative---rewarding quality, correctness or novelty---deeper structure persists, and we establish an optimal upper bound on the resulting divergence from shallow equilibria.
The framework therefore identifies when recursive publication compresses public text and when selective filtering sustains richer structure, with implications for the design of AI training corpora.
\end{minipage}
\end{center}
\vspace{0.45em}
]

\section*{A human--AI feedback loop in public text}

Public text is increasingly produced and filtered by mixed human--AI systems: models generate drafts and suggestions, humans choose what to publish, and automated ranking, moderation, testing, verification and corpus-curation pipelines determine which outputs are surfaced, retained, deduplicated\cite{Dodge2021C4} or suppressed. Some of this material later becomes training data for new models,\cite{RefinedWeb2023,FineWeb2024,DataCompLM2024,Albalak2024Survey} so later learners train not on an untouched web but on a public record already altered by earlier generators and filters.

Prior work has shown that recursive reuse of synthetic data can delete tails, reduce diversity or induce collapse-like behaviour.\cite{Shumailov2024Nature,Alemohammad2024MAD,BohacekFarid2023,Guo2024Diversity} Related evidence from computer vision shows analogous contamination effects when generated outputs re-enter future datasets.\cite{Hataya2023Corrupt} Other work shows that the outcome depends strongly on the surrounding pipeline: iterative retraining can remain stable in some regimes,\cite{Bertrand2024} accumulation of real data can break the recursive curse,\cite{Gerstgrasser2024} and synthetic-data loops can behave very differently when verification or correction is built into the loop.\cite{Feng2024Verification,Gillman2024SelfCorrecting} These results have mostly been studied in isolation. We therefore do \emph{not} claim to discover model collapse per se. Our contribution is a unified theory of \emph{public text environments} that separates neutral drift from selective publication and studies what later learners inherit from the resulting public record.

We introduce an exactly solvable framework for this recursive process based on variable-order $n$-gram agents.\cite{Katz1987,KneserNey1995,Shannon1951,GriffithsKalish2007,KirbyGriffithsSmith2014,BoydRicherson1985} The framework is mathematically transparent: the relevant conditional distributions can be written down explicitly and the long-run fixed points can be characterised exactly. It separates two forces. Drift arises from repeated reuse of finite corpora and removes weakly supported forms even without any preference over content. Selection arises because publication, ranking, verification and omission change what becomes visible in the public record.

In this framework, unfiltered reuse drives the public record toward shallow equilibria, in which the visible statistics can be matched without recovering deeper generative structure. By contrast, normative selection---publication rules that reward quality, correctness or novelty---can preserve richer structure. Recursive publication therefore has no single universal effect: it can stabilise polished outputs while removing the intermediate traces needed for learning from process rather than outcome alone.

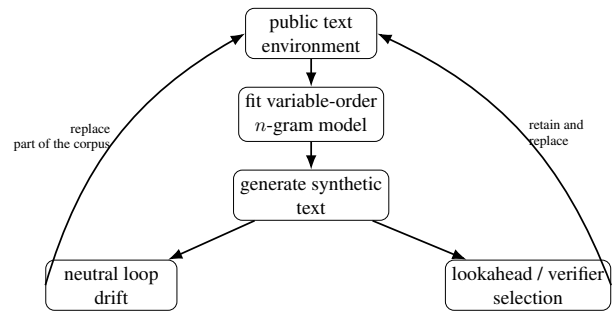
\begin{figure}[t]
\centering
\resizebox{0.92\columnwidth}{!}{%
\begin{tikzpicture}[node distance=5mm,>=Latex,scale=0.9, every node/.style={transform shape}]
\node[draw,rounded corners,align=center,minimum width=2.35cm,minimum height=0.72cm] (env) {public text\\environment};
\node[draw,rounded corners,align=center,minimum width=2.35cm,minimum height=0.72cm,below=of env] (train) {fit variable-order\\$n$-gram model};
\node[draw,rounded corners,align=center,minimum width=2.35cm,minimum height=0.72cm,below=of train] (gen) {generate synthetic\\text};
\node[draw,rounded corners,align=center,minimum width=2.35cm,minimum height=0.72cm,below left=7mm and 10mm of gen] (drift) {neutral loop\\drift};
\node[draw,rounded corners,align=center,minimum width=2.35cm,minimum height=0.72cm,below right=7mm and 10mm of gen] (sel) {lookahead / verifier\\selection};
\draw[-Latex, thick] (env) -- (train);
\draw[-Latex, thick] (train) -- (gen);
\draw[-Latex, thick] (gen) -- (drift);
\draw[-Latex, thick] (gen) -- (sel);
\draw[-Latex, thick] (drift.west) to[bend left=22] node[left, font=\scriptsize, align=right]{replace\\part of the corpus} (env.west);
\draw[-Latex, thick] (sel.east) to[bend right=22] node[right, font=\scriptsize, align=left]{retain and\\replace} (env.east);
\end{tikzpicture}%
}
\caption{\textbf{Recursive text as drift plus selection.}
A finite sample of the environment is used to fit a short-context generator.
The generator produces synthetic text.
What re-enters the environment can be unfiltered (drift) or filtered by a success criterion (selection).}
\label{fig:loop}
\end{figure}

\section*{Neutral recursion is Wright--Fisher drift}

The neutral baseline is simple: fit an $n$-gram model to the current corpus, retain a fraction of the documents, regenerate the rest from the fitted model, and repeat.
In population-genetic terms, this is a Wright--Fisher-style reproduction process on text: rare forms are not selected against, but finite sampling makes them vulnerable to loss.

When the corpus is very large---thought of, as in real-world training, as a large collection of independent documents rather than a single long text---the fitted $n$-gram model converges to a well-defined conditional distribution determined by the current corpus.
The stochastic instability is therefore a finite-sample phenomenon.
If a word has frequency $p$ and the regenerated batch contains $M$ sampled tokens, the chance that it is absent from that batch is
\begin{equation}
\Pr(\text{word absent from batch})=(1-p)^M.
\label{eq:dropout}
\end{equation}
Rare forms are therefore the first to disappear.
Without smoothing (that is, in any model that assigns \emph{zero} probability to unseen events), tail support is systematically fragile.

Theorem~1 captures the two baseline facts for the unsmoothed recursion: in finite corpora the unigram case is exactly Wright--Fisher drift, while in the infinite-corpus limit the general $n$-gram recursion admits a complete fixed-point description.

\begin{boxedtext}
\textbf{Theorem 1 (drift and the fixed-point polytope).}
\emph{Part (a): finite-sample drift.}
Consider the unigram urtext recursion in which, at each generation, a fraction $\alpha\in[0,1]$ of the public corpus is replaced by a fresh synthetic batch of $M$ tokens drawn i.i.d.\ from the current model.
Let $\mu_t$ be the mass of any chosen minority vocabulary at generation $t$.
Then
\[
\mu_{t+1}=(1-\alpha)\mu_t+\alpha K_t/M,
\qquad K_t\sim \mathrm{Bin}(M,\mu_t).
\]
Consequently
\[
\mathbb E[\mu_{t+1}\mid \mu_t]=\mu_t,
\qquad
\mathrm{Var}(\mu_{t+1}\mid \mu_t)=\alpha^2\mu_t(1-\mu_t)/M,
\]
and the one-step dropout probability (zero count in the synthetic batch) is $(1-\mu_t)^M$.
When $\alpha=1$ this is exactly the classical Wright--Fisher drift process from population genetics.
Rare forms behave like rare alleles in a finite population: their expected mass is unchanged from one generation to the next, but finite-sample fluctuations make the rarest forms the most likely to be lost.

\emph{Part (b): complete characterisation of fixed points in the infinite-corpus limit.}
In the limit $M\to\infty$, sampling noise vanishes and the distributional recursion $\rho_{t+1} = G_n(R_n(\rho_t))$ becomes exact: the updated $n$-gram law is fully determined by the current one.
The set of all its fixed points is a convex polytope---the polytope of non-negative unit circulations on the \emph{de~Bruijn graph} $B(n{-}1, s)$, whose nodes are $(n{-}1)$-grams and whose edges are $n$-grams.
Its dimension is $s^{n-1}(s-1)$.
The extreme points of this polytope are in bijection with simple directed cycles in $B(n{-}1, s)$: each extreme point is the uniform distribution on the $n$-grams traversed by a single deterministic periodic sequence.
Every self-consistent $n$-gram distribution is a convex combination of these deterministic extremes.
\end{boxedtext}

Equation~\eqref{eq:dropout} and Theorem~1(a) show that the expected frequency of a minority form is unchanged from one generation to the next: $\mathbb E[\mu_{t+1}\mid \mu_t]=\mu_t$, so there is no systematic force driving any word toward extinction.
What does change is the variance around that expectation, which accumulates under finite sampling: each generation introduces fresh random fluctuations, and in the pure-replacement case ($\alpha=1$), a rare form sampled zero times is lost permanently.

Theorem~1(b) gives a complete description of the deterministic attractors.
For a binary alphabet, trigrams yield a $4$-dimensional polytope with $6$~extreme points; $4$-grams yield an $8$-dimensional polytope with $19$~extreme points.
The number of extreme points grows super-exponentially with both alphabet size and context length: for an alphabet of size~$10$ in the bigram case it already exceeds one million, and for realistic vocabularies it is astronomically large.\cite{Maurer1992}
Every extreme point corresponds to a simple cycle in the de~Bruijn graph---the uniform distribution over the $n$-grams that the cycle traverses---and every fixed point is a convex combination of these cycle distributions, parameterised by continuously many mixture weights.
Although every extreme point is periodic, a generic fixed point is a mixture of cycles with different periods and is itself non-periodic for any practical purpose; the resulting text can exhibit irregular patterns of considerable length.
Worked examples, verification code, and the general circulation-polytope proof are in the supplementary materials.

Theorem~1 describes the pure unsmoothed baseline.
In practice, back-off and smoothing change how loss manifests at higher orders.
For longer contexts, the same finite-sample pressure acts on phrases and on the context windows that support them.
What disappears first is often not the word itself but the highest-order support that allows the phrase to be regenerated.
Back-off keeps the generator fluent, but by shortening memory it turns distinctive prose into more generic prose.
Common forms therefore enjoy a compounding advantage over rare forms.

\section*{Lookahead turns next-token prediction into selection}

Standard next-token prediction evaluates tokens by their immediate conditional likelihood. But a token that looks locally likely may still lead to a continuation that is later rejected. A natural alternative is therefore to ask not which token looks best \emph{now}, but which token is most likely to remain acceptable for a prescribed number of further steps.

A \emph{trace} is the emitted token sequence. Whether a trace succeeds or fails may become clear only after several further steps, as later tokens reveal whether the continuation remains viable. The effective public next-token law is then the generator's law conditioned on the continuation surviving. This is not a claim that a modern LLM literally enumerates every $L$-step future. The same reweighting can arise from chain-of-thought reasoning, self-consistency selection, or external verifiers that suppress failing branches.\cite{Wei2022CoT,Wang2023SelfConsistency,Lightman2023,Feng2024Verification,Gillman2024SelfCorrecting} What reaches the public record is typically only the accepted branch; the hidden search remains off-stage.

This induces selection. A token is favoured not because it is locally common, but because it leads to continuations that survive the acceptance process. In code, proof and formal reasoning, this filter can be grounded in explicit success predicates. In open-ended prose, it may instead reinforce patterns already favoured by existing social or editorial preferences.

\section*{Stable public equilibria under lookahead selection}

The key theoretical question is whether repeated lookahead-guided publication can settle to a stable distribution, and if so, what that distribution looks like.

Write the \emph{corpus $r$-gram distribution} for the distribution of contiguous $r$-token blocks measured in the corpus.
Within this framework, a text environment is \emph{$n$-shallow} if its corpus $r$-gram distribution coincides with the rollout law generated by its induced order-$n$ continuation law, so extending the context window beyond~$n$ yields no additional gain.
For normative publication, the fixed-point claim below is for the soft evaluator/verifier rules analysed in the appendix, not for arbitrary discontinuous hard filters.

\begin{boxedtext}
\textbf{Theorem 2 (fixed points under selection).}
Let a population of $n$-gram agents with $L$-step lookahead (equivalently, $r$-gram agents with $r = n + L$) read from and publish into a shared public corpus.

\emph{Part (a): Descriptive publication.}
When agents publish the text they generate, the set of all fixed-point $r$-gram distributions consists precisely of the \emph{$n$-shallow} distributions within the circulation polytope~$\mathcal F_r$ from Theorem~1(b): those whose corpus $r$-gram distribution coincides with the rollout law generated by their induced order-$n$ continuation law.
This set has dimension $s^{n-1}(s-1)$, independent of~$r$.
When $r = n$ it equals the full circulation polytope; when $r > n$ it is a proper, non-convex subset.
At any such fixed point the lookahead is redundant---the $n$-gram agent without lookahead would publish the same text.
Descriptive selection is self-defeating: it drives the public corpus toward $n$-shallowness, erasing the very structure that made lookahead useful.

\emph{Part (b): Normative publication.}
For the soft normative rules analysed in the appendix---those whose quality standard demands structure that no order-$n$ continuation law can produce---the fixed-point corpus is not $n$-shallow: the corpus $r$-gram distribution retains genuine structure beyond the $n$-gram window.
The Kullback--Leibler (KL) divergence between the corpus $r$-gram distribution and the rollout from its induced order-$n$ continuation law is strictly positive at the fixed point, bounded above by $L\log_2 s$ bits, where $s$ is the alphabet size.
Normative selection is self-sustaining: it maintains deep structure that rewards continued lookahead.
\end{boxedtext}

The two cases have distinct diagnostics.
In the descriptive regime, the corpus $r$-gram distribution approaches the rollout from its induced order-$n$ continuation law, so the environment becomes $n$-shallow and the KL divergence collapses to zero.
In the normative regime, first-step self-consistency can hold without full $n$-shallowness: the lookahead first-token law can approach the ordinary one-step law while the KL divergence stabilises at a strictly positive value.
The bound $L\log_2 s$ is tight: it is attained by distributing mass uniformly over cyclic windows from a de~Bruijn word of order~$n$ (see appendix, Section~5.6).
Operationally, chain-of-thought, best-of-$r$ search, and short internal rollouts can be treated here as $r$-gram publishers with $r>n$: descriptive rules recycle generated blocks, whereas normative rules score candidate futures and therefore sustain a positive KL divergence.

Greedy argmax publication can be discontinuous when the preferred continuation changes, whereas soft publication rules vary more smoothly because near-miss traces retain some weight. Figure~\ref{fig:descriptive-vs-normative} illustrates both regimes, with convergence claims for soft normative rules only under the appendix regularity conditions.

Figure~\ref{fig:descriptive-vs-normative} shows the distinction in a matched exact experiment.
In the descriptive recursion, the KL divergence between the corpus $r$-gram distribution and the rollout from its induced order-$n$ continuation law collapses essentially to zero, so the public environment becomes $n$-shallow.
In the normative recursion, the ecosystem also converges, but to a stable nonzero KL divergence.
Convergence of the ecosystem and collapse of the KL divergence between the corpus $r$-gram distribution and its continuation-law rollout are therefore different questions.

\begin{figure}[t]
\centering
\includegraphics[width=\linewidth]{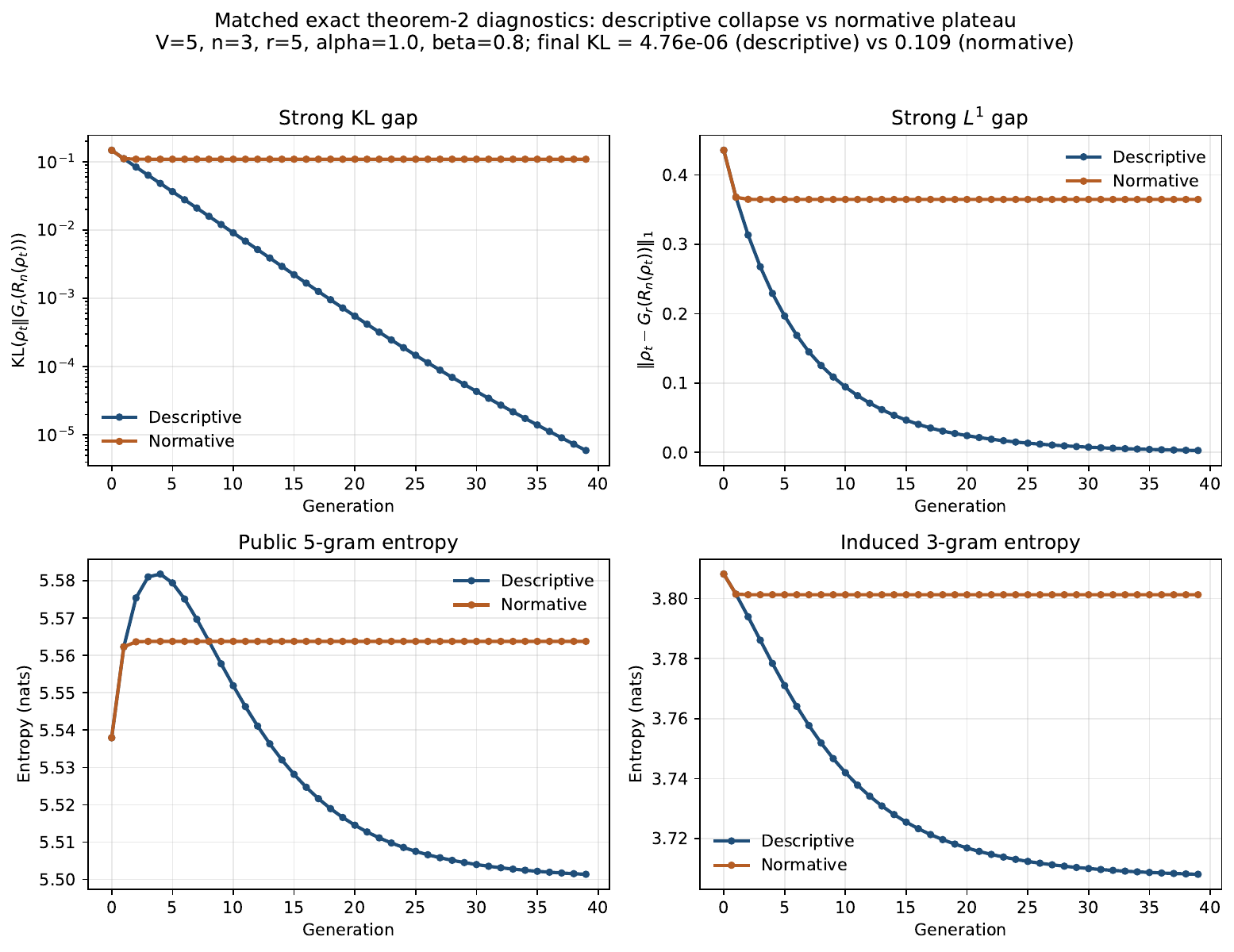}
\caption{\textbf{Descriptive versus normative publication.}
Each panel shows both recursions over $40$ generations with matched initial conditions.
In the descriptive case (blue), the Kullback--Leibler divergence (the excess of cross-entropy over entropy) between the corpus $r$-gram distribution and the rollout implied by its induced order-$n$ continuation law collapses to zero, so the public environment becomes $n$-shallow.
In the normative case (orange), the recursion converges to a stable public environment, but the KL divergence stabilises at $2.57$~bits---well within the extremal bound $L\log_2 s = 2\log_2 5 \approx 4.64$~bits---so the corpus does not become $n$-shallow.}
\label{fig:descriptive-vs-normative}
\end{figure}

Whether such compression is desirable depends on what downstream agents need.
For artefact-learning (reproducing a finished proof, a test-passing patch, or a polished explanation), filtering can remove dead ends and make successful structures easier to imitate.
For process-learning (debugging, proof search, scientific exploration), failed attempts and intermediate steps can themselves be informative.

\section*{Later learners inherit the public conditional}

\begin{boxedtext}
\textbf{Theorem 3 (cross-entropy inheritance).}
Let $q$ be the public next-token conditional generated by an environment published as a collection of texts.
Train a later learner on that environment by minimising expected next-token cross-entropy.
If the model class contains $q$, cross-entropy minimisation recovers $q$.
Otherwise, it recovers the KL-closest conditional in that class.
\end{boxedtext}

The collection-of-texts formulation matters.
For large token budgets, a single long published trajectory is not the right asymptotic object: empirical next-token frequencies along one dependent path need not recover the intended public conditional (concrete counterexamples exist).
Publishing a collection of texts is both mathematically cleaner and closer to the way real corpora are assembled.

In matched comparisons, a smoothed trigram learner and a small neural next-token learner both move toward the same target conditional. What is inherited is the public conditional; optimisation and approximation remain architecture-dependent. The same fitted $n$-gram can then publish either by sampling from that conditional or by greedy argmax publication.

If a later population publishes by sampling, it republishes $q$ itself. Under greedy argmax publication, it republishes the deterministic policy induced by $q$. The training target remains $q$, but the publication map can then be discontinuous at ties and near-ties.

Taken together, Theorems~1--3 describe how drift and selection reshape the public conditional and how later learners inherit that reshaped conditional.
Theorems~1(b) and~2 go beyond existence by giving a combinatorial characterisation of the equilibrium sets in terms of circulations on de~Bruijn graphs and the $n$-shallowness constraint.
Filtering can therefore help artefact-learning by standardising successful outputs while harming process-learning when it erases the traces needed for search.

\section*{A transparent laboratory}

The theorems above are proved in full generality, but the mechanisms they describe---support erosion under drift and probability redirection under selection---are most instructive when every intermediate quantity can be inspected.
Figure~\ref{fig:metrics-panels} serves this purpose.
It runs the recursive loop on a trigram model fitted to the public-domain fiction of Arthur Conan Doyle: a setting chosen not for scale but for transparency, since every conditional probability, every back-off event, and every selection step can be tracked exactly.

The figure is illustrative rather than confirmatory: the theorems are mathematical results, and the role of the experiment is to make their mechanisms visible in real English text.
Under neutral recursion, rare trigram contexts vanish and the generator is forced to fall back on shorter memory, producing increasingly generic continuations.
Under lookahead filtering, selection preserves short-horizon viability at the cost of diversity: it suppresses tokens that would lead into unsupported contexts, so the model can continue generating from full trigram support for longer but the text becomes more repetitive.
The trade-off between support and diversity is the signature of selection acting on a finite environment.

The appendix and repository contain broader illustrations of all three theorems, including additional drift runs, descriptive-versus-normative block-law diagnostics, and matched cross-entropy inheritance tests.
But the contribution of this paper is the theory, not any single experiment.

\begin{figure*}[t]
\centering
\includegraphics[width=0.92\textwidth]{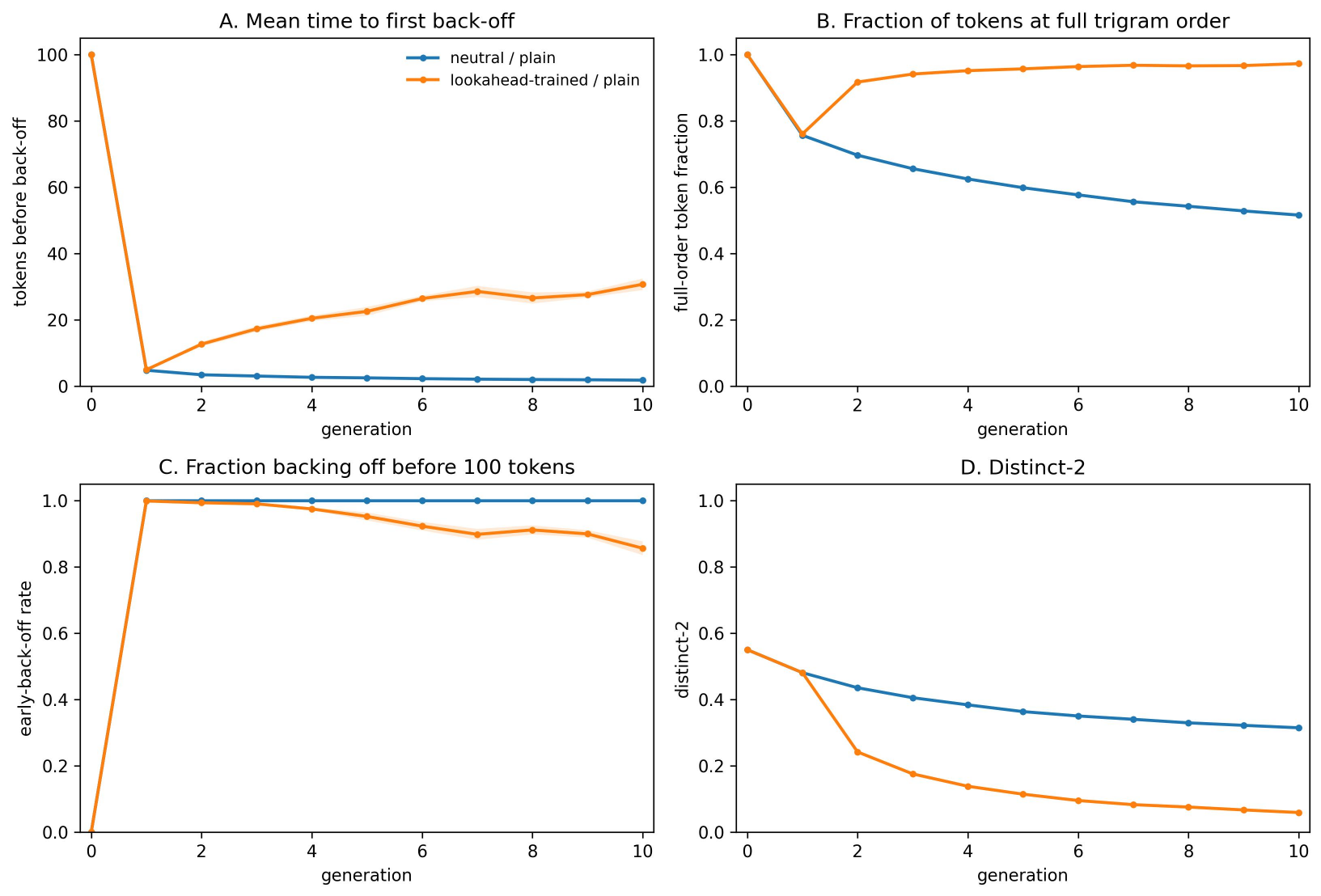}
\caption{\textbf{Recursive resampling concentrates support; filtering redirects it in the Conan Doyle corpus.}
All panels are computed from the recursive loop run on a trigram model fitted to the public-domain Arthur Conan Doyle fiction corpus used for the main-text illustration.
Under neutral recursion, finite resampling erodes weakly supported higher-order structure and pushes the generator towards more generic continuations.
Under lookahead-filtered recursion, probability mass is redirected towards continuations that preserve short-horizon viability under full-order generation.
The resulting gain in support comes at a cost in diversity, and the value of that trade-off depends on what rare contexts or traces are removed and on whether later learners need finished artefacts or the search process itself.
Parameters: $n=3$ (trigram), lookahead horizon $L=5$, generations $T=10$, total synthetic budget $M=120{,}000$ tokens per generation, five independent replicates, and $1{,}000$ fixed-seed evaluation rollouts of length $100$ per generation.
Curves show mean $\pm$ s.e.m.\ across replicates.}
\label{fig:metrics-panels}
\end{figure*}

\section*{Why this matters beyond $n$-grams}

The broader claim of this paper is about the public conditional distribution.
Recursive generation and publication filtering change the next-token law visible in the public record, and therefore change which artefacts and traces later learners can imitate.
The $n$-gram ecology makes that redistribution explicit in a setting where every quantity can be written down and analysed exactly.
Extensions with heterogeneous agents, including different temperatures and context windows, are treated in the supplementary materials.
More broadly, the circulation-polytope geometry of the unsmoothed baseline is only the first case of a richer landscape: different smoothing schemes, back-off strategies and interpolation methods each induce their own fixed-point geometry, and mapping that landscape is a natural direction for further work.

This paper does not claim that transformers literally obey variable-order $n$-gram dynamics.
The role of $n$-grams here is analogous to that of tabular Q-learning in reinforcement learning: they provide the idealised, exactly solvable case in which the theory can be stated in closed form, while neural architectures act as function-approximation proxies.
The bridge to transformers is through the public conditional itself: once an environment induces a public next-token law, later learners trained by next-token cross-entropy are pulled toward that law to the extent permitted by model class and optimisation.\cite{Bengio2003NNLM}
Real LLM ecosystems add forces absent here---richer curation rules, reward-based filtering,\cite{Ouyang2022InstructGPT} and platform amplification\cite{Huszar2022Amplification,Dujeancourt2023}---but the basic mechanism of drift, selection and inheritance persists beyond the $n$-gram setting.

\section*{Methods summary}

The illustrations in this paper use unsmoothed variable-order $n$-gram models fitted by maximum likelihood, backing off only when the full-order count is zero.\cite{Katz1987,KneserNey1995}
Each generation publishes a collection of synthetic texts rather than a single long trajectory, refits the model, and repeats.
Parameter settings for the Conan Doyle figures are given in the caption of Figure~\ref{fig:metrics-panels}.
Full proofs, exact diagnostics for Theorem~2, additional source-text and $n$-gram experiments, arithmetic-environment comparisons, and cross-architecture inheritance tests are provided in the appendix and repository at \url{\repoURL}.

\section*{Data availability}
The Conan Doyle source texts used for the main-text figures are public-domain Project Gutenberg editions.
Additional appendix analyses use public-domain texts by Jane Austen and Charles Darwin.
Cleaned corpus snapshots, train/validation/test splits, generated artefacts, summary CSV files, and repository manifests are available at \url{\repoURL}.

\section*{Code availability}
The exact code used to generate all figures and tables in this paper, together with the notebook pipelines and supporting modules, is available at \url{\repoURL}.
The main notebooks and helper modules are located in the repository's \texttt{notebooks/} and \texttt{src/} directories.

\section*{Acknowledgements}
The author thanks colleagues for feedback after an early preliminary version of this work was presented at an Informal Meeting on 6 March 2026, at the Centre for Fundamentals of AI and Computational Theory, Queen Mary University of London.

\section*{Author contributions}
S.R. conceived the study, discovered the mathematical relationships, developed the theorems, and wrote the manuscript.

\section*{Competing interests}
The author declares no competing interests.

\balance

\clearpage
\newgeometry{margin=1in}
\setlength{\textfloatsep}{12pt plus 2pt minus 2pt}
\setlength{\floatsep}{12pt plus 2pt minus 2pt}
\setlength{\intextsep}{12pt plus 2pt minus 2pt}
\setlength{\abovecaptionskip}{6pt}

\begin{center}
{\Large\bfseries Appendix: A Guided Route Through the Results}\\[0.5em]
{\large Drift and selection in LLM text ecosystems}\\[0.8em]
{S\o ren Riis}\\
{\small Queen Mary University of London, London, United Kingdom}\\
{\small \texttt{s.riis@qmul.ac.uk}}
\end{center}
\vspace{1em}

\noindent\textbf{Abstract.}\enspace
This appendix accompanies the main paper \emph{Drift and selection in LLM text ecosystems}.
It is organised as seven cumulative sections that develop the mathematical core and illustrate it with select experiments.
Section~1 introduces the $n$-gram model, the urtext recursion, and shows experimentally how vocabulary contracts under finite sampling; it states Theorem~1 part~(a) (drift) in full, including the Wright--Fisher identification and effective population size.
Section~2 takes the complementary limit $M \to \infty$, where the distributional recursion on $n$-gram laws becomes exact, and gives a complete characterisation of all fixed points as circulations on de~Bruijn graphs (Theorem~1 part~(b)), with fully worked examples from bigrams through $4$-grams and the general circulation-polytope proof.
Section~3 develops the machinery for projecting fixed points to different window lengths and introduces the distinction between shallow and deep distributions.
Section~4 introduces agent populations---ordinary $n$-gram agents and lookahead $r$-gram agents---and develops the selection theory (Theorems~2 and~3), including the distinction between descriptive and normative publication rules.
Section~5 isolates the information-theoretic diagnostics for Theorem~2 and shows, in matched exact runs, how descriptive recursion drives the KL divergence and $L^1$ distance to zero while a normative publication rule can converge to a stable nonzero KL divergence.
Section~6 presents basic results for the more general setting in which text is produced by a large population of agents with different context windows and publication temperatures, extending the single-agent framework of Sections~1--5.
Section~7 provides an overview table of the experimental matrix.
Each section includes pointers to the repository notebooks and scripts where the reader can reproduce figures and tables.
The full repository is at \url{\repoURL}.

\tableofcontents

\newpage
\section{The $n$-gram model, drift, and vocabulary loss under recursion}\label{sec:ngram-basics}

\subsection{Orientation}

This section introduces the basic setup: a finite text corpus, a variable-order $n$-gram model fitted to it, and the recursive loop in which the model generates new text that becomes the corpus for the next generation.
The goal in this section is to see, in the simplest possible setting, how recursive resampling erodes rare structure, and to develop the distributional theory that yields Theorem~1 (drift).
This section covers the following:
\begin{itemize}[leftmargin=*]
\item the definition of a variable-order $n$-gram model with back-off (falling back to shorter contexts when the full context is unseen);
\item the fit--generate--refit recursion from a fixed starting corpus (the \emph{urtext}---the original human-written text from which the recursion begins);
\item vocabulary contraction across generations, measured experimentally;
\item the mechanism by which rare words and rare phrases are the first casualties; and
\item the mixed-environment unigram recursion, Theorem~1 (drift) in full, and why the replacement fraction $\alpha$ controls the speed but not the destination.
\end{itemize}

\subsection{The token set and corpus}

\begin{definition}[Token set and text]
Let $\Sigma$ be a finite set of tokens, called the \emph{token set} (or \emph{vocabulary}).
In a word-level model, each element of $\Sigma$ is a word; in a character-level model, each element is a character.
A \emph{text} (or \emph{corpus}) of length $M$ is a sequence
\[
T = x_1 x_2 \cdots x_M \in \Sigma^M.
\]
\end{definition}

\begin{example}[Conan Doyle]
Take $\Sigma$ to be the set of distinct words appearing in the combined public-domain fiction of Arthur Conan Doyle (Project Gutenberg editions).
Tokenisation is word-level: punctuation, page numbers and Gutenberg boilerplate are stripped; hyphenated words (e.g.\ ``well-known'') and contractions (e.g.\ ``don't'') are kept as single tokens; all tokens are lowercased.
The corpus $T$ is the concatenation of those texts, with $M \approx 4{,}013{,}000$ tokens and $|\Sigma| \approx 55{,}000$ distinct words.
We use Doyle throughout this section as the primary running example, with Jane Austen ($M \approx 601{,}000$ tokens, $|\Sigma| \approx 14{,}000$) and Charles Darwin ($M \approx 356{,}000$ tokens, $|\Sigma| \approx 15{,}600$) as supplementary corpora.
\end{example}

\subsection{The variable-order $n$-gram model}

The idea is simple: to predict the next word, look at the preceding $n-1$ words and use the empirical frequencies from the corpus.

\begin{definition}[Context and $n$-gram counts]\label{def:ngram}
Fix a maximum order $n \ge 1$.
For every context string $s \in \Sigma^m$ with $0 \le m \le n-1$ and every token $a \in \Sigma$, let
\[
N_T(s, a)
\]
be the number of times $a$ follows $s$ in the corpus $T$.
The \emph{empirical conditional probability} is
\[
\widehat{p}_T(a \mid s) = \frac{N_T(s,a)}{\sum_{b \in \Sigma} N_T(s,b)},
\]
whenever the denominator is nonzero.
\end{definition}

\begin{definition}[Back-off]
If the full-order context (the preceding $n-1$ tokens) has never been seen in the corpus, the model \emph{backs off} to a shorter context: it drops the oldest token and tries the $(n-2)$-gram context, then the $(n-3)$-gram context, and so on, down to the unigram (empty context) distribution.
This is the variable-order $n$-gram generator used throughout.
\end{definition}

\begin{example}[A trigram model, $n=3$]
With $n=3$, the model looks at the two preceding words (a \emph{bigram context}).
To generate the next word after ``\texttt{Sherlock Holmes}'', it looks up all words that ever followed ``\texttt{Sherlock Holmes}'' in the corpus and samples in proportion to their counts.
If ``\texttt{Sherlock Holmes}'' never appeared, it backs off to the unigram context of ``\texttt{Holmes}'' alone; if even that is absent, it falls back to the global word-frequency distribution.
We use trigrams ($n=3$) in most experiments because they are the shortest context length that already exhibits interesting phrase-level structure while remaining simple enough for exact computation.
\end{example}

\paragraph{Why $n$-grams?}
We do not claim that $n$-gram models are realistic surrogates for modern transformers.
The point is that they are the simplest next-token predictors where the entire conditional distribution is exactly computable and easy to inspect.
The relationship is analogous to that between tabular Q-learning and deep reinforcement learning: tabular Q-values are the idealised, exactly solvable objects, and neural networks serve as function-approximation proxies whose convergence guarantees ultimately rest on the tabular theory.
Here, $n$-grams play the tabular role---they are next-token predictors for which the ecosystem dynamics can be written in closed form---and the theorems proved for them identify forces (drift, selection, inheritance) that are structural properties of any next-token prediction loop, not artefacts of a particular architecture.
If these forces already produce non-trivial dynamics in $n$-grams, then they live at the level of distributions and of which traces enter the environment, not at the level of transformer internals.

\subsection{The urtext and the recursive loop}\label{sec:tut1-urtext}

\begin{definition}[Urtext]
The starting corpus $U_0 \in \Sigma^M$ is called the \emph{urtext}.
It can be a fixed human-written text or a stochastic randomly generated corpus; it is the initial sequence from which the recursion begins.
Once $U_0$ and the model order $n$ are fixed, the law of the stochastic recursion is fully determined by one additional parameter: the replacement fraction $\alpha \in [0,1]$.
\end{definition}

\begin{definition}[The fit--generate--refit loop]\label{def:loop}
At each generation $t = 0, 1, 2, \ldots$:
\begin{enumerate}[label=(\roman*)]
\item \textbf{Fit.} Compute the empirical $n$-gram kernel $\widehat{p}_t$ from the current corpus $U_t$.
\item \textbf{Keep.} Retain a fraction $(1-\alpha)$ of $U_t$ unchanged.
\item \textbf{Generate.} Sample a fresh block of $\alpha M$ tokens from the fitted model $\widehat{p}_t$.
\item \textbf{Concatenate.} The kept block and the generated block together form $U_{t+1}$.
\end{enumerate}
\end{definition}

The parameter $\alpha$ controls the speed of change.
When $\alpha = 1$, the entire corpus is replaced each generation (the most aggressive setting).
When $\alpha = 0.25$, three-quarters of the corpus is carried over unchanged, and only one quarter is regenerated.
In both cases, the regenerated text is sampled from the model fitted to the \emph{current} corpus, so the loop is self-referential: models learn from text that earlier models helped produce.

\begin{remark}[The population-genetics analogy]
The recursion has a direct analogy in population genetics.
Words play the role of alleles, the corpus is the gene pool, and retraining on a finite regenerated sample creates the next generation.
With $\alpha = 1$ and at the unigram level, this is \emph{exactly} the Wright--Fisher model from population genetics---the foundation of neutral drift theory.
We develop this connection rigorously below.
\end{remark}

\subsection{Why rare words disappear first}

Consider a word that appears with frequency $p$ in the current corpus.
If the regenerated block contains $M$ tokens drawn independently, each with probability $p$ of being that word, then the probability that the word is \emph{completely absent} from the regenerated block is
\begin{equation}\label{eq:appx-dropout}
\Pr(\text{word absent from regenerated block}) = (1 - p)^M.
\end{equation}
For a common word with $p = 0.01$ and $M = 100{,}000$, this is essentially zero.
For a rare word with $p = 1/M$, this is $(1 - 1/M)^M \approx 1/e \approx 0.37$.
So a word that appears just once in a corpus of $100{,}000$ tokens has roughly a $37\%$ chance of vanishing in a single generation---and in an unsmoothed model (one that assigns \emph{zero} probability to words it has never seen), once a word vanishes it can never return.

This is the basic mechanism of drift: finite sampling from a distribution in which rare events have small probability.

\subsection{Experiment: vocabulary contraction in Doyle, Austen, and Darwin}

The repository contains a notebook that runs the recursive loop on three cleaned literary corpora: Arthur Conan Doyle, Jane Austen, and Charles Darwin.
Each corpus is modelled with an unsmoothed word-level trigram model ($n=3$) and the mixed-replacement update of Definition~\ref{def:loop} (\texttt{fixed\_size\_mix} in the code).
Three replacement fractions are tested: $\alpha \in \{0.25, 0.5, 1.0\}$, over $12$ generations.

\paragraph{What to look for.}
Two primary metrics are tracked at each generation:
\begin{itemize}[leftmargin=*]
\item \textbf{Vocabulary retention}: the fraction of the original word types (generation~0) that still have positive count at generation~$t$.
\item \textbf{Trigram-type retention}: the fraction of distinct trigram types from generation~0 that survive at generation~$t$.
\end{itemize}
An auxiliary metric tracks the \emph{active} vocabulary---word types whose frequency exceeds the baseline threshold $1/M_0$.

\paragraph{What the figures show.}
Figure~\ref{fig:appx-theorem1-vocab-retention} shows that vocabulary retention declines monotonically, with stronger replacement fractions producing steeper declines.
At full replacement ($\alpha = 1$) after $12$ generations, roughly half the vocabulary is lost across all three corpora: retention is $0.473$ for Doyle, $0.449$ for Austen, and $0.408$ for Darwin.

\begin{figure}[t]
\centering
\includegraphics[width=0.98\linewidth]{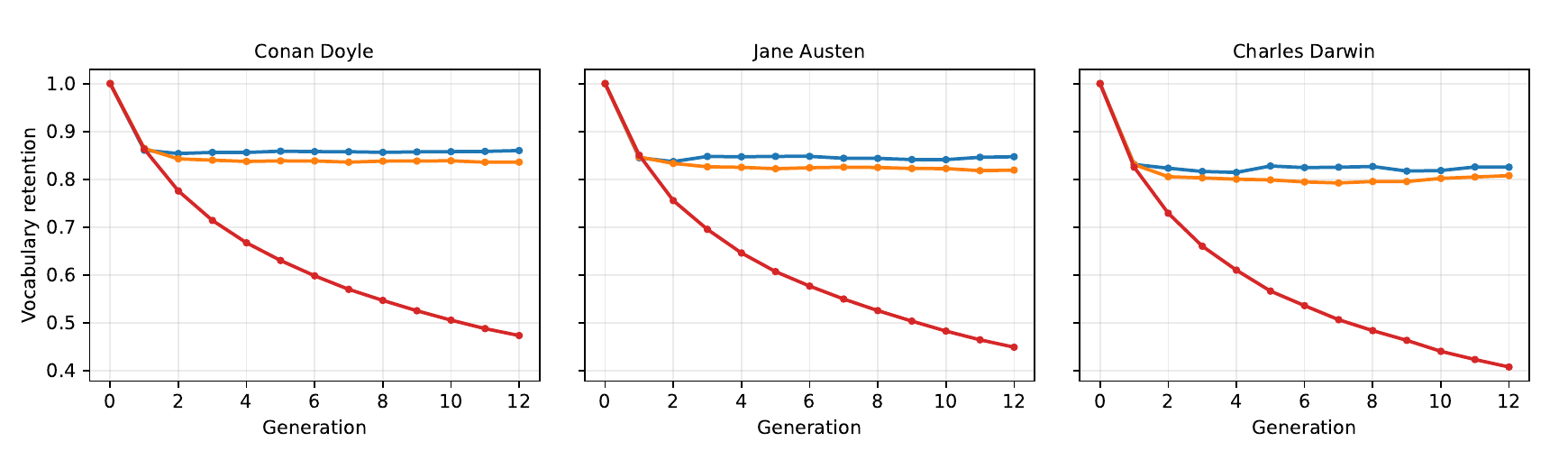}
\caption{\textbf{Vocabulary retention declines under recursive resampling.}
Across Doyle, Austen, and Darwin, stronger replacement fractions produce stronger contraction.
Full replacement ($\alpha=1$) yields the steepest decline by generation~12.}
\label{fig:appx-theorem1-vocab-retention}
\end{figure}

Trigram-type retention falls even faster (Figure~\ref{fig:appx-theorem1-trigram-retention}): down to $0.211$, $0.169$, and $0.165$ for Doyle, Austen, and Darwin respectively at $\alpha = 1$.
This makes sense: a trigram can only survive if all three of its component words survive \emph{and} the specific three-word sequence is regenerated.
Higher-order structure is more fragile than lower-order structure, because there are more ``points of failure.''

\begin{figure}[t]
\centering
\includegraphics[width=0.98\linewidth]{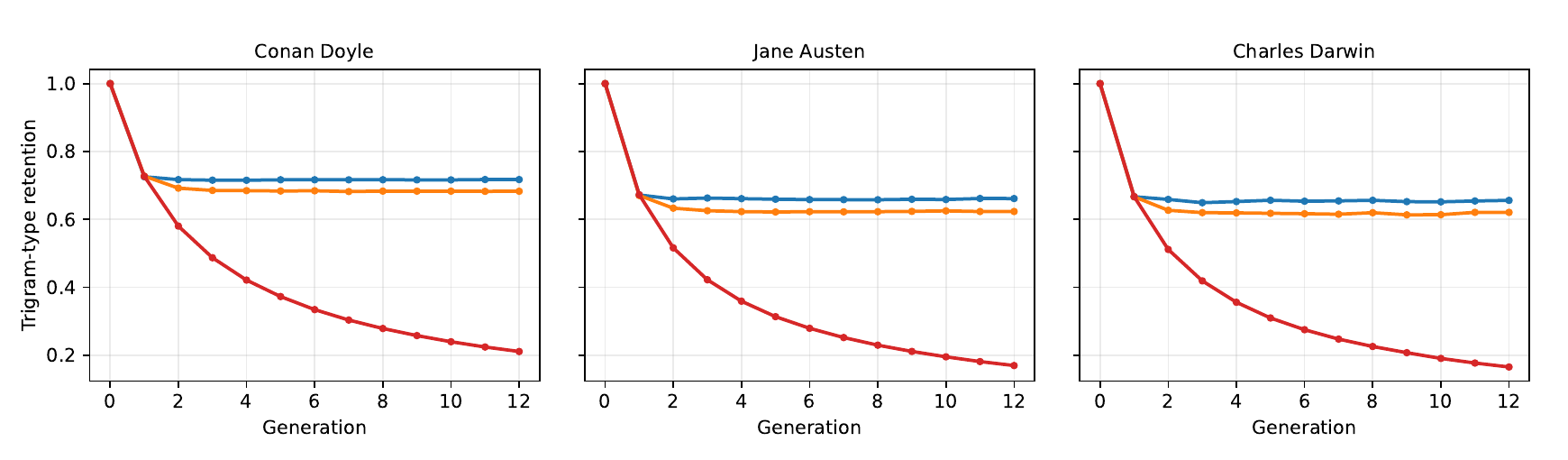}
\caption{\textbf{High-order support contracts faster than vocabulary.}
Trigram-type retention falls more sharply than vocabulary retention, showing the fragility of higher-order continuation structure.}
\label{fig:appx-theorem1-trigram-retention}
\end{figure}

\begin{figure}[t]
\centering
\includegraphics[width=0.98\linewidth]{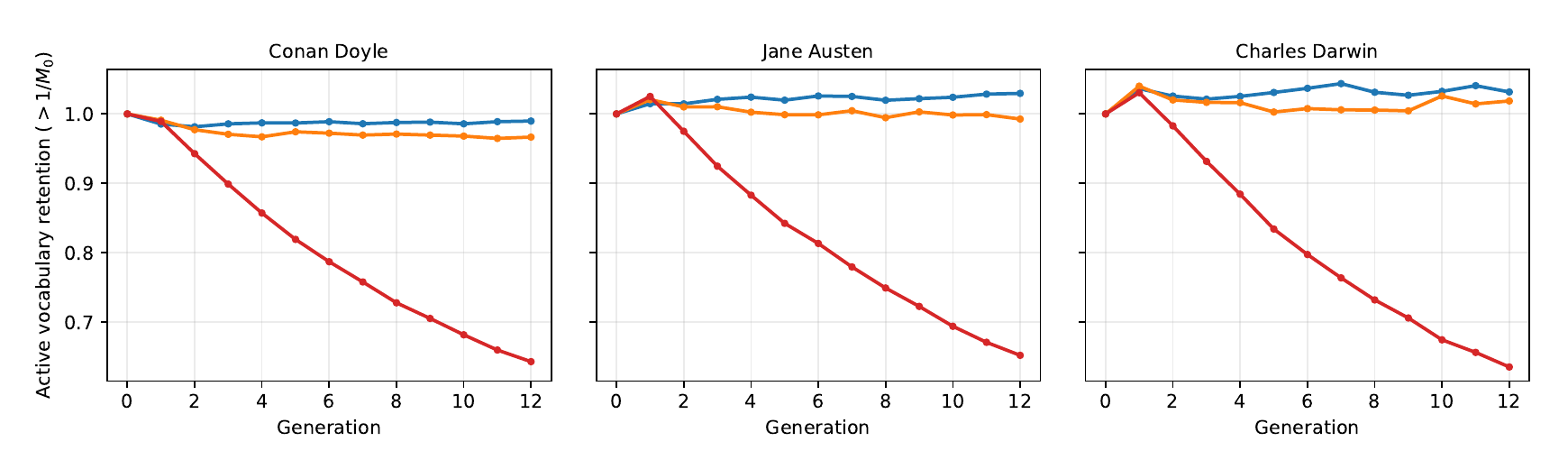}
\caption{\textbf{Active-vocabulary retention above the baseline threshold $1/M_0$.}
Even thresholded active vocabulary shows the same monotone concentration pattern as $\alpha$ increases.}
\label{fig:appx-theorem1-active-vocab}
\end{figure}

\begin{table}[t]
\centering
\caption{Theorem~1 negative endpoints under fixed-size recursive trigram resampling (generation 12).}
\label{tab:appx-theorem1-negative-endpoints}
\begin{tabular}{llccc}
\toprule
Corpus & $\alpha$ & Generation & Vocabulary retention & Trigram-type retention \\
\midrule
Charles Darwin & 0.25 & 12 & 0.825 & 0.656 \\
Charles Darwin & 0.50 & 12 & 0.808 & 0.621 \\
Charles Darwin & 1.00 & 12 & 0.408 & 0.165 \\
Conan Doyle & 0.25 & 12 & 0.860 & 0.717 \\
Conan Doyle & 0.50 & 12 & 0.836 & 0.683 \\
Conan Doyle & 1.00 & 12 & 0.473 & 0.211 \\
Jane Austen & 0.25 & 12 & 0.847 & 0.661 \\
Jane Austen & 0.50 & 12 & 0.819 & 0.623 \\
Jane Austen & 1.00 & 12 & 0.449 & 0.169 \\
\bottomrule
\end{tabular}
\end{table}

\paragraph{The positive side of drift.}
The same concentration pressure that deletes rare literary vocabulary can also suppress low-support orthographic variation.
In an Austen orthographic-variant pilot, low-support spelling variants are injected into a clean corpus and then subjected to the same recursive resampling.
The mechanism that removes rare forms also reduces the variant count while increasing the canonical form's share---but it simultaneously erodes rare clean vocabulary.
This is why Theorem~1 is best read as a theorem of \emph{concentration} rather than a theorem of simple deterioration.

\begin{figure}[t]
\centering
\includegraphics[width=0.98\linewidth]{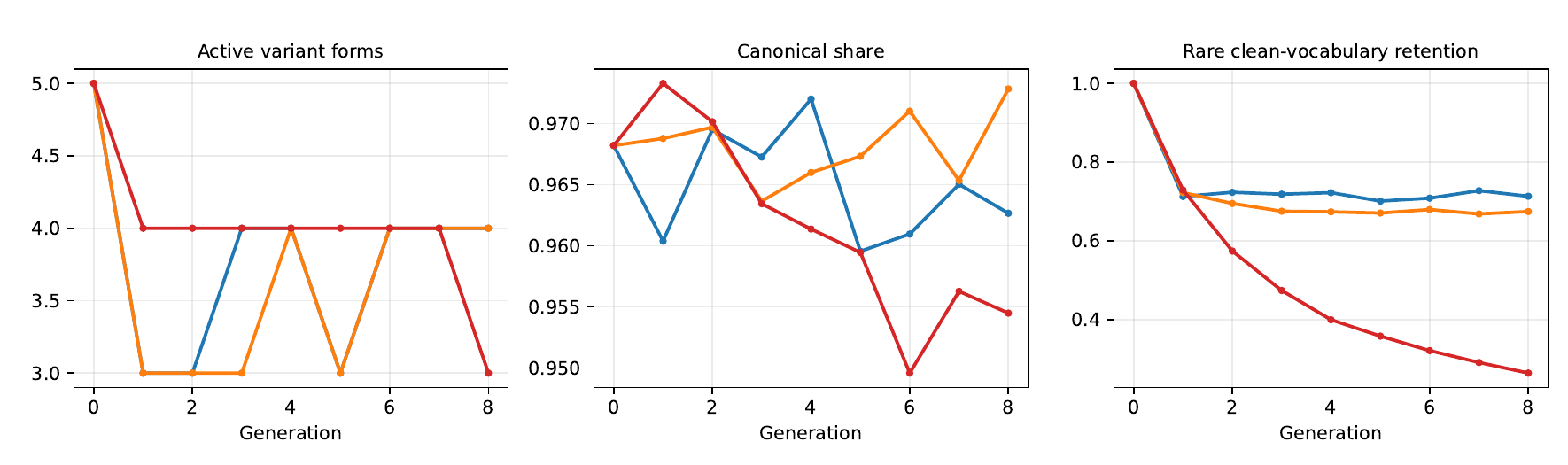}
\caption{\textbf{The positive side: orthographic standardisation in an Austen pilot (0.5\% injection rate).}
Active variant forms decline after generation~1; canonical share rises; but rare clean-vocabulary retention also falls.}
\label{fig:appx-theorem1-orthographic-panels}
\end{figure}

\subsection{Rare contexts and the fragility of higher-order support}

The vocabulary-retention plots show the overall effect, but it is worth understanding the mechanism more precisely.
The key insight is that what disappears first is often not the word itself but the \emph{highest-order support} that allows a specific phrase to be regenerated.

\begin{proposition}[Rare-pattern loss under finite sampling]\label{prop:tut1-extinction}
Let $g$ be a $k$-gram with $1 \le k \le n$ and long-run frequency $\pi(g) > 0$ under the current model.
Generate a synthetic corpus consisting of one or more texts with a combined length of $M$ tokens, and fit an unsmoothed maximum-likelihood $n$-gram model.
Then:
\begin{enumerate}[label=(\alph*)]
\item If $k=1$ and the corpus consists of $M$ i.i.d.\ unigram draws, the exact dropout probability is
\[
\Pr(g\text{ absent}) = (1 - \pi(g))^M.
\]
\item For general $k \le n$, if $N_g$ is the count of $g$ in the synthetic corpus, then
\[
\E[N_g] \le (M - k + 1)\pi(g),
\qquad
\Pr(N_g = 0) \ge \max\{0,\; 1 - (M-k+1)\pi(g)\}.
\]
The expectation is exact when the corpus is a single sequence; when it consists of $J$ separate texts, $k$-grams cannot span text boundaries, so the effective number of positions is at most $M - J(k-1)$.
\item If $g = (s, a)$ is a full-order pattern with $|s| = n-1$ and $N_g = 0$, then the retrained model satisfies $\widehat{p}(a \mid s) = 0$.
In an unsmoothed model without back-off at that context, this loss is absorbing: the token $a$ can never again follow context $s$.
\end{enumerate}
\end{proposition}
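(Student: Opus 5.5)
The three parts are essentially independent, and I would take them in order, using only indicator-variable counting and the definition of the unsmoothed maximum-likelihood kernel (Definition~\ref{def:ngram}).

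For part (a), I write the synthetic corpus as $M$ i.i.d.\ draws $X_1,\dots,X_M$ from the unigram law, each equal to $g$ with probability $\pi(g)$. The event ``$g$ absent'' is $\bigcap_i\{X_i\neq g\}$. By independence its probability is $\prod_i(1-\pi(g))=(1-\pi(g))^M$, which is exactly \eqref{eq:appx-dropout}. Equivalently, $N_g\sim\mathrm{Bin}(M,\pi(g))$ and we evaluate the mass at zero, matching the binomial description in Theorem~1(a).

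For part (b), I write $N_g=\sum_{i\in I}\1\{(X_i,\dots,X_{i+k-1})=g\}$, where $I$ is the set of admissible starting positions. For a single sequence $|I|=M-k+1$. For $J$ texts of lengths $M_1,\dots,M_J$, only windows lying inside one text count, so $|I|=\sum_j\max(M_j-k+1,0)\le M-J(k-1)$. Linearity of expectation gives $\E[N_g]=\sum_{i\in I}\Pr(\text{window}_i=g)$.

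The step I expect to be delicate is bounding each window probability by $\pi(g)$. If each text is started from the stationary law of the generating chain, every window has marginal exactly $\pi(g)$, which gives equality in the single-sequence case. For a general start, I would interpret $\pi(g)$ as the supremum over positions of the window marginal, or else assume stationary initialisation; I would state this assumption explicitly rather than try to prove a bound that can fail for an adversarial start. Given the expectation bound, the dropout bound follows from Markov's inequality, $\Pr(N_g\ge1)\le\E[N_g]$, so $\Pr(N_g=0)\ge 1-(M-k+1)\pi(g)$, and the $\max\{0,\cdot\}$ is trivial. In the multi-text case the same argument applies with $|I|\le M-k+1$, so the stated inequalities remain valid.

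For part (c), $N_g=0$ means $N_T(s,a)=0$. If the context $s$ occurs at all, the denominator $\sum_b N_T(s,b)$ is positive, so $\widehat p(a\mid s)=0$ by definition. If $s$ does not occur, the model backs off, but under the stated hypothesis of no back-off at $s$ we need only the former case. For absorption I argue by induction on generations. Suppose $\widehat p_t(a\mid s)=0$. Every full-context emission after $s$ in generation $t$ uses $\widehat p_t(\cdot\mid s)$ and so never emits $a$. Hence the pair $(s,a)$ has count zero in the new corpus, and $\widehat p_{t+1}(a\mid s)=0$ again. When $\alpha<1$ I must also check the retained block. Because $(s,a)$ was absent from all of $U_t$, the retained portion contains no occurrence either. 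The one remaining gap is a possible new occurrence of $s$ produced by back-off followed by $a$. I would close it using the hypothesis that the model does not back off at $s$, and flag this as the precise condition the absorbing claim needs.
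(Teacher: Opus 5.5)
Your proposal is correct and follows the paper's route: independence for (a), linearity of expectation plus the first-moment (Markov) bound for (b), and the definition of the unsmoothed maximum-likelihood estimate for (c), while also making explicit the stationary-start and no-back-off assumptions that the paper's one-line proof leaves implicit. One small slip: clipping at zero can only increase a sum, so $\sum_j\max(M_j-k+1,0)\ge M-J(k-1)$, with equality only when every text has at least $k-1$ tokens (for example, $k=3$ with text lengths $1$ and $10$ gives $8>7$ windows); this is harmless, because your main bounds use only $|I|\le M-k+1$, which does hold whenever $M\ge k-1$.
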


\begin{proof}
Part~(a) is immediate from independence.
Part~(b) follows from the first-moment bound (Markov's inequality): $\E[N_g] \ge \Pr(N_g \ge 1)$.
Part~(c) is the definition of unsmoothed maximum likelihood.
\end{proof}

Back-off complicates permanent loss: a missing higher-order pattern can still be generated through a shorter context.
But back-off turns \emph{distinctive} prose into more \emph{generic} prose, because the shorter context is shared with many other continuations.
Common forms therefore enjoy a compounding advantage over rare forms across generations.

\subsection{The $n$-gram model as a Markov chain on contexts}

Before stating the drift theorem, it is useful to see that any fixed-order $n$-gram model defines a Markov chain on contexts.
This viewpoint will be essential in the next section for the selection theory.

\begin{proposition}[Context Markov chain]\label{prop:tut2-markov}
Fix $n \ge 2$ and a conditional distribution $p(a \mid c)$ giving the probability of emitting token $a$ after context $c$, where $\mathcal{C} = \Sigma^{n-1}$ is the set of all contexts of length $n-1$.
Let $C_t$ be the current context and $A_{t+1}$ the next emitted token.
Write $\sigma(c, a)$ for the \emph{shift map}: the context obtained by appending token $a$ to context $c$ and dropping the oldest token.
The successor context is $C_{t+1} = \sigma(C_t, A_{t+1})$.
Then $\{C_t\}_{t \ge 0}$ is a time-homogeneous Markov chain on $\mathcal{C}$ with transition matrix
\[
K_p(c, c') = \sum_{a:\, \sigma(c,a) = c'} p(a \mid c).
\]
If $K_p$ is irreducible and aperiodic (meaning every context can eventually be reached from any other, and the chain does not cycle with a fixed period), it has a unique stationary distribution $\pi_p$, and the empirical context frequencies along a single trajectory converge almost surely to $\pi_p$.
\end{proposition}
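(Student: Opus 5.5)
The plan has two parts. First, establish the Markov property and the form of $K_p$ directly from the generative description. Then treat the stationary distribution and the almost-sure convergence of empirical frequencies as consequences of standard finite-state Markov chain theory.

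\textbf{Markov property and the kernel.} Condition on the whole past $C_0,\dots,C_t$ (equivalently, on $C_0$ and the emitted tokens $A_1,\dots,A_t$). By the definition of the $n$-gram generator, the next token $A_{t+1}$ is drawn from $p(\cdot\mid C_t)$, independently of everything earlier given $C_t$. Since $C_{t+1}=\sigma(C_t,A_{t+1})$ is a deterministic function of $(C_t,A_{t+1})$, we get
\[
\Pr(C_{t+1}=c'\mid C_0,\dots,C_t)=\sum_{a:\,\sigma(C_t,a)=c'} p(a\mid C_t)=K_p(C_t,c').
\]
This depends only on $C_t$ and not on $t$, which gives time-homogeneity. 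For each $c$, the sum over $c'$ of $K_p(c,c')$ is $\sum_a p(a\mid c)=1$, so $K_p$ is a stochastic matrix. It is also worth noting that $\sigma(c,\cdot)$ is injective for $n\ge2$: distinct appended tokens give distinct last symbols. Hence each sum has at most one term, and $K_p$ is supported on the edges of the de~Bruijn graph $B(n{-}1,s)$.

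\textbf{Stationarity and ergodicity.} Since $\mathcal C$ is finite, an irreducible chain has a unique stationary distribution $\pi_p$ with $\pi_p(c)=1/\E_c[\tau_c^+]>0$, where $\tau_c^+$ is the first return time to $c$. This follows by Perron--Frobenius, or by the standard renewal construction. Aperiodicity is not needed for uniqueness or for the ergodic theorem; it only adds convergence in law of $C_t$ itself, which I will mention in passing. For the almost-sure statement I will invoke the ergodic theorem for positive-recurrent chains, proved as follows.
\begin{enumerate}[label=(\roman*)]
\item Split the trajectory into excursions between successive visits to a fixed state $c_0$.
\item By the strong Markov property, these excursions are i.i.d.
\item Apply the strong law of large numbers to the excursion lengths and to the number of visits to $c$ within each excursion.
\end{enumerate}
This gives $\tfrac1T\#\{t<T: C_t=c\}\to \E_{c_0}[\#\text{visits to } c \text{ per excursion}]/\E_{c_0}[\tau^+_{c_0}]=\pi_p(c)$ almost surely, for every starting context.

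\textbf{Main obstacle.} The only nontrivial step is the ergodic theorem. It holds regardless of the initial context because a finite irreducible chain hits $c_0$ in finite time almost surely, and the initial segment before that hit contributes $o(T)$ to the counts. I will cite this as a standard result rather than reprove it, and check only that the hypotheses hold: finite state space together with irreducibility gives positive recurrence.
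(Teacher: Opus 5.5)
Your proposal is correct and follows the same route as the paper, which derives the Markov property directly from $A_{t+1}\sim p(\cdot\mid C_t)$ and $C_{t+1}=\sigma(C_t,A_{t+1})$, and then cites the ergodic theorem for finite Markov chains (Norris) for the unique stationary distribution $\pi_p$ and the almost-sure convergence of empirical frequencies. Your additions are accurate refinements rather than a different argument: the injectivity of $\sigma(c,\cdot)$, the excursion sketch, and the remark that irreducibility alone suffices (aperiodicity only adds convergence in law of $C_t$).
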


\begin{proof}
Given $C_t = c$, the next token is drawn from $p(\cdot \mid c)$ and the next context is determined by the shift map $\sigma$.
This is exactly the Markov property.
Convergence follows from the ergodic theorem for finite Markov chains~\cite{appx:Norris1997}.
\end{proof}

\paragraph{What this means.}
Once an $n$-gram kernel has been fitted, the text dynamics are those of a finite Markov chain on contexts.
The urtext matters only through the fitted kernel and the initial context distribution.
This is the key simplification that makes the selection theory in Section~\ref{sec:selection-theory} tractable.

\subsection{The mixed-environment unigram recursion and Theorem 1}

We now set up the formal model for Theorem~1 on drift.
The key object is the number of tokens belonging to a designated ``minority'' vocabulary class.

\begin{definition}[Minority count chain]\label{def:count-chain}
Fix a corpus of $M$ token positions and a designated minority subset $R \subset \Sigma$ (a single rare word, a dialect cluster, or any vocabulary subset of interest).
Let $n(t) \in \{0, 1, \ldots, M\}$ be the number of positions occupied by tokens from $R$ at generation $t$, and write $\mu_t = n(t)/M$ for the minority frequency.
\end{definition}

\begin{definition}[One-generation transition]\label{def:one-gen}
At each generation, the corpus is updated in two steps:
\begin{enumerate}[label=\textbf{Step~\arabic*},leftmargin=*]
\item \textbf{Retention.}
Of the $M$ positions, exactly $(1-\alpha)M$ are chosen uniformly at random (without replacement) to be \emph{kept} unchanged.
The remaining $\alpha M$ positions are marked for replacement.

\item \textbf{Resampling.}
Each of the $\alpha M$ replacement positions is filled independently by drawing a token from the \emph{current} empirical distribution: minority with probability $\mu_t = n(t)/M$, majority with probability $1 - \mu_t$.
Once a minority token is lost from the entire corpus, it cannot reappear.
\end{enumerate}
\end{definition}

Conditional on $n(t) = n$, the new count decomposes as
\begin{equation}\label{eq:decomp}
n(t+1) = J + K,
\end{equation}
where $J$ and $K$ are conditionally independent given $n(t)$:
\begin{itemize}[leftmargin=*]
\item $J$ is the number of minority tokens among the $(1-\alpha)M$ kept positions. Since these are drawn without replacement from a corpus of $M$ tokens of which $n$ are minority, $J$ follows a hypergeometric distribution:
\[
\Pr(J = j) = \frac{\binom{n}{j}\binom{M-n}{(1-\alpha)M - j}}{\binom{M}{(1-\alpha)M}}, \qquad \E[J] = (1-\alpha)\,n.
\]
\item $K$ is the number of minority tokens among the $\alpha M$ newly drawn positions. Each is drawn independently with probability $n/M$ of being minority, so $K$ follows a binomial distribution:
\[
\Pr(K = k) = \binom{\alpha M}{k}\Bigl(\frac{n}{M}\Bigr)^k \Bigl(1 - \frac{n}{M}\Bigr)^{\alpha M - k}, \qquad \E[K] = \alpha\, n.
\]
\end{itemize}
The boundary states $n = 0$ and $n = M$ are absorbing: once the minority has been completely lost (or has taken over the entire corpus), it stays that way.

\subsection{Theorem 1 (drift in the mixed environment)}

\begin{theorem}[Drift in the mixed environment]\label{thm:drift}
Consider the count chain $\{n(t)\}_{t \ge 0}$ on $\{0, 1, \ldots, M\}$ defined above, and write $\mu_t = n(t)/M$.
Then:

\begin{enumerate}[label=(\alph*)]
\item\label{item:martingale} \emph{(Unbiased expected value.)}
$\E[n(t+1) \mid n(t) = n] = n$.
Equivalently, $\E[\mu_{t+1} \mid \mu_t] = \mu_t$.
The expected minority frequency next generation equals its current value---there is no systematic upward or downward trend.
(In probability theory, a process with this property is called a \emph{martingale}.)
Despite the absence of any trend, random fluctuations accumulate over time and eventually drive the count to one of the absorbing boundaries.

\item\label{item:variance} \emph{(Variance.)}
In the full discrete model:
\[
\mathrm{Var}(n(t+1) \mid n(t) = n)
=
\underbrace{(1-\alpha)\alpha\,\frac{Mn(M-n)}{M(M-1)}}_{\text{hypergeometric (retention)}}
+
\underbrace{\alpha\,n\bigl(1 - n/M\bigr)}_{\text{binomial (resampling)}}.
\]
For large $M$ with $\mu = n/M$ fixed, this simplifies to $\approx \alpha(2-\alpha)\,n(1-\mu)$.

\item\label{item:extinction} \emph{(Extinction probability.)}
Let $\tau = \inf\{t \ge 0 : n(t) \in \{0, M\}\}$ be the first generation at which the minority has either completely vanished or taken over the entire corpus.
Then $\tau < \infty$ almost surely, and
\[
\Pr\bigl(n(\tau) = 0 \mid n(0) = k\bigr) = 1 - \frac{k}{M}.
\]
This is \textbf{independent of $\alpha$}: the replacement fraction does not affect whether a rare token eventually goes extinct, only how quickly.

\item\label{item:Neff} \emph{(Effective population size.)}
In the diffusion scaling (where time is measured in units of $N_e$ generations), the effective population size is
\[
N_e = \frac{M}{2\alpha(2-\alpha)}.
\]
When $\alpha = 1$, this gives $N_e = M/2$, the classical Wright--Fisher value (the Wright--Fisher model being the foundational model of neutral genetic drift in population genetics~\cite{appx:Ewens2004}).

\item\label{item:WF} \emph{(Wright--Fisher identification.)}
When $\alpha = 1$, every position is replaced and $J = 0$ a.s., so $n(t+1) = K \sim \mathrm{Bin}(M, n(t)/M)$.
This is exactly the classical Wright--Fisher model.

\item\label{item:dropout} \emph{(Single-step dropout.)}
For a single minority token ($n = 1$):
\[
\Pr\bigl(n(t+1) = 0 \mid n(t) = 1\bigr) = \alpha \cdot \bigl(1 - 1/M\bigr)^{\alpha M} \approx \alpha\, e^{-\alpha}.
\]
\end{enumerate}
\end{theorem}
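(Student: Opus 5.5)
The plan is to work throughout with the decomposition \eqref{eq:decomp}, $n(t+1)=J+K$, in which $J$ and $K$ are conditionally independent given $n(t)=n$. Every part reduces to a moment computation for the hypergeometric variable $J$ and the binomial variable $K$, plus one standard martingale argument. I assume throughout that $\alpha M$ is an integer, and, for part~\ref{item:extinction}, that $\alpha M\ge 1$: when $\alpha=0$ the chain is frozen and the absorption claim is false.

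For part~\ref{item:martingale} I will add the two means, $\E[J]=(1-\alpha)n$ and $\E[K]=\alpha n$, which gives $\E[n(t+1)\mid n(t)=n]=n$. Dividing by $M$ gives the statement for $\mu_t$.

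For part~\ref{item:variance} I will use conditional independence to add the two variances. The hypergeometric variance for a sample of size $(1-\alpha)M$ from $M$ items, $n$ of them minority, is
\[
(1-\alpha)M\cdot\frac{n}{M}\Bigl(1-\frac{n}{M}\Bigr)\cdot\frac{\alpha M}{M-1}=(1-\alpha)\alpha\,\frac{n(M-n)}{M-1},
\]
which agrees with the stated first term. The binomial variance is $\alpha M\cdot\frac{n}{M}(1-\frac{n}{M})$. Letting $M\to\infty$ with $\mu$ fixed, $(M-n)/(M-1)\to 1-\mu$, and the sum becomes $\alpha(1-\alpha)n(1-\mu)+\alpha n(1-\mu)=\alpha(2-\alpha)n(1-\mu)$.

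Parts~\ref{item:WF} and~\ref{item:dropout} are direct.
\begin{itemize}
\item For part~\ref{item:WF}: with $\alpha=1$ no position is kept, so $J\equiv0$ and $n(t+1)=K\sim\mathrm{Bin}(M,n/M)$, which is the Wright--Fisher kernel.
\item For part~\ref{item:dropout}: when $n=1$, extinction in one step requires two things. The single minority position must fall in the replaced set, which has probability $\alpha$. All $\alpha M$ fresh draws must then be majority, which has probability $(1-1/M)^{\alpha M}$. These events are independent. The approximation $(1-1/M)^{\alpha M}\approx e^{-\alpha}$ gives $\alpha e^{-\alpha}$.
\end{itemize}

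For part~\ref{item:Neff} I will treat $N_e$ as defined by the diffusion matching $\mathrm{Var}(\mu_{t+1}\mid\mu_t)\approx \mu_t(1-\mu_t)/(2N_e)$. This is the convention under which $\alpha=1$ returns the classical value $M/2$. Part~\ref{item:variance} gives $\mathrm{Var}(\mu_{t+1}\mid\mu_t)\approx\alpha(2-\alpha)\mu(1-\mu)/M$. Solving the matching equation yields $N_e=M/(2\alpha(2-\alpha))$. Because the drift term is zero by part~\ref{item:martingale}, rescaling time by $N_e$ gives the neutral Wright--Fisher diffusion.

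The main obstacle is part~\ref{item:extinction}, specifically proving $\tau<\infty$ almost surely. I will first show that from every interior state $n\in\{1,\dots,M-1\}$ there is a positive probability, bounded below uniformly over the finitely many states, that $n(t+1)\le n-1$. This happens when at least one minority position is chosen for replacement, which is possible since $\alpha M\ge1$, and all fresh draws are majority, which has probability $(1-n/M)^{\alpha M}>0$. Chaining at most $M$ such steps shows the boundary is reached within $M$ generations with probability at least some $\delta>0$ from any start. A geometric-trials argument then gives $\Pr(\tau>kM)\le(1-\delta)^k$, so $\tau$ is finite almost surely and indeed integrable.

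With $\tau$ finite, $n(t\wedge\tau)$ is a martingale bounded in $[0,M]$, so optional stopping applies. It gives $k=\E[n(\tau)]=M\Pr(n(\tau)=M)$. Hence $\Pr(n(\tau)=0\mid n(0)=k)=1-k/M$. This value does not involve $\alpha$, which accounts for the independence claim; $\alpha$ enters only through the speed of absorption, as quantified by $N_e$.
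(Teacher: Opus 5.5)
Your proposal is correct. For parts (a), (b), (d), (e) and (f) it coincides with the paper's proof. Your hypergeometric variance $(1-\alpha)\alpha\,n(M-n)/(M-1)$ is the right one: the paper's proof line writes $n(M-n)/(M(M-1))$, dropping a factor of $M$ relative to its own statement, although its large-$M$ limit agrees with yours.

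The real difference is how you establish $\tau<\infty$ in (c). The paper argues from the variance: because $\mathrm{Var}(n(t+1)\mid n(t)=n)>0$ for $0<n<M$, the count ``cannot remain in the interior forever.'' The cleanest rigorous form of this is a quadratic-variation bound,
$\E[n(t\wedge\tau)^2]-k^2=\E\sum_{s<t\wedge\tau}\mathrm{Var}(n(s+1)\mid n(s))\ge v_{\min}\,\E[t\wedge\tau]$,
where $v_{\min}$ is the smallest interior conditional variance. This gives $\E\tau\le M^2/v_{\min}$.

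Your route instead uses only the Markov structure and the accessibility of the boundary. You take a uniformly positive probability of a downward step, chain it over at most $M$ generations, and finish with geometric trials. This needs the martingale property only at the optional-stopping step. It gives an explicit tail $\Pr(\tau>kM)\le(1-\delta)^k$, but $\delta$ is astronomically small. The variance route gives a polynomial bound on $\E\tau$ at no extra cost.

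Your version also brings out a hypothesis the paper leaves implicit: both arguments need $\alpha M\ge 1$. At $\alpha=0$ the interior variance vanishes and the chain is frozen, so $\tau<\infty$ fails, even though the setup allows $\alpha\in[0,1]$.
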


\begin{proof}
\ref{item:martingale}~\emph{(Unbiased expected value.)}
The expected number of minority tokens retained is $\E[J \mid n] = (1-\alpha)n$ (the mean of the hypergeometric), and the expected number redrawn as minority is $\E[K \mid n] = \alpha M \cdot (n/M) = \alpha n$ (the mean of the binomial).
Adding: $\E[n(t+1) \mid n(t) = n] = (1-\alpha)n + \alpha n = n$.

\ref{item:variance}~\emph{(Variance.)}
Since $J$ and $K$ are conditionally independent,
$\mathrm{Var}(n(t+1) \mid n) = \mathrm{Var}(J \mid n) + \mathrm{Var}(K \mid n)$.
The hypergeometric variance is $\mathrm{Var}(J \mid n) = (1-\alpha)M \cdot \frac{n}{M} \cdot \frac{M-n}{M} \cdot \frac{M - (1-\alpha)M}{M - 1} = (1-\alpha)\alpha\,\frac{n(M-n)}{M(M-1)}$.
The binomial variance is $\mathrm{Var}(K \mid n) = \alpha M \cdot \frac{n}{M} \cdot (1 - \frac{n}{M}) = \alpha\,n(1-n/M)$.
For large $M$ the hypergeometric term simplifies to $(1-\alpha)\alpha\,n(1-\mu)$, so the total is $\approx \alpha(2-\alpha)\,n(1-\mu)$.
The variance does not directly determine the expected waiting time for extinction, but it controls the \emph{rate} at which the random walk spreads: a larger per-step variance means faster movement toward the absorbing boundaries.

\ref{item:extinction}~\emph{(Extinction.)}
By~\ref{item:martingale}, the expected value of $n(t)$ never changes.
The variance in~\ref{item:variance} is strictly positive whenever $0 < n < M$, so the count keeps fluctuating and cannot remain in the interior forever: it must eventually reach $\{0, M\}$, i.e.\ $\tau < \infty$ almost surely.
Because the expected value is preserved at every step, it is also preserved at the stopping time $\tau$ (this is a standard result in probability theory, known as the optional stopping theorem for bounded processes).
So $\E[n(\tau)] = n(0) = k$.
At time $\tau$ the count is either $0$ or $M$, so writing $p = \Pr(n(\tau) = 0 \mid n(0) = k)$ we get $\E[n(\tau)] = 0 \cdot p + M \cdot (1 - p) = M(1-p) = k$, which gives $p = 1 - k/M$.

\ref{item:Neff}~\emph{(Effective population size.)}
By definition, $N_e$ satisfies $\mathrm{Var}(\mu_{t+1} \mid \mu_t) = \mu_t(1-\mu_t)/(2N_e)$.
Dividing the variance of $n(t+1)$ by $M^2$ and comparing gives $N_e = M/[2\alpha(2-\alpha)]$.

\ref{item:WF}~\emph{(Wright--Fisher.)}
When $\alpha = 1$, every position is replaced, so $J = 0$ and $n(t+1) = K \sim \mathrm{Bin}(M, n/M)$, which is the definition of the Wright--Fisher model.

\ref{item:dropout}~\emph{(Single-step dropout.)}
For $n = 1$, the token is lost when it is neither retained nor redrawn.
It fails to be retained with probability $\alpha$ (the chance its single position is among the replaced ones).
Conditional on not being retained, it must also not be redrawn in any of the $\alpha M$ new draws, each of which selects it with probability $1/M$: this happens with probability $(1 - 1/M)^{\alpha M} \approx e^{-\alpha}$.
Combining: $\Pr(n(t+1) = 0 \mid n(t) = 1) = \alpha(1-1/M)^{\alpha M} \approx \alpha e^{-\alpha}$.
\end{proof}

\paragraph{What $\alpha$ controls and what it does not.}
This clean separation is one of the most useful consequences of the theorem:

\begin{center}
\begin{tabular}{@{}lcc@{}}
\toprule
\textbf{Quantity} & \textbf{Depends on $\alpha$?} & \textbf{Formula} \\
\midrule
Extinction probability & No & $1 - k/M$ \\
Fixation probability & No & $k/M$ \\
Martingale property & No & $\E[n' \mid n] = n$ \\
\midrule
One-step dropout ($n=1$) & Yes & $\approx \alpha\, e^{-\alpha}$ \\
Per-step variance & Yes & $\alpha(2-\alpha)\,n(1-\mu)$ \\
Effective population size & Yes & $M/[2\alpha(2-\alpha)]$ \\
\bottomrule
\end{tabular}
\end{center}

In words: every rare token will eventually be lost with probability $1 - k/M$, \emph{no matter what $\alpha$ is}.
Conversely, a selectively neutral token that currently occupies $k$ out of $M$ positions will, through drift alone, eventually take over the entire corpus with probability $k/M$.
This may seem counter-intuitive: pure chance, with no selective advantage, can drive a token to complete dominance.
For a token appearing just once ($k = 1$), the fixation probability is $1/M$---tiny but nonzero---while the extinction probability is $1 - 1/M$, overwhelmingly close to~$1$.
This is the same mathematics that governs neutral alleles in population genetics.

Smaller $\alpha$ slows the random walk (larger $N_e$), but it does not prevent extinction---it only delays it.
Indeed, as Section~\ref{sec:selection-theory} will show, selection pressures can \emph{accelerate} the loss of certain patterns, making extinction even more inevitable than the neutral theory predicts.

\subsection{Section 1 in brief}

An $n$-gram model fitted to a fixed corpus, together with a replacement fraction $\alpha$, defines a deterministic recursive loop.
Finite resampling causes rare words and rare phrases to disappear, with higher-order structure ($k$-grams for larger $k$) being the most fragile.
The same concentration pressure can be beneficial (standardising spelling variants) or harmful (deleting rare literary vocabulary), depending on what is rare and whether it is valuable.
The minority-frequency process is a martingale: its expected value is unchanged, but variance accumulates and drives rare forms to extinction.
The extinction probability $1 - k/M$ is universal---independent of $\alpha$---and only the speed depends on $\alpha$.
The Wright--Fisher identification ($\alpha = 1$) connects the recursion to one of the best-understood stochastic processes in mathematical biology.
The complementary question---what happens in the infinite-corpus limit $M \to \infty$, where sampling noise vanishes and the distributional recursion $\rho_{t+1} = G_n(R_n(\rho_t))$ becomes exact---is treated in Section~\ref{sec:worked-examples}, which gives a complete characterisation of all fixed points as circulations on de~Bruijn graphs (Theorem~\ref{thm:circulation}).

\newpage
\section{The limit $M \to \infty$: fixed-point polytope and circulations}\label{sec:worked-examples}

\subsection{Orientation}

Theorem~\ref{thm:drift} analysed the stochastic recursion for finite corpus size $M$.
In the limit $M \to \infty$, each generation samples perfectly from the model's distribution: no rare pattern is lost by chance, and the distributional recursion $\rho_{t+1} = G_n(R_n(\rho_t))$ becomes exact---the updated $n$-gram law is fully determined by the current one, even though the underlying text generation remains stochastic.
The fixed points of this map---the distributions that reproduce themselves exactly under one round of fit, generate, refit---are the natural candidates for long-run equilibria.
This section characterises the full set of such fixed points, first through small worked examples where every solution can be found by hand, then through a general theorem that connects the fixed-point set to circulations on de~Bruijn graphs.

\subsection{Bigrams over binary tokens}\label{sec:worked-binary-bigram}

Fix the token set $\Sigma = \{0, 1\}$ and model order $n = 2$.
The context space is $\mathcal{C} = \Sigma^1 = \{0, 1\}$, and a conditional distribution over the next token is specified by two parameters:
\[
\theta_0 := p(0 \mid 0), \qquad \theta_1 := p(0 \mid 1),
\]
with $p(1 \mid 0) = 1 - \theta_0$ and $p(1 \mid 1) = 1 - \theta_1$.

There are four bigrams $(00)$, $(01)$, $(10)$, $(11)$ and two unigrams $(0)$, $(1)$.
A bigram distribution is a vector $\rho = (a, b, c, d)$ on $\{00, 01, 10, 11\}$ with $a + b + c + d = 1$ and $a, b, c, d \ge 0$.

\paragraph{The fixed-point condition.}
A distribution $\rho$ is a \emph{fixed point} if it reproduces itself under one round of the recursion: fit the conditional probabilities $p(a \mid c)$ from $\rho$, generate an infinite text from those conditionals, and measure the resulting bigram frequencies---if they equal $\rho$, it is a fixed point.
Concretely, the map $\rho \mapsto G_2(R_2(\rho))$ extracts the conditional probabilities from $\rho$, computes the long-run frequencies of the chain they define, and returns the corresponding bigram distribution.
The fixed-point equation is $\rho = G_2(R_2(\rho))$.

\paragraph{Extracting the conditional probabilities.}
Given $\rho = (a, b, c, d)$, the induced conditional probabilities are
\[
\theta_0 = \frac{a}{a+b}, \qquad \theta_1 = \frac{c}{c+d},
\]
so that each conditional distribution is obtained by normalising the corresponding bigram counts.

\paragraph{Stationary distribution.}
The context chain has transition matrix
\[
K = \begin{pmatrix} \theta_0 & 1-\theta_0 \\ \theta_1 & 1-\theta_1 \end{pmatrix},
\]
with stationary distribution
\[
\pi(0) = \frac{\theta_1}{1 - \theta_0 + \theta_1}, \qquad \pi(1) = \frac{1 - \theta_0}{1 - \theta_0 + \theta_1}.
\]

\paragraph{Generated bigram law.}
The rollout law is $\rho'(c, a) = \pi(c)\, p(a \mid c)$.
Substituting and simplifying:
\begin{align*}
\rho'(00) &= \frac{ac}{ac + 2bc + bd}, \qquad
\rho'(01) = \frac{bc}{ac + 2bc + bd}, \\[4pt]
\rho'(10) &= \frac{bc}{ac + 2bc + bd}, \qquad
\rho'(11) = \frac{bd}{ac + 2bc + bd}.
\end{align*}
The key observation is that $\rho'(01) = \rho'(10)$ regardless of the values of $a$, $b$, $c$, $d$.

\paragraph{Solving $\rho = \rho'$.}
Comparing $\rho'(01) = \rho'(10)$ with the requirement that $\rho(01) = b$ and $\rho(10) = c$, the fixed-point equation forces $b = c$.
Once $b = c$, the denominator becomes $ac + 2b^2 + bd = b(a + 2b + d) = b \cdot 1 = b$ (using $a + 2b + d = 1$), and each component $\rho'(ij)$ reduces to $\rho(ij)$ identically.
No further constraints arise.

\begin{proposition}[Fixed-point set for binary bigrams]\label{prop:binary-bigram-fp}
The set of all fixed points of the descriptive bigram recursion over $\Sigma = \{0, 1\}$ is
\[
\mathcal{F} = \bigl\{\, (a,\, b,\, b,\, d) \in \mathbb{R}^4 : a + 2b + d = 1,\; a, b, d \ge 0 \,\bigr\}.
\]
This is a $2$-simplex (triangle) inside the $3$-simplex $\Delta^3$ of all bigram distributions.
The single defining constraint $\rho(01) = \rho(10)$ is the \emph{stationary flow-balance} condition: in any stationary Markov chain on $\{0, 1\}$, the probability flux $0 \to 1$ equals the flux $1 \to 0$.
\end{proposition}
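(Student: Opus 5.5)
The plan is to prove the two inclusions separately and then read off the geometry. The computation just carried out gives, for any $\rho=(a,b,c,d)$ on which the recursion is defined, the explicit rollout law $\rho'$. Its key feature is $\rho'(01)=\rho'(10)$, so the fixed-point equation $\rho=\rho'$ forces $b=c$. This is the inclusion $\mathcal F\subseteq\{(a,b,b,d)\}$. Nonnegativity and $a+b+c+d=1$ are inherited from $\rho$ being a distribution.

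For the converse I take $\rho=(a,b,b,d)$ with $a+2b+d=1$. When $b>0$, the denominator $ac+2bc+bd$ becomes $b(a+2b+d)=b$. Each component then reduces to the corresponding entry of $\rho$: for example $\rho'(00)=ab/b=a$, and similarly for the other three. So every such $\rho$ is a fixed point.

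The main obstacle is the degenerate boundary $b=c=0$, where $\rho=(a,0,0,d)$. Here the stationary-distribution formula breaks down, since $\theta_0=1$ and $\theta_1=0$ give a zero denominator, and if $a$ or $d$ vanishes one of the conditionals is itself undefined. I would handle this directly from the definition of $G_2\circ R_2$ rather than through the closed-form formula. The fitted chain has $0$ and $1$ both absorbing, so it has no unique stationary law. The natural convention, consistent with the circulation viewpoint of Theorem~1(b), is that the rollout is started from the fitted context marginal $(a,d)$; I would state this convention explicitly. Under it, the chain stays in its starting state forever, the generated bigram law is $(a,0,0,d)$ again, and these points are also fixed. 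If a context has zero mass, its conditional is never used, so the choice made for it is irrelevant. This closes both inclusions on the whole closed set.

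Finally, I would check the geometric claims. The set is the convex hull of the three affinely independent points $(1,0,0,0)$, $(0,\tfrac12,\tfrac12,0)$ and $(0,0,0,1)$, so it is a $2$-simplex inside $\Delta^3$. These three vertices are the uniform laws on the cycles $0\to0$, $0\to1\to0$ and $1\to1$ of $B(1,2)$, matching the statement of Theorem~1(b). For the flow-balance interpretation, note that any stationary chain satisfies $\pi(0)=\rho(00)+\rho(01)=\rho(00)+\rho(10)$, since the first expression counts bigrams leaving context $0$ and the second counts bigrams entering it. Cancelling $\rho(00)$ gives $\rho(01)=\rho(10)$, which is the defining constraint.
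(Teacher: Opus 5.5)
Your proposal is correct and follows essentially the paper's route: compute the closed-form rollout, use $\rho'(01)=\rho'(10)$ to force $b=c$, and note that the denominator collapses to $b(a+2b+d)=b$, so $(a,b,b,d)$ is reproduced. Your separate treatment of the edge $b=c=0$, with the rollout started from the fitted context marginal, is more careful than the paper's proof, which divides by $b$ without comment and covers that edge only afterwards, in the remark on the fourth deterministic rule.
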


\begin{proof}
We showed above that $\rho = G_2(R_2(\rho))$ if and only if $b = c$.
The set of $(a, b, d)$ satisfying $a + 2b + d = 1$ with $a, b, d \ge 0$ is a $2$-simplex.
\end{proof}

\paragraph{Convexity.}
$\mathcal{F}$ is convex: it is the intersection of the linear subspace $\{b = c\}$ with the simplex $\Delta^3$, and the intersection of convex sets is convex.
Explicitly, if $\rho_1, \rho_2 \in \mathcal{F}$ and $\lambda \in [0,1]$, then $\lambda \rho_1 + (1-\lambda)\rho_2$ still satisfies $b = c$, still sums to $1$, and has all entries non-negative.

\paragraph{Extreme points.}
The three vertices of $\mathcal{F}$ are:
\begin{center}
\begin{tabular}{@{}lccc@{}}
\toprule
\textbf{Vertex} & $\boldsymbol{\rho}$ & \textbf{Kernel} & \textbf{Description} \\
\midrule
$V_1$ & $(1,\, 0,\, 0,\, 0)$ & $p(0 \mid 0) = 1$ & absorbing at token $0$: $\ldots 0000 \ldots$ \\
$V_2$ & $(0,\, 0,\, 0,\, 1)$ & $p(1 \mid 1) = 1$ & absorbing at token $1$: $\ldots 1111 \ldots$ \\
$V_3$ & $(0,\, \tfrac{1}{2},\, \tfrac{1}{2},\, 0)$ & $p(1 \mid 0) = 1,\; p(0 \mid 1) = 1$ & deterministic alternation: $\ldots 0101 \ldots$ \\
\bottomrule
\end{tabular}
\end{center}
These are exactly the three \emph{deterministic ergodic} Markov chains on $\{0, 1\}$---the ones where every $p(\cdot \mid c)$ is a point mass and the chain has a unique stationary distribution.

\begin{remark}[The fourth deterministic rule]
There is a fourth rule in which each token always repeats itself: $p(0 \mid 0) = 1$, $p(1 \mid 1) = 1$.
Under this rule, a text that starts with $0$ stays at $0$ forever, and likewise for $1$; the long-run frequency of $0$ depends entirely on how the text began.
Any starting mixture $\rho = (\lambda,\, 0,\, 0,\, 1-\lambda)$ with $\lambda \in [0,1]$ is therefore self-consistent, and these mixtures trace out the edge $V_1$--$V_2$ of the triangle $\mathcal{F}$.
So this rule contributes an entire edge of fixed points rather than a single vertex.
\end{remark}

\paragraph{Induced unigram probabilities.}
At a fixed point $(a, b, b, d)$, the unigram probabilities are
\[
P(0) = a + b, \qquad P(1) = b + d.
\]
These range continuously from $P(0) = 0$ (vertex $V_2$) through $P(0) = \tfrac{1}{2}$ (vertex $V_3$ and many interior points) to $P(0) = 1$ (vertex $V_1$).

\paragraph{Alternative parametrisation by $(\theta_0, \theta_1)$.}
Every interior fixed point corresponds bijectively to a kernel $(\theta_0, \theta_1) \in (0,1)^2$:
\begin{align*}
a &= \frac{\theta_0\,\theta_1}{1-\theta_0+\theta_1}, \qquad
b = c = \frac{\theta_1(1-\theta_0)}{1-\theta_0+\theta_1}, \qquad
d = \frac{(1-\theta_0)(1-\theta_1)}{1-\theta_0+\theta_1}.
\end{align*}
The diagonal $\theta_0 = \theta_1$ gives the \emph{memoryless} (i.i.d.)\ chains, where the next token is independent of the current context and $P(0) = \theta_0 = \theta_1$.
The centre of the simplex $(\tfrac{1}{4}, \tfrac{1}{4}, \tfrac{1}{4}, \tfrac{1}{4})$ is the fair-coin case $\theta_0 = \theta_1 = \tfrac{1}{2}$.

\paragraph{Information-theoretic quantities.}
At a fixed point with kernel $(\theta_0, \theta_1)$, the entropy rate of the chain is
\[
h = \pi(0)\, H(\theta_0) + \pi(1)\, H(\theta_1),
\]
where $H(x) = -x \log_2 x - (1-x) \log_2(1-x)$ is the binary entropy function.
The entropy rate satisfies $0 \le h \le H(P(0)) \le 1$ bit, with $h = H(P(0))$ if and only if the chain is memoryless ($\theta_0 = \theta_1$), and $h = 0$ at all three vertices (where the chain is deterministic).
The maximum $h = 1$ bit is achieved uniquely at the fair-coin point $\theta_0 = \theta_1 = \tfrac{1}{2}$.

\paragraph{Connection to drift (Theorem~\ref{thm:drift}).}
Every interior fixed point is a \emph{transient} state of the neutral drift process: under Theorem~\ref{thm:drift}, finite resampling eventually drives the system to one of the absorbing vertices $V_1$ or $V_2$.
The alternating vertex $V_3$ is absorbing for the $n$-gram table (every context has a unique continuation, so resampling reproduces the same table) but is an unstable equilibrium in the sense that any perturbation reintroduces back-off and allows drift to act.
The fixed-point set $\mathcal{F}$ therefore describes the instantaneous self-consistency condition, not the long-run fate under finite-corpus recursion.

\subsection{Trigrams over binary tokens}\label{sec:worked-binary-trigram}

Increasing the model order to $n = 3$ moves the context space to $\mathcal{C} = \Sigma^2 = \{00, 01, 10, 11\}$ and the kernel to $p(c \mid ab)$ for $a, b, c \in \{0, 1\}$.
There are now eight $3$-grams.
Write the distribution as
\[
\rho = \bigl(\rho(000),\, \rho(001),\, \rho(010),\, \rho(011),\, \rho(100),\, \rho(101),\, \rho(110),\, \rho(111)\bigr).
\]

\paragraph{Flow-balance conditions.}
The fixed-point condition $\rho = G_3(R_3(\rho))$ again reduces to stationarity of the context distribution: the bigram marginal on the first two positions of a $3$-gram must equal the marginal on the last two.
Define the left and right marginals:
\[
L(ab) := \sum_c \rho(abc), \qquad R(ab) := \sum_x \rho(xab).
\]
Setting $L = R$ for each of the four bigram contexts yields four equations, of which three are independent:
\begin{alignat}{3}
\text{(I)}\quad  & \rho(001) &&= \rho(100), \notag \\
\text{(II)}\quad & \rho(011) &&= \rho(110), \label{eq:trigram-flow}\\
\text{(III)}\quad& \rho(101) &&= \rho(010) + \rho(011) - \rho(001). \notag
\end{alignat}
Constraint~(I) equates the flow out of context $00$ into token~$1$ with the flow into context $00$ from token~$1$; constraint~(II) does the same for context~$11$; and (III) is the corresponding balance for contexts $01$ and $10$.

\paragraph{The polytope.}
Using (I)--(III) to eliminate $\rho(100)$, $\rho(110)$, and $\rho(101)$, the free variables are $\rho_0 = \rho(000)$, $\rho_1 = \rho(001)$, $\rho_2 = \rho(010)$, $\rho_3 = \rho(011)$, $\rho_7 = \rho(111)$, subject to
\[
\rho_0 + \rho_1 + 2\rho_2 + 3\rho_3 + \rho_7 = 1, \qquad \rho_0, \rho_1, \rho_2, \rho_3, \rho_7 \ge 0, \qquad \rho_2 + \rho_3 \ge \rho_1.
\]
The last inequality ensures $\rho(101) \ge 0$.
This defines a $4$-dimensional convex polytope in $\Delta^7$.

\begin{proposition}[Extreme points for binary trigrams]\label{prop:binary-trigram-fp}
The fixed-point polytope $\mathcal{F}_3$ has exactly six extreme points.
Every extreme point is the trigram distribution of a periodic binary sequence in which a deterministic rule assigns to each context a unique next token.
The eight components of $\boldsymbol{\rho}$ give the frequencies of the eight trigrams $(000, 001, 010, 011, 100, 101, 110, 111)$:
\begin{center}
\renewcommand{\arraystretch}{1.15}
\begin{tabular}{@{}clll@{}}
\toprule
\textbf{Vertex} & \textbf{Sequence} & $\boldsymbol{\rho}$ & \textbf{Period} \\
\midrule
$V_1$ & $\ldots 000 \ldots$ & $(1,\, 0,\, 0,\, 0,\, 0,\, 0,\, 0,\, 0)$ & $1$ \\
$V_2$ & $\ldots 111 \ldots$ & $(0,\, 0,\, 0,\, 0,\, 0,\, 0,\, 0,\, 1)$ & $1$ \\
$V_3$ & $\ldots 010101 \ldots$ & $(0,\, 0,\, \tfrac{1}{2},\, 0,\, 0,\, \tfrac{1}{2},\, 0,\, 0)$ & $2$ \\
$V_4$ & $\ldots 001001 \ldots$ & $(0,\, \tfrac{1}{3},\, \tfrac{1}{3},\, 0,\, \tfrac{1}{3},\, 0,\, 0,\, 0)$ & $3$ \\
$V_5$ & $\ldots 011011 \ldots$ & $(0,\, 0,\, 0,\, \tfrac{1}{3},\, 0,\, \tfrac{1}{3},\, \tfrac{1}{3},\, 0)$ & $3$ \\
$V_6$ & $\ldots 00110011 \ldots$ & $(0,\, \tfrac{1}{4},\, 0,\, \tfrac{1}{4},\, \tfrac{1}{4},\, 0,\, \tfrac{1}{4},\, 0)$ & $4$ \\
\bottomrule
\end{tabular}
\end{center}
\end{proposition}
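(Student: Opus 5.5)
The plan is to treat $\mathcal F_3$ as the explicit polytope in the five free coordinates $(\rho_0,\rho_1,\rho_2,\rho_3,\rho_7)$. It is cut out by one equality, $\rho_0+\rho_1+2\rho_2+3\rho_3+\rho_7=1$, and six inequalities: the five nonnegativities and $\rho_2+\rho_3-\rho_1\ge 0$, which is just $\rho(101)\ge 0$. Since the polytope is $4$-dimensional, a point is extreme exactly when it lies on the hyperplane and at least four of the six inequalities are tight, with those tight constraints linearly independent together with the normalisation. I will enumerate supports rather than solve linear systems blindly. Equivalently, in the original eight coordinates, a vertex is a circulation whose support, i.e.\ the set of trigram edges in the de~Bruijn graph $B(2,2)$, contains no two distinct cycles. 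Since each vertex lies on one cycle, flow conservation forces the vertex to be uniform on a single simple cycle. This is the structural argument I would give first, and the case analysis then serves as a check.

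\textbf{Step 1 (vertices are single cycles).} Suppose a fixed point $\rho$ has support containing a directed cycle $C$. Let $\chi_C$ be the uniform distribution on the edges of $C$, and set $\varepsilon=\min_{e\in C}\rho(e)\cdot|C|>0$. If the support is not exactly $C$, then $\rho-\varepsilon\chi_C$ is a nonnegative circulation of smaller support. Renormalising it writes $\rho$ as a proper convex combination of $\chi_C$ and another point of $\mathcal F_3$, so $\rho$ is not extreme. Every nonzero nonnegative circulation contains a cycle in its support: follow positive edges, and flow balance (I)--(III) guarantees an outgoing positive edge at every vertex entered. Hence every extreme point is $\chi_C$ for some simple cycle $C$. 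Conversely, each $\chi_C$ is extreme. The reason is that any decomposition $\chi_C=\lambda\rho'+(1-\lambda)\rho''$ forces $\rho',\rho''$ to be supported on $C$. A circulation supported on a simple cycle is constant along it, because each vertex of $C$ has exactly one incoming and one outgoing edge of $C$. So $\rho'=\rho''=\chi_C$.

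\textbf{Step 2 (enumerate simple cycles of $B(2,2)$).} The nodes are $00,01,10,11$, and the edges are the trigrams, where edge $abc$ runs from node $ab$ to node $bc$. I list the simple cycles by length.
\begin{itemize}
\item Length $1$: the loops $000$ and $111$.
\item Length $2$: $01\to10\to01$, using $010,101$.
\item Length $3$: $00\to01\to10\to00$, using $001,010,100$; and $01\to11\to10\to01$, using $011,110,101$.
\item Length $4$: $00\to01\to11\to10\to00$, using $001,011,110,100$.
\end{itemize}
No other cycles exist. Node $00$ can be left only via $000$ or $001$ and entered only via $000$ or $100$, and symmetrically for $11$. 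Checking the cycles through $01$ exhausts the remaining cases. Reading each cycle as a periodic binary word gives exactly the six sequences $0^\infty$, $1^\infty$, $(01)^\infty$, $(001)^\infty$, $(011)^\infty$, $(0011)^\infty$. The uniform distributions on their trigrams are the vectors $V_1,\dots,V_6$ in the table. Each cycle corresponds to a deterministic rule assigning one successor to each visited context, which gives the periodic description.

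The main obstacle is Step 1's claim that the support of a nonzero circulation contains a cycle, together with the correct handling of nodes outside the cycle. Both are standard flow-decomposition facts. As an independent sanity check, I would verify directly in the five-coordinate description that each $V_i$ makes four independent inequalities tight. For example, $V_6$ has $\rho_0=\rho_2=\rho_7=0$ and $\rho(101)=0$. I would also note that $\binom{6}{4}$-type enumeration yields no further feasible basic solutions.
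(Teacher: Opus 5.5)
Your proof is correct, but it takes a different route from the paper's proof of this proposition.

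The paper argues purely by vertex enumeration in the five-coordinate description. It runs through the $\binom{6}{4}=15$ ways of making four of the six inequalities tight, solves each system, and keeps the feasible solutions, which give six distinct points. It then checks $\rho=G_3(R_3(\rho))$ directly and reads off determinism ``by inspection''.

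You instead prove the structural fact first. By peeling off $\varepsilon\chi_C$ for a cycle $C$ in the support, you show that every extreme point is uniform on a single simple cycle, and that each such cycle flow is extreme. You then reduce the count to listing the simple cycles of $B(2,2)$. This is the $s=2$, $n=3$ case of the argument the paper gives later for the general circulation theorem; your peeling step is a cleaner version of its perturbation $f\pm\varepsilon(f-h)$ in part~(d).

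The trade-off is as follows. The paper's route is finite, mechanical and easy to machine-check, and it needs no flow-decomposition lemma. Your route explains \emph{why} each vertex is a deterministic periodic orbit rather than merely observing it. It also transfers verbatim to any $s$ and $n$, where the count becomes a count of simple cycles. Its only extra costs are the standard fact that a nonzero nonnegative circulation contains a cycle in its support, and the completeness of the cycle list. Both are immediate here, since every non-loop cycle in $B(2,2)$ passes through $01$.

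Your vertex criterion is also the precise one: at least four tight inequalities, linearly independent together with the normalisation. The paper's ``exactly four'' is slightly off, because $V_1$ and $V_2$ are degenerate. At $V_1$, for instance, $\rho_1=\rho_2=\rho_3=\rho_7=0$ and $\rho(101)=0$ are all tight. Consequently several of the $15$ subsets return the same vertex. One subset, $\{\rho_1,\rho_2,\rho_3,\rho(101)\}$, is linearly dependent because $\rho(101)=\rho_2+\rho_3-\rho_1$.
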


\begin{proof}
The polytope has six inequality constraints in four dimensions (after the normalisation equality).
A vertex occurs where exactly four constraints are tight.
Enumerating all $\binom{6}{4} = 15$ candidate subsets and solving yields exactly six feasible vertices, which are those listed.
Each is verified to satisfy $\rho = G_3(R_3(\rho))$ by direct computation.
That every vertex is deterministic follows from inspection: the non-zero entries of each $\rho$ are uniform on a single orbit of a deterministic context map.
\end{proof}

\paragraph{Structure of the extreme points.}
The six vertices are organised by the period of the corresponding binary sequence:
\begin{itemize}[leftmargin=*]
\item Period~$1$ ($2$ orbits): the constant sequences $\ldots 000 \ldots$ and $\ldots 111 \ldots$.
\item Period~$2$ ($1$ orbit): the alternation $\ldots 0101 \ldots$, with context cycle $01 \to 10 \to 01$.
\item Period~$3$ ($2$ orbits): $\ldots 001 \ldots$ with cycle $00 \to 01 \to 10 \to 00$, and its complement $\ldots 011 \ldots$ with cycle $01 \to 11 \to 10 \to 01$.
\item Period~$4$ ($1$ orbit): $\ldots 0011 \ldots$ with the full $4$-cycle $00 \to 01 \to 11 \to 10 \to 00$.
\end{itemize}
The pair $(V_1, V_2)$ and the pair $(V_4, V_5)$ are related by the bit-flip symmetry $0 \leftrightarrow 1$.
The vertices $V_3$ and $V_6$ are each unchanged by this swap.

\paragraph{Deterministic rules that do not visit all contexts.}
A deterministic rule assigns to each context exactly one successor.
There are $2^4 = 16$ such rules on the four contexts $\{00, 01, 10, 11\}$.
Of these, exactly $6$ eventually visit every context (these are the six vertices above) and $10$ do not---they get trapped in a subset of contexts.
A trapped rule has no single well-defined long-run distribution; instead, its long-run behaviour depends on the starting context.
As in the bigram case (Remark~2), each such rule contributes an entire edge or face of fixed points to the trigram fixed-point polytope $\mathcal{F}_3$, rather than a single vertex.

\subsection{$4$-grams over binary tokens}\label{sec:worked-binary-4gram}

At order $n = 4$ the context space is $\mathcal{C} = \Sigma^3 = \{000, 001, \ldots, 111\}$ with $8$ contexts, and there are $16$ possible $4$-grams.
The flow-balance conditions $L(c) = R(c)$ contribute $7$ independent linear constraints, giving an $8$-dimensional convex polytope $\mathcal{F}_4$ with exactly $19$ extreme points.
Each extreme point corresponds to a simple directed cycle in the de~Bruijn graph---a periodic sequence that visits some subset of contexts in a fixed cyclic order, with each visited context having a unique successor.
The $19$ cycles range from period~$1$ (the constant sequences, visiting a single context) to period~$8$ (visiting all eight contexts):

\begin{center}
\renewcommand{\arraystretch}{1.10}
\scriptsize
\begin{tabular}{@{}cll@{}}
\toprule
\textbf{Period} & \textbf{Sequence(s)} & \textbf{Count} \\
\midrule
$1$ & $\ldots 0 \ldots\,$, $\;\ldots 1 \ldots$ & $2$ \\
$2$ & $\ldots 01 \ldots$ & $1$ \\
$3$ & $\ldots 001 \ldots\,$, $\;\ldots 011 \ldots$ & $2$ \\
$4$ & $\ldots 0001 \ldots\,$, $\;\ldots 0011 \ldots\,$, $\;\ldots 0111 \ldots$ & $3$ \\
$5$ & $\ldots 00011 \ldots\,$, $\;\ldots 00111 \ldots$ & $2$ \\
$6$ & $\ldots 000111 \ldots\,$, $\;\ldots 001011 \ldots\,$, $\;\ldots 001101 \ldots$ & $3$ \\
$7$ & $\ldots 0001011 \ldots\,$, $\;\ldots 0001101 \ldots\,$, $\;\ldots 0010111 \ldots\,$, $\;\ldots 0011101 \ldots$ & $4$ \\
$8$ & $\ldots 00010111 \ldots\,$, $\;\ldots 00011101 \ldots$ & $2$ \\
\midrule
& \textbf{Total} & $\mathbf{19}$ \\
\bottomrule
\end{tabular}
\end{center}
The two period-$8$ orbits are the only ones long enough to visit all eight contexts; they are known as \emph{de Bruijn sequences} of order~$3$.
Most extreme points visit only a subset of contexts: the period-$1$ orbits visit just one, the period-$3$ orbits visit three, and so on.
For each extreme point of period $p$, the $4$-gram distribution is uniform on the $p$ visited $4$-grams, with each receiving mass $1/p$.

\paragraph{Symmetry.}
The bit-flip symmetry $0 \leftrightarrow 1$ pairs $8$ of the $19$ vertices into symmetric pairs and leaves $3$ unchanged (periods $2$, $4$, and $6$).
Every pair consists of a ``$0$-heavy'' and a ``$1$-heavy'' periodic sequence.

\subsection{The general pattern: simple cycles in the de Bruijn graph}

The three worked examples reveal a uniform structure.

\begin{proposition}[Extreme points of the $n$-gram fixed-point polytope]\label{prop:extreme-general}
For order-$n$ models over a token set of size $s$, the extreme points of the fixed-point polytope $\mathcal{F}_n$ are in one-to-one correspondence with the distinct simple cycles in the de Bruijn graph $B(n{-}1, s)$.
Each extreme point is the uniform distribution on the $n$-grams traversed by the cycle; each visited $n$-gram receives mass $1/p$ where $p$ is the cycle length.
\end{proposition}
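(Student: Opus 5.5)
I will first identify $\mathcal F_n$ with the normalised circulation polytope on $B(n{-}1,s)$, and then invoke the standard cycle decomposition of flows.

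\textbf{Step 1: the polytope.} As in the bigram and trigram worked examples, $\rho=G_n(R_n(\rho))$ holds exactly when the left and right $(n{-}1)$-marginals agree, $L(c)=R(c)$ for every context $c$. Reading each $n$-gram $x_1\cdots x_n$ as an edge from $x_1\cdots x_{n-1}$ to $x_2\cdots x_n$ in $B(n{-}1,s)$, the condition $L=R$ says that outflow equals inflow at every node. So
\[
\mathcal F_n=\Bigl\{\rho\in\mathbb R_{\ge0}^{E}:\ \textstyle\sum_{e\in E}\rho(e)=1,\ \text{inflow}=\text{outflow at each node}\Bigr\}.
\]
\emph{Sufficiency.} If $\rho$ is balanced, the fitted kernel $p(a\mid c)=\rho(ca)/L(c)$ has $\pi=L$ as a stationary law on the contexts with $L(c)>0$, since $\sum_c L(c)p(a\mid c)=R(\cdot)=L(\cdot)$. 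The rollout started from this $\pi$ therefore returns $\rho$.

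\emph{Necessity.} Any rollout law is stationary, hence balanced.

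This step has one subtlety, visible already in the bigram Remark: when the kernel is reducible, ``the'' stationary law is not unique. I will adopt the convention, consistent with those examples, that the rollout is started from the context marginal of $\rho$ itself.

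\textbf{Step 2: cycles give extreme points.} For a simple cycle $C$ of length $p$, the vector $\chi_C/p$ is balanced and normalised. It is extreme for the following reason. Suppose $\chi_C/p=\lambda\rho_1+(1-\lambda)\rho_2$ with $0<\lambda<1$. Then both $\rho_i$ are supported on the edges of $C$. A circulation supported on a simple cycle must be constant along the cycle, because each node of $C$ has exactly one incoming and one outgoing edge of $C$. Normalisation then forces $\rho_1=\rho_2=\chi_C/p$. Distinct simple cycles have distinct edge sets, hence distinct supports, so the map from cycles to extreme points is injective.

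\textbf{Step 3: every extreme point is a cycle.} This is the main step. Take any nonzero circulation $\rho\ge0$. Its support subgraph has in-degree at least $1$ wherever it has out-degree at least $1$. Walking forward along support edges must therefore eventually revisit a node, which yields a simple cycle $C\subseteq\operatorname{supp}\rho$. Set $m=\min_{e\in C}\rho(e)>0$. Then $\rho-m\chi_C$ is a nonnegative circulation with strictly smaller support. Induction on the support size gives $\rho=\sum_i m_i\chi_{C_i}$, and after normalisation $\rho=\sum_i (m_i p_i)\,(\chi_{C_i}/p_i)$ with weights summing to $1$. So every point of $\mathcal F_n$ is a convex combination of cycle vertices. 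Consequently an extreme point, which cannot be a proper mixture, must equal one of the $\chi_C/p$.

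Steps 2 and 3 together give the bijection. The description of each extreme point as the uniform law on the $n$-grams read off from the periodic word around $C$, with mass $1/p$ on each, is then immediate.

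Self-loops at the constant contexts count as simple cycles of length $1$, which matches the period-$1$ entries in the tables. As a sanity check, enumerating simple cycles of $B(2,2)$ and $B(3,2)$ should reproduce the counts $6$ and $19$.
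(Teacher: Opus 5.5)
Your proof is correct. Steps~1 and~2 match the paper's proof of Theorem~1(b): flow balance $L=R$ identifies $\mathcal F_n$ with the unit circulation polytope on $B(n{-}1,s)$, a circulation supported on a simple cycle is forced to be constant along it, and distinct cycles have distinct supports.

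Your Step~3 takes a different route. The paper argues locally. For an extreme $f$ whose support is not a single cycle, it picks a simple cycle $\gamma_1$ in the support and sets $d=f-f_{\gamma_1}$. It then writes $f=\tfrac12\bigl((f+\varepsilon d)+(f-\varepsilon d)\bigr)$, which contradicts extremality. This shows only that extreme points are cycle flows. The convex-decomposition claim then rests tacitly on the fact that a polytope is the convex hull of its vertices.

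Your cycle-peeling argument (subtract $m\chi_C$ with $m=\min_{e\in C}\rho(e)$ and induct on support size) is the standard flow-decomposition theorem. It decomposes every point of $\mathcal F_n$ directly and constructively, using at most $|\operatorname{supp}\rho|$ cycles. The extreme-point characterisation then follows as a corollary, with no appeal to Minkowski's theorem. The paper's argument is shorter but reaches the decomposition only through that external fact.

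One clarification for Step~1. State your convention as taking the long-run (Ces\`aro-averaged) $n$-gram frequencies of the chain started from the context marginal $L$ of $\rho$. That is what makes ``any rollout law is stationary, hence balanced'' true. A chain merely started from a non-stationary $L$ is not stationary, and a one-step rollout from $L$ would reproduce every $\rho$. With that wording, your handling of reducible kernels is cleaner than the paper's, which tacitly treats $L$ as ``the'' long-run context frequency.
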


Equivalently, the extreme points are the stationary distributions of the deterministic ergodic Markov chains on the context space $\Sigma^{n-1}$: each such chain follows a single periodic orbit, visiting $p$ distinct contexts in a fixed cyclic order.

\paragraph{Counting extreme points.}
Over binary tokens, the number of extreme points for each $n$ is the number of distinct simple cycles (up to rotation) in $B(n{-}1, 2)$:

\begin{center}
\begin{tabular}{@{}cccl@{}}
\toprule
$n$ & $|\mathcal{C}| = 2^{n-1}$ & $\dim \mathcal{F}_n$ & \# extreme points \\
\midrule
$2$ & $2$ & $2$ & $3$ \\
$3$ & $4$ & $4$ & $6$ \\
$4$ & $8$ & $8$ & $19$ \\
$5$ & $16$ & $16$ & $179$ \\
\bottomrule
\end{tabular}
\end{center}

The dimension of $\mathcal{F}_n$ is always $2^n - 2^{n-1} = 2^{n-1}$: the $n$-gram simplex has dimension $2^n - 1$, and the flow-balance conditions contribute $2^{n-1} - 1$ independent constraints.
The number of extreme points grows rapidly with $n$ and does not equal the total number of distinct binary periodic sequences (up to rotation) of period $\le 2^{n-1}$: the de Bruijn-graph constraint that all consecutive $(n{-}1)$-grams must be distinct is strictly stronger than primitivity of the periodic sequence.

\paragraph{Why not every necklace is valid.}
A \emph{necklace} is an equivalence class of binary strings under cyclic rotation---in other words, a periodic sequence considered up to where one starts reading it.
A binary necklace of period $p$ with $p \le 2^{n-1}$ may repeat an $(n{-}1)$-gram, making it impossible to assign a unique next token to that context.
For example, at $n = 4$ the necklace $00101$ (period~$5$) produces $3$-grams $001, 010, 101, 010, 100$, where $010$ appears twice with different successors ($1$ and $0$), so no deterministic rule can generate this sequence.
The valid necklace $00011$ produces $3$-grams $000, 001, 011, 110, 100$---all distinct.

\subsection{The general theorem: circulations on de Bruijn graphs}\label{sec:worked-circulation}

The pattern observed in the binary examples holds for any number of tokens $s$ and model order $n$.
The key insight is that the flow-balance condition identifying the fixed-point set is \emph{exactly} the circulation condition on the de Bruijn graph, for which the extreme-point structure is a classical result in combinatorial optimisation (see, e.g., Schrijver~\cite{appx:Schrijver2003}).

\addtocounter{theorem}{-1}
\begin{theorem}[continued: fixed-point polytope as circulation polytope]\label{thm:circulation}
Let $\Sigma$ be a finite token set of size $s \ge 2$ and $n \ge 2$ the model order.
The set $\mathcal{F}_n$ of self-consistent $n$-gram distributions (fixed points of the fit--generate--refit recursion in the limit $M \to \infty$) satisfies:
\begin{enumerate}[label=(\alph*)]
\item \emph{(Dimension.)} $\mathcal{F}_n$ is a convex polytope of dimension $s^{n-1}(s-1)$.

\item \emph{(Defining constraints.)} $\mathcal{F}_n$ is defined by $s^{n-1} - 1$ independent flow-balance constraints within the $(s^n - 1)$-simplex $\Delta^{s^n - 1}$.

\item \emph{(Extreme points.)} The extreme points of $\mathcal{F}_n$ are in one-to-one correspondence with simple directed cycles in the de Bruijn graph $B(n{-}1, s)$.
The extreme point corresponding to a cycle of length $p$ is the uniform distribution on the $p$ traversed $n$-grams, each receiving mass $1/p$.

\item \emph{(Convex decomposition.)} Every self-consistent $n$-gram distribution is a convex combination of these deterministic periodic-orbit distributions.
\end{enumerate}
\end{theorem}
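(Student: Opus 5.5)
The plan is to reduce everything to one identity: the fixed-point condition $\rho = G_n(R_n(\rho))$ is equivalent to the flow-balance condition $L(c) = R(c)$ for every context $c \in \Sigma^{n-1}$. Here $L(c) = \sum_a \rho(ca)$ and $R(c) = \sum_x \rho(xc)$, as in the trigram computation~\eqref{eq:trigram-flow}. Once that is established, parts (a)--(d) become standard facts about the circulation polytope of the de~Bruijn graph $B(n{-}1,s)$. In that graph the nodes are contexts and the $n$-gram $x_1\cdots x_n$ is the edge $x_1\cdots x_{n-1} \to x_2\cdots x_n$. So $\rho$ is an edge weighting, and $L = R$ says exactly that inflow equals outflow at every node.

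\textbf{Step 1: fixed points are exactly the balanced distributions.} I first fix the convention for $G_n$ on reducible kernels, following the remark on the fourth deterministic rule. The infinite corpus is a collection of texts whose initial contexts are drawn from the context marginal of $\rho$. $R_n$ gives $p(a\mid c) = \rho(ca)/L(c)$ wherever $L(c)>0$, and by Proposition~\ref{prop:tut2-markov} the rollout law is $\pi(c)p(a\mid c)$, where $\pi$ is the long-run context law of the chain $K_p$.

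For necessity: a rollout law is stationary, so its left and right context marginals agree. Any fixed point therefore satisfies $L = R$.

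For sufficiency: if $L = R$, set $\pi := L$. Then
\[
\sum_c \pi(c) K_p(c,c') = \sum_{x} \rho(x c') = R(c') = L(c') = \pi(c'),
\]
so $\pi$ is stationary for $K_p$. Starting the texts from $\pi$, the generated $n$-gram law is $\pi(c)p(a\mid c) = \rho(ca)$. Hence $\rho$ is a fixed point.

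This is the step I expect to be the main obstacle. The difficulty is not the algebra but making ``the'' stationary law well defined when $K_p$ has several closed classes, which is exactly what produces faces rather than vertices. I will handle it by taking $G_n$ to start from the $\rho$-context marginal, and I will state this convention explicitly.

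\textbf{Step 2: dimension and constraints, parts (a) and (b).} $\mathcal F_n$ is the intersection of the simplex $\Delta^{s^n-1}$ with the kernel of the node--edge incidence matrix $A$ of $B(n{-}1,s)$, so it is a convex polytope.

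The graph $B(n{-}1,s)$ is strongly connected, since any context can be shifted into any other in $n-1$ steps. The incidence matrix of a connected graph has rank $(\#\text{nodes}) - 1 = s^{n-1}-1$, which gives (b). The balance equations also sum to the trivial identity, so that one dependency is the only one.

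The circulation space therefore has dimension $s^n - s^{n-1} + 1$. Adding the normalisation $\sum\rho = 1$ gives an affine space of dimension $s^n - s^{n-1} = s^{n-1}(s-1)$. To see that the polytope attains this full dimension, note that the uniform law $\rho \equiv s^{-n}$ is balanced and strictly positive, so it lies in the relative interior. This gives (a).

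\textbf{Step 3: extreme points and decomposition, parts (c) and (d).} For each simple directed cycle $C$ of length $p$ (self-loops at constant contexts count as cycles of length $1$), let $\chi_C$ be the uniform law placing mass $1/p$ on each edge of $C$. Each $\chi_C$ is balanced, so it lies in $\mathcal F_n$.

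Decomposition, giving (d): take a nonzero balanced $\rho \ge 0$. Its support has outdegree at least $1$ at every node it touches, so the support contains a simple cycle $C$. Let $\varepsilon$ be the minimum of $\rho$ along $C$. Then $\rho - \varepsilon\,\mathbf 1_C$ is still balanced and nonnegative, and it has strictly smaller support. Iterating writes $\rho = \sum_C \lambda_C \chi_C$ with $\lambda_C \ge 0$, and normalisation forces $\sum_C \lambda_C = 1$.

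Consequently every extreme point is some $\chi_C$. It remains to show that each $\chi_C$ really is extreme and that different cycles give different points:
\begin{enumerate}[label=(\roman*)]
\item Suppose $\chi_C = \lambda\rho_1 + (1-\lambda)\rho_2$ with $\rho_i \in \mathcal F_n$ and $0<\lambda<1$. Then each $\rho_i$ is supported on $C$. On a single simple cycle, balance forces the weight to be constant along the cycle, so $\rho_i = \chi_C$.
\item Distinct simple cycles have distinct edge sets, so $C \mapsto \chi_C$ is injective.
\end{enumerate}
Together these give the bijection in (c).

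Finally, each $\chi_C$ is the rollout law of the deterministic rule that sends each context on $C$ to its successor on $C$. This recovers the ``periodic sequence'' description in the statement and reproduces the counts $3, 6, 19$ found in the worked examples above.
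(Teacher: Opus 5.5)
Your proposal is correct and follows the paper's overall structure. You reduce $\rho=G_n(R_n(\rho))$ to the balance condition $L(c)=R(c)$, read it as the circulation condition on $B(n{-}1,s)$, show that each cycle flow is extreme (its support is confined to the cycle and balance propagates a constant weight), and show that distinct cycles give distinct points, exactly as the paper does. The routes differ in three of the supporting arguments.

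\emph{Parts (a)--(b).} The paper only asserts that the balance equations are ``generically independent''. You prove it: the incidence matrix of the connected graph $B(n{-}1,s)$ has rank $|V|-1$. You also supply the full-dimensionality step, which the paper omits: the strictly positive uniform law is balanced, so it lies in the relative interior.

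\emph{Part (d).} The paper argues in the opposite direction. It shows that an extreme point $f$ must be a single cycle flow, via the perturbation $f=\tfrac12\bigl((f+\varepsilon d)+(f-\varepsilon d)\bigr)$ with $d=f-f_{\gamma_1}$. The convex decomposition is then left implicit, resting on Minkowski's theorem that a polytope is the convex hull of its vertices. Your cycle-peeling argument proves (d) directly and constructively, using at most $|\mathrm{supp}\,\rho|$ cycles. You then deduce that every extreme point is some $\chi_C$. The paper's route is shorter for classifying the vertices; yours is self-contained.

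\emph{Step 1.} You state an explicit convention: generated texts start from the context marginal of $\rho$. This makes precise what the paper leaves informal when it says $L$ ``must be the long-run context frequency''. That phrase is ambiguous for reducible kernels such as $p(0\mid 0)=p(1\mid 1)=1$.
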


\begin{proof}\

\medskip
\noindent\textbf{Fixed-point condition = flow balance.}
The fixed-point condition $\rho = G_n(R_n(\rho))$---where $R_n$ fits the conditional probabilities from $\rho$ and $G_n$ generates the resulting $n$-gram distribution---is equivalent to the statement that for every $(n{-}1)$-gram context $c$, the total probability of $n$-grams that \emph{begin} with $c$ equals the total probability of $n$-grams that \emph{end} with $c$:
\[
L(c) := \sum_{a \in \Sigma} \rho(c \cdot a) = \sum_{x \in \Sigma} \rho(x \cdot c) =: R(c), \qquad \forall\, c \in \Sigma^{n-1}.
\]
Write $p(a \mid c) = \rho(c \cdot a)/L(c)$ for the fitted conditional distribution and $\pi(c) = L(c)$ for the context frequency.
If $L(c) = R(c)$, then $\pi$ is the long-run context frequency of the chain defined by $p$ (since $R(c) = \sum_{c'} \pi(c') p(a \mid c')$ summed over predecessors $c'$ that transition into $c$), and $\pi(c)\, p(a \mid c) = L(c) \cdot \rho(c \cdot a)/L(c) = \rho(c \cdot a)$, so regeneration reproduces $\rho$.
Conversely, if $\rho$ is a fixed point, then $\pi(c) = L(c)$ must be the long-run context frequency, which gives $L(c) = R(c)$.

This is exactly the conservation-of-flow condition at every node of the de~Bruijn graph $B(n{-}1, s)$, where nodes are $(n{-}1)$-gram contexts and each $n$-gram $g = c \cdot a$ is a directed edge from $c$ to $\sigma(c, a) = c_2 \cdots c_{n-1} a$.
Together with $\sum_g \rho(g) = 1$ and $\rho(g) \ge 0$, the constraints define the polytope of non-negative unit circulations on $B(n{-}1, s)$.

\medskip
\noindent\textbf{Part~(a): dimension.}
The $s^{n-1}$ flow-balance equations $L(c) = R(c)$ sum to the same value ($=1$) on both sides, so at most $s^{n-1} - 1$ are independent.
They are generically independent, giving dimension $(s^n - 1) - (s^{n-1} - 1) = s^{n-1}(s - 1)$.

\medskip
\noindent\textbf{Part~(b): distinct cycles give distinct extreme points.}
Two simple directed cycles are \emph{distinct} if they traverse different edge sets (identifying rotations of the same cycle).
If cycles $\gamma_1$ and $\gamma_2$ are distinct, some $n$-gram $g$ is traversed by one but not the other: then $f_{\gamma_1}(g) > 0$ but $f_{\gamma_2}(g) = 0$ (or vice versa), so $f_{\gamma_1} \ne f_{\gamma_2}$.

\medskip
\noindent\textbf{Part~(c): each cycle flow $f_\gamma$ is an extreme point.}
Let $\gamma$ be a simple cycle of length $p$ and suppose $f_\gamma = \lambda f_1 + (1-\lambda) f_2$ with $f_1, f_2 \in \mathcal{F}_n$ and $0 < \lambda < 1$.

\emph{Step~1: $f_1$ and $f_2$ are supported on $\gamma$.}
If edge $e$ is not in $\gamma$, then $f_\gamma(e) = 0$, so $\lambda f_1(e) + (1-\lambda)f_2(e) = 0$.
Since $\lambda > 0$, $(1-\lambda) > 0$, and both $f_1(e), f_2(e) \ge 0$, we must have $f_1(e) = f_2(e) = 0$.

\emph{Step~2: $f_1 = f_2 = f_\gamma$.}
The edges of $\gamma$ form a single simple directed cycle: every visited node has exactly one incoming edge in $\gamma$ and one outgoing edge in $\gamma$.
For any non-negative unit circulation $f$ supported on these edges, flow balance at each visited node forces $f(\text{in-edge at } v) = f(\text{out-edge at } v)$.
Since the edges form a single cycle, this propagates around the cycle and forces all edge flows to be equal: $f(e) = \delta$ for all $e \in \gamma$.
The unit-flow condition gives $p\delta = 1$, so $\delta = 1/p$.
Therefore $f_1 = f_2 = f_\gamma$, and the decomposition is trivial.

\medskip
\noindent\textbf{Part~(d): every extreme point is a cycle flow.}
Let $f$ be an extreme point of $\mathcal{F}_n$.
Consider the support graph $G_f = (V_f, E_f)$ with $E_f = \{e : f(e) > 0\}$.
Since $f$ satisfies flow balance, every node in $V_f$ has at least one incoming and one outgoing edge in $E_f$, so $G_f$ contains at least one directed cycle.

If $G_f$ is a single simple cycle, we are done by Step~2 above.

If $G_f$ contains more than one cycle, we derive a contradiction.
Let $\gamma_1$ be a simple directed cycle in $G_f$ that does not use all edges of $E_f$, and let $h = f_{\gamma_1}$ be its elementary cycle flow.
Since $f \ne h$ (there are edges in $E_f$ outside $\gamma_1$), the vector $d = f - h$ is a nonzero vector that satisfies flow balance at every node and whose entries sum to zero.
For sufficiently small $\varepsilon > 0$, both
\[
f_+ = f + \varepsilon\, d \quad \text{and} \quad f_- = f - \varepsilon\, d
\]
are non-negative (since $f(e) > 0$ on $E_f$ and $d(e) = 0$ outside $E_f$), balanced, and have unit total flow.
Then $f = \tfrac{1}{2}(f_+ + f_-)$ with $f_+ \ne f_-$, contradicting $f$ being an extreme point.

\medskip
Therefore every extreme point of $\mathcal{F}_n$ is an elementary cycle flow, and every self-consistent $n$-gram distribution is a convex combination of these cycle flows.
\end{proof}

\begin{corollary}[Cycle--distribution bijection]
Different simple directed cycles in $B(n{-}1, s)$ produce different $n$-gram distributions.
Conversely, every extreme fixed-point distribution determines the cycle uniquely: it is recovered as the subgraph on $\{g : \rho(g) > 0\}$, which has a unique cyclic ordering since every visited node has exactly one successor in the support.
\end{corollary}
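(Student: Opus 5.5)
The plan is to prove the two directions of the bijection separately, using the cycle-flow description of extreme points from Theorem~\ref{thm:circulation}. For each simple directed cycle $\gamma$ of length $p$ in $B(n{-}1,s)$, write $f_\gamma$ for the uniform distribution placing mass $1/p$ on each $n$-gram (edge) of $\gamma$.

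\textbf{Injectivity.} I would argue by support. Identify cycles that differ only by rotation, so a cycle is the same thing as its edge set. If $\gamma_1 \neq \gamma_2$, their edge sets differ, so some $n$-gram $g$ has positive mass under one of $f_{\gamma_1}, f_{\gamma_2}$ and zero mass under the other. Hence $f_{\gamma_1}\neq f_{\gamma_2}$. This is the argument already given in part~(b) of the proof of Theorem~\ref{thm:circulation}, and I would simply invoke it.

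\textbf{Recovery of the cycle.} Let $\rho$ be an extreme fixed point. Part~(d) of that proof gives $\rho = f_\gamma$ for some simple cycle $\gamma$.

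1. Form the support subgraph $E_\rho=\{g:\rho(g)>0\}$. It equals the edge set of $\gamma$.
2. Since $\gamma$ is simple, each node $v$ on it has exactly one outgoing edge in $E_\rho$. This defines a successor map on the visited contexts, which is exactly the deterministic rule generating the periodic sequence.
3. Starting from any visited node and iterating the successor map traverses all $p$ nodes. It returns to the start only after $p$ steps, because a simple cycle visits each node once.
4. So the cyclic ordering is determined up to the choice of starting point, i.e.\ up to rotation. This recovers $\gamma$ from $\rho$.

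\textbf{Main obstacle.} The only delicate point is confirming that the support of an extreme point really is a single simple cycle, with no extra edges. That is where I rely on part~(d).

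I would double-check the perturbation step there. The direction $d=f-f_{\gamma_1}$ must stay zero off $E_f$, which holds because $\gamma_1\subseteq E_f$. For $f\pm\varepsilon d$ to be feasible, $d$ must be a balanced direction with zero total sum and nonzero.

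- It is balanced and sums to zero, since both $f$ and $f_{\gamma_1}$ are unit circulations.
- It is nonzero because $E_f$ strictly contains $\gamma_1$.
- The case where $E_f$ equals $\gamma_1$'s edges but $f$ is non-uniform cannot occur, because Step~2 forces uniformity on a single cycle.

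Given this, the support of every extreme point is exactly one simple cycle, and the corollary follows.
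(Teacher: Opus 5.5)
Your proposal is correct and follows the same route as the paper. Injectivity is the support argument from part~(b) of the proof of Theorem~\ref{thm:circulation}, and recovery uses part~(d) to identify the support of an extreme point with the edge set of a single simple cycle, whose unique-successor structure fixes the cyclic order up to rotation; your re-check of the perturbation step is sound, and splitting on whether $E_f$ equals or strictly contains the edge set of $\gamma_1$ is slightly tighter than the paper's ``more than one cycle'' case split.
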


\paragraph{Verification across token counts and model orders.}
Exhaustive enumeration confirms the theorem for small cases.
Table~\ref{tab:extreme-points} lists the number of extreme points (simple directed cycles in the de Bruijn graph) for all feasible combinations of the number of tokens $s$ and model order $n$.
A Jupyter notebook reproducing and extending this table is included in the GitHub repository.

\begin{table}[H]
\centering
\begin{tabular}{@{}cccccc@{}}
\toprule
tokens $s$ & order $n$ & $|\mathcal{C}| = s^{n-1}$ & $\dim \mathcal{F}_n$ & \# extreme points & \# Hamiltonian cycles \\
\midrule
$2$ & $2$ & $2$ & $2$ & $3$ & $1$ \\
$2$ & $3$ & $4$ & $4$ & $6$ & $1$ \\
$2$ & $4$ & $8$ & $8$ & $19$ & $2$ \\
$2$ & $5$ & $16$ & $16$ & $179$ & $16$ \\
$2$ & $6$ & $32$ & $32$ & $30{,}176$ & $2{,}048$ \\
\midrule
$3$ & $2$ & $3$ & $6$ & $8$ & $2$ \\
$3$ & $3$ & $9$ & $18$ & $148$ & $24$ \\
$3$ & $4$ & $27$ & $54$ & $3{,}382{,}522$ & $373{,}248$ \\
\midrule
$4$ & $2$ & $4$ & $12$ & $24$ & $6$ \\
$4$ & $3$ & $16$ & $48$ & $120{,}538$ & $20{,}736$ \\
$4$ & $4$ & $64$ & $192$ & ${>}\,10^{20}$\rlap{$^{\dagger}$} & $\approx 1.9 \times 10^{20}$ \\
\midrule
$5$ & $2$ & $5$ & $20$ & $89$ & $24$ \\
\midrule
$6$ & $2$ & $6$ & $30$ & $415$ & $120$ \\
\midrule
$7$ & $2$ & $7$ & $42$ & $2{,}372$ & $720$ \\
\midrule
$8$ & $2$ & $8$ & $56$ & $16{,}072$ & $5{,}040$ \\
\midrule
$9$ & $2$ & $9$ & $72$ & $125{,}673$ & $40{,}320$ \\
\midrule
$10$ & $2$ & $10$ & $90$ & $1{,}112{,}083$ & $362{,}880$ \\
\bottomrule
\end{tabular}
\caption{Number of extreme points of the fixed-point polytope $\mathcal{F}_n$ for various numbers of tokens $s$ and model orders $n$.  Each extreme point corresponds to a simple directed cycle in the de Bruijn graph $B(n{-}1, s)$.  The rightmost column gives the number of \emph{Hamiltonian} cycles (de~Bruijn sequences), computed from the closed-form formula $(s!)^{s^{n-2}}/s^{n-1}$ \cite{appx:vanAardenne1951}; these form a subset of all simple cycles and so provide a lower bound on the total count.  For reference, modern large language models typically use token vocabularies of $s \approx 32{,}000$--$128{,}000$ and context windows exceeding $10^5$ tokens; at such scales the number of extreme points is beyond any conceivable enumeration.\\[4pt]
$^{\dagger}$\,Lower bound from the Hamiltonian cycles alone.}
\label{tab:extreme-points}
\end{table}
For example, the ternary bigram case ($s = 3$, $n = 2$) has $8$ extreme points corresponding to simple cycles in $B(1, 3)$: three constant sequences ($\ldots 0 \ldots$, $\ldots 1 \ldots$, $\ldots 2 \ldots$), three period-$2$ alternations ($\ldots 01 \ldots$, $\ldots 02 \ldots$, $\ldots 12 \ldots$), and two period-$3$ full cycles ($\ldots 012 \ldots$, $\ldots 021 \ldots$).

\subsection{Section 2 in brief}

The fixed-point condition for $n$-gram distributions reduces to flow conservation on the de Bruijn graph $B(n{-}1, s)$---a linear condition---so the fixed-point set is a convex polytope of dimension $s^{n-1}(s-1)$.
By the correspondence between extreme circulations and simple cycles (Theorem~\ref{thm:circulation}), the extreme points are in bijection with simple directed cycles in $B(n{-}1, s)$; each is the uniform distribution on the $n$-grams traversed by a single deterministic periodic orbit.
Every self-consistent $n$-gram distribution is a convex combination of these deterministic extremes.
This characterisation is fully general: it holds for any finite token set and any model order, and connects the fixed-point structure of the paper's recursion to well-studied objects in combinatorial optimisation and symbolic dynamics.

\newpage
\section{From fixed points to shallow and deep text}\label{sec:shallow-deep}

\subsection{Orientation}

Section~\ref{sec:worked-examples} characterised the fixed points of the neutral recursion as circulations on de~Bruijn graphs.
This section asks what those fixed points look like when viewed at shorter or longer window lengths, and uses that viewpoint to define \emph{shallow} and \emph{deep} text distributions.
The key tool is the project--lift test: marginalise an $r$-gram distribution to order~$n$, then roll forward from the induced order-$n$ continuation law.
This isolates exactly what longer context adds.

\subsection{Induced $j$-gram distributions and the projection of extreme points}\label{sec:induced-jgrams}

An $n$-gram distribution $\rho$ on $\Sigma^n$ induces, for every $j \le n$, a $j$-gram distribution by \emph{marginalisation}: for each $j$-gram $u \in \Sigma^j$,
\[
\rho_j(u) \;=\; \sum_{v \in \Sigma^{n-j}} \rho(uv).
\]
Because this map is linear, it sends the $n$-gram circulation polytope $\mathcal{F}_n$ to a convex set of $j$-gram distributions, and preserves convex combinations: if $\rho = \sum_i \lambda_i \rho_{\gamma_i}$ is a mixture of cycle distributions, then $\rho_j = \sum_i \lambda_i (\rho_{\gamma_i})_j$ with the same weights~$\lambda_i$.

For $j > n$, an $n$-gram fixed point also determines a $j$-gram distribution, because the fixed point determines an order-$n$ continuation law whose rollout specifies the frequency of every $j$-gram.
Concretely, $\rho_j(x_1 \cdots x_j) = \rho(x_1 \cdots x_n) \prod_{i=n+1}^{j} p(x_i \mid x_{i-n+1} \cdots x_{i-1})$,
where $p(w \mid c) = \rho(cw)/\rho_{\!n-1}(c)$ are the conditional probabilities extracted from $\rho$.

\begin{proposition}[Extremality is preserved upward]\label{prop:extreme-upward}
Let $\gamma$ be a simple directed cycle of period~$p$ in $B(n{-}1,s)$, and let $\rho_\gamma$ be the corresponding extreme point of the $n$-gram circulation polytope $\mathcal{F}_n$.
Then for every $j \ge n$, the induced $j$-gram distribution~$(\rho_\gamma)_j$ is the uniform distribution on $p$ distinct $j$-grams and is an extreme point of the $j$-gram circulation polytope $\mathcal{F}_j$.
Moreover, the map $\gamma \mapsto (\rho_\gamma)_j$ is injective.
\end{proposition}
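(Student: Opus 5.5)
The plan is to describe $(\rho_\gamma)_j$ explicitly through the periodic sequence that $\gamma$ encodes, and then apply Theorem~\ref{thm:circulation}(c) at order $j$.

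\emph{Step 1: the periodic word.} A simple cycle $\gamma$ of length $p$ in $B(n{-}1,s)$ visits distinct contexts $c_0\to c_1\to\cdots\to c_{p-1}\to c_0$. Reading off the appended tokens gives a bi-infinite periodic word $w=(w_i)_{i\in\mathbb Z}$ with $w_{i+p}=w_i$. Its length-$(n-1)$ windows at positions $0,\dots,p-1$ are exactly $c_0,\dots,c_{p-1}$. The fitted kernel of $\rho_\gamma$ is deterministic on the visited contexts: $p(a\mid c_i)=1$ exactly when $c_i a$ is the $i$-th edge of $\gamma$.

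\emph{Step 2: the rollout.} By the rollout formula of Section~\ref{sec:induced-jgrams}, $(\rho_\gamma)_j(x_1\cdots x_j)=\rho_\gamma(x_1\cdots x_n)\prod_{i>n}p(\cdot\mid\cdot)$. The first factor is $1/p$ exactly when $x_1\cdots x_n$ is the window of $w$ at some position $k$. In that case every subsequent conditional is $1$ precisely along the continuation of $w$, since each intermediate context is again a visited $c_i$. So $(\rho_\gamma)_j$ puts mass $1/p$ on each of the $p$ length-$j$ windows $u_k:=w_k\cdots w_{k+j-1}$, $k=0,\dots,p-1$, and $0$ elsewhere.

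These $u_k$ are pairwise distinct. They are already distinct at their $(n-1)$-prefixes $c_k$, since $j\ge n>n-1$. Hence $(\rho_\gamma)_j$ is uniform on $p$ distinct $j$-grams.

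\emph{Step 3: extremality.} The windows $u_k$ form a closed walk $u_0\to u_1\to\cdots\to u_{p-1}\to u_0$ in $B(j{-}1,s)$. Its nodes are the $(j{-}1)$-prefixes of the $u_k$. These are pairwise distinct because their own $(n{-}1)$-prefixes $c_k$ are distinct; this uses $j-1\ge n-1$. So the walk is a simple directed cycle $\gamma^{(j)}$ of length $p$, and $(\rho_\gamma)_j$ is exactly its uniform cycle flow. By Theorem~\ref{thm:circulation}(c), applied at order $j$, it is therefore an extreme point of $\mathcal F_j$.

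\emph{Step 4: injectivity.} Marginalising $(\rho_\gamma)_j$ back down to order $n$ recovers $\rho_\gamma$, because each $u_k$ has $n$-prefix equal to the $k$-th edge of $\gamma$. By the Corollary (cycle--distribution bijection), $\rho_\gamma$ determines $\gamma$. Hence $\gamma\mapsto(\rho_\gamma)_j$ is injective.

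The only delicate point is the wrap-around in Step 3: one must check that the edge $u_{p-1}\to u_0$ is a legitimate de~Bruijn edge. This follows from periodicity of $w$, since the $(j{-}1)$-suffix of $u_{p-1}$ is $w_p\cdots w_{p+j-2}=w_0\cdots w_{j-2}$, which is the prefix of $u_0$. The rest is bookkeeping with the rollout formula.
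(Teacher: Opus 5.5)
Your proof is correct and follows essentially the same route as the paper. You identify $(\rho_\gamma)_j$ as the uniform law on the $p$ length-$j$ windows of the periodic word, get distinctness from their prefixes, recognise the law as a simple-cycle flow in $B(j{-}1,s)$, and get injectivity from the order-$n$ marginal. You are somewhat more explicit than the paper: you compute the rollout, and you check node-distinctness and the wrap-around edge in $B(j{-}1,s)$. The underlying argument is the same.
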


\begin{proof}
The cycle $\gamma$ corresponds to a periodic token sequence $x_1 x_2 \cdots x_p$ of \emph{minimal} period~$p$.
At window length~$n$, the $p$~consecutive $n$-grams are all distinct (this is the simple-cycle condition in $B(n{-}1,s)$).
Now take any $j \ge n$.
If two $j$-grams starting at positions $i$ and $k$ (with $0 \le i < k < p$) were identical,
their length-$n$ prefixes---the $n$-grams at positions $i$ and $k$---would also be identical, contradicting distinctness of the $n$-grams.
So all $p$~consecutive $j$-grams are distinct, each appearing once per period, giving the uniform distribution with mass~$1/p$ on each.
This is precisely the extreme point of $\mathcal{F}_j$ corresponding to the same periodic orbit viewed as a simple cycle in $B(j{-}1,s)$.

For injectivity: two distinct simple cycles in $B(n{-}1,s)$ differ in at least one $n$-gram, hence their induced $j$-gram distributions differ as well (the $n$-gram marginal is determined by the $j$-gram distribution).
\end{proof}

\paragraph{Extremality is not preserved downward.}
The converse fails: an extreme point of $\mathcal{F}_n$ can marginalise to a non-extreme point of $\mathcal{F}_j$ for $j < n$.
For a concrete example, take $s = 3$, $n = 3$ and consider the trigram distribution $\rho$ that places mass $1/6$ on each of the six trigrams $001, 010, 102, 021, 210, 100$ and zero on the remaining $21$ trigrams.
These six trigrams form a simple cycle in $B(2,3)$, so $\rho$ is an extreme point of~$\mathcal{F}_3$.
But marginalising to bigrams, the bigram $(1,0)$ arises from two trigrams ($102$ and $100$), receiving mass $2/6 = 1/3$, while the remaining four bigrams each receive mass~$1/6$.
This non-uniform distribution is not the uniform distribution on any simple cycle in $B(1,3)$, hence it is not an extreme point of~$\mathcal{F}_2$.
Among the $148$ extreme points of $\mathcal{F}_3$ for $s = 3$, exactly $36$ lose extremality when projected to bigrams.

\paragraph{Why this asymmetry matters in practice.}
The upward--downward contrast is not merely a mathematical curiosity.
In a real text ecosystem, agents maintain a corpus by reading and publishing with some finite context window.
The asymmetry says that when the context window is \emph{increased}---for example, when a new generation of models can attend to longer passages---the statistical structure already present in the corpus is preserved: the longer-window agent can faithfully maintain the existing text statistics while potentially capturing additional structure.
When the context window is \emph{reduced}, however, the finer-grained statistics of the corpus generally cannot be maintained: a shorter-window agent loses access to distinctions that required the longer context, and the corpus drifts toward a coarser equilibrium.
In short, upgrading context is safe; downgrading context is lossy.

\subsection{Shallow and deep token distributions}\label{sec:shallow-text}

The concepts of shallow and deep are properties of probability distributions over token windows, not of individual texts.
In practice, a trained language model implicitly defines such a distribution: for any window length~$r$, the model assigns a probability to every $r$-token block.
A specific neural architecture (such as a transformer) is a proxy for this distribution; the definitions below apply to the distribution itself, regardless of how it is represented.

\begin{definition}[$n$-shallow and $n$-deep]\label{def:n-shallow}
Let $\mathcal{D}$ be a probability distribution over token sequences from a fixed vocabulary of $s$ tokens.
For any window length~$r$, write $\mathcal{D}_r$ for the induced distribution over contiguous $r$-token blocks, and write $\mathcal{D}_n$ for the induced distribution over $n$-token blocks.

The distribution~$\mathcal{D}$ is \emph{$n$-shallow} if for every $r \ge n$ the $r$-token distribution~$\mathcal{D}_r$ can be recovered from $\mathcal{D}_n$ alone: marginalising $\mathcal{D}_r$ to $n$-token blocks and rolling forward via the order-$(n{-}1)$ continuation law extracted from $\mathcal{D}_n$ reproduces $\mathcal{D}_r$ exactly.
Equivalently, $\mathcal{D}$ is $n$-shallow if and only if the next token depends only on the preceding $n{-}1$ tokens.

The distribution~$\mathcal{D}$ is \emph{$n$-deep} if it is not $(n{-}1)$-shallow: the distribution over $n$-token (or longer) blocks cannot be recovered from any window shorter than~$n$.
\end{definition}

An $n$-shallow distribution has no predictive structure beyond what a window of $n$ tokens can see.
Any agent with a longer context window---whether an $n$-gram model with lookahead or a transformer with a larger attention span---extracts no benefit from the additional context.
Conversely, an $n$-deep distribution contains genuine structure that shorter windows miss: a longer context window gives the agent a real advantage.

These are properties of the distribution, not of any particular sample or model architecture.
In the population-limit setting considered here, any sufficiently expressive next-token learner trained on an $n$-shallow environment has no further predictive gain from using a context window longer than $n{-}1$.
A learner trained on an $n$-deep corpus can, in principle, exploit the full depth.

\paragraph{Application to $n$-grams.}
In the $n$-gram setting, the distribution $\mathcal{D}_r$ is a distribution over $r$-grams, and the continuation law is the conditional $p(w \mid c) = \rho(cw)/\rho_{n-1}(c)$.
The \emph{project--lift test} makes $n$-shallowness checkable: given an $r$-gram distribution $\rho_r$ (with $r > n$), marginalise to $\rho_n$ and then \emph{lift}---extract the order-$(n{-}1)$ continuation law from $\rho_n$ and compute the $r$-gram distribution it implies.
Write $\mathrm{lift}_r(\rho_n)$ for the result.
Then $\rho_r$ is $n$-shallow if and only if $\mathrm{lift}_r(\rho_n) = \rho_r$.
When the equality fails, the gap $\rho_r - \mathrm{lift}_r(\rho_n)$ quantifies exactly the structure that lies beyond the $n$-gram window.

\paragraph{Example.}
Consider $s = 2$ and the $4$-gram distribution that places mass $1/3$ on each of $0010$, $0100$, $1001$ and zero elsewhere (the extreme point corresponding to the period-$3$ cycle in $B(3,2)$).

\begin{itemize}[leftmargin=*]
\item \emph{$3$-shallow}: marginalising to trigrams gives mass $1/3$ on each of $001, 010, 100$.  Lifting back via the trigram continuation law recovers the original $4$-gram distribution exactly, because every bigram context determines a unique next token.
\item \emph{Not $2$-shallow} (hence $3$-deep): marginalising to bigrams gives mass $1/3$ on each of $00, 01, 10$.  The bigram continuation law has $p(0 \mid 0) = 1/2$ and $p(1 \mid 0) = 1/2$, so lifting to $4$-grams produces $0000 \to 1/18$, $0010 \to 1/9$, and other $4$-grams that were not present in the original.  The bigram model spreads probability over $4$-grams that the true distribution excludes.
\end{itemize}
\paragraph{Measuring depth.}
The project--lift test gives a binary verdict ($n$-shallow or not), but the mismatch between the original distribution and the lifted version can also be \emph{measured}.
If $\mathcal{D}_r$ is the true $r$-token distribution and $\mathrm{lift}_r(\mathcal{D}_n)$ is the rollout from its induced order-$n$ continuation law, the KL divergence
\[
\KL\bigl(\mathcal{D}_r \,\big\|\, \mathrm{lift}_r(\mathcal{D}_n)\bigr)
\;=\;
H_{\times}\bigl(\mathcal{D}_r,\, \mathrm{lift}_r(\mathcal{D}_n)\bigr)
\;-\;
H(\mathcal{D}_r)
\]
decomposes depth into the cross-entropy of the true distribution against the $n$-gram rollout minus the entropy of the true distribution itself.
This quantity is zero if and only if the distribution is $n$-shallow; when it is positive, it measures exactly how much structure lies beyond the $n$-gram window.
Section~\ref{sec:info-diagnostics} develops this into a full diagnostic framework and applies it to matched descriptive-versus-normative experiments.

\subsection{Section 3 in brief}

Projecting fixed points to shorter or longer window lengths reveals an important asymmetry: increasing context preserves structure, while decreasing context is generally lossy.
This leads naturally to the distinction between $n$-shallow and $n$-deep distributions.
The project--lift test provides an exact criterion: a corpus is $n$-shallow precisely when its $r$-gram distribution is recovered by rolling forward from its induced order-$n$ continuation law.

\newpage
\section{Selection, publication rules, and Theorems 2--3}\label{sec:selection-theory}

\subsection{Orientation}

We now turn from properties of distributions to publication dynamics.
The question is not only what structure is present in a corpus, but whether selection preserves it, erases it, or creates an incentive for longer context.
Descriptive publication recycles visible text and drives the corpus toward $n$-shallowness.
Normative publication scores futures and can stabilise environments that need not be $n$-shallow.
Theorem~2 gives the fixed-point picture; Theorem~3 explains why later learners inherit the resulting public conditional.

\subsection{Descriptive versus normative publication}\label{sec:tut3-desc-norm}

Before introducing the formal selection machinery, we separate two fundamentally different kinds of publication rules, because the outcome of recursive publication depends entirely on which kind is in play.

\paragraph{Descriptive publication.}
In a \emph{descriptive} rule, agents may use any internal reasoning---including lookahead and chain-of-thought---but they do not apply external quality criteria (such as correctness checks, novelty requirements, or verification against outside sources) when deciding what to publish.
Some agents publish $r$-grams sampled directly from the current corpus; others generate new text from the induced order-$n$ continuation law, possibly after extensive internal deliberation.
What makes the rule \emph{descriptive} is not the absence of computation but the absence of normative standards: the agent accepts the statistical status quo and publishes accordingly.

Theorem~2 (stated below) shows that under descriptive publication, the corpus converges to an $n$-shallow distribution: the corpus $r$-gram distribution equals the rollout from its own induced order-$n$ continuation law, and lookahead becomes redundant.
Even agents that ``think'' before publishing cannot sustain depth beyond the $n$-gram window when no quality filter is applied.

\paragraph{Normative publication.}
In a \emph{normative} rule, agents score, filter, or verify their output against some quality criterion before publishing.
The criterion need not be external: it can be an outside check such as a unit test or experimental result, but equally an internal one such as deductive validity.
An agent that derives mathematical theorems from axioms is applying a normative rule---the proof must be valid---even though the criterion is entirely internal to the formal system.
The depth arises because the certificate (the proof) may need to be much longer than the agent's context window, so the published text carries structure that no short-context model can reproduce.

Under normative publication, the fixed-point corpus need not be $n$-shallow: the acceptance filter can sustain statistical structure beyond what the agent's $n$-gram window captures, and lookahead remains beneficial.

\paragraph{Why this matters.}
The descriptive/normative distinction determines whether recursive publication compresses or preserves structure.
Descriptive selection is self-defeating: it drives the corpus toward shallowness, erasing the very structure that made lookahead useful.
Normative selection can be self-sustaining: it maintains deep structure that rewards continued lookahead.
In real text ecosystems, both kinds of publication coexist---some agents generate entirely new text while others recycle, paraphrase, or rank existing text---and the balance between descriptive and normative forces shapes the long-run corpus.

\subsection{Agents, acceptance, and the selection mechanism}\label{sec:tut3-acceptance}

Consider a population of agents that all read the same public corpus and publish text back into it. An \emph{ordinary agent} fits an $n$-gram continuation law to the current corpus and generates text token by token from that law. A \emph{lookahead agent} uses the same fitted continuation law but reweights each candidate token by the probability that its continuation will be \emph{accepted}---that is, that the generated trace will satisfy some criterion over the next $L$ steps. The ordinary agent simply imitates; the lookahead agent selects.
\paragraph{The acceptance-conditioned continuation law.}
Let $p(a \mid c)$ be the fitted continuation law and let $E$ be an acceptance event---a criterion on future traces. For context $c$ and candidate token $a$, define the \emph{acceptance value}
\[
w(a \mid c) := \Prob_p(E \mid C_0 = c,\, A_1 = a)
\]
and the \emph{total acceptance probability}
\[
V(c) := \sum_b p(b \mid c)\, w(b \mid c).
\]
The \emph{acceptance-conditioned continuation law} is
\begin{equation}\label{eq:tut3-acceptance}
q(a \mid c) := \frac{p(a \mid c)\, w(a \mid c)}{V(c)} = \Prob_p(A_1 = a \mid E,\, C_0 = c).
\end{equation}
The second equality is Bayes' rule: $q$ is the conditional distribution of the first token given that the continuation will be accepted.

\begin{remark}[Unfolding the acceptance value for survival]
When the acceptance event requires the trace to survive for $L$ further steps, the acceptance value is a finite-horizon survival probability on the context graph. Conditioning on $A_1=a$, the first token is fixed, so the remaining uncertainty begins after the updated context
\[
c_1 = \sigma(c,a).
\]
Equivalently,
\[
w(a \mid c)
=
\sum_{a_2,\ldots,a_L}
\left(\prod_{i=2}^{L} p(a_i \mid c_{i-1})\right)
\left(\prod_{i=1}^{L} \1[\text{no failure at step } i]\right),
\]
where
\[
c_i = \sigma(c_{i-1}, a_i)\qquad (i \ge 2).
\]
In the finite-state setting, these values can be computed by dynamic programming through the survival probabilities $h_\ell(c)$.
\end{remark}

\paragraph{What counts as acceptance?}
The framework is deliberately general. In the Conan Doyle trigram illustration, acceptance means that the generated trajectory survives at full trigram order for at least $L$ more steps without being forced to back off, that is, without reverting from full-order trigram generation to a shorter-context rule. In verifier-rich domains, acceptance can mean that a unit test passes, a proof checker accepts, or a constraint is satisfied. Equation~\eqref{eq:tut3-acceptance} is a target conditional distribution, not a claim that any agent literally enumerates every $L$-step future: in an LLM system, the same effective bias can arise from chain-of-thought reasoning, sampling several candidate traces and discarding failures, or tool-assisted verification.

\begin{example}[Survival-based acceptance in the trigram illustration]
With $n = 3$, define the \emph{alive set} $\mathcal{C}_{\mathrm{alive}}$ as the set of bigram contexts with at least one outgoing sampled trigram. The acceptance event is $L$-step survival:
\[
E_L = \{\tau_{\mathcal{C}_{\mathrm{alive}}} > L\},
\]
where $\tau$ is the first time the generator is forced to back off from trigram generation to a shorter-context rule. Then
\[
w(a \mid c) = \1[\sigma(c,a) \in \mathcal{C}_{\mathrm{alive}}]\, h_{L-1}(\sigma(c,a)),
\]
where
\[
h_\ell(c) = \Prob_p(\tau > \ell \mid C_0 = c)
\]
is the $\ell$-step survival probability, computable by dynamic programming on the finite support graph.
\end{example}

\subsection{Why hard rules are analytically fragile}\label{sec:tut3-hard-soft}

At the cross-entropy baseline ($T=1$) used throughout this paper, the main analytical complication is back-off rather than selection sharpness.
But for lower-temperature or argmax ($T=0$) publication rules, a hard threshold makes the update map non-differentiable or even discontinuous: an arbitrarily small change in the fitted continuation law may switch which continuation wins and hence abruptly change what enters the corpus.
Smoothing the selection rule becomes essential for convergence analysis.

A soft publication rule replaces exact exclusion by continuous reweighting.
Near-miss continuations still retain positive mass, but with lower weight than better ones.
This makes the induced publication law vary smoothly with the fitted continuation law and is therefore much easier to analyse.
In favourable cases the resulting update operator is even contractive, yielding uniqueness and convergence of the fixed point.

One convenient family is Gibbs or Boltzmann reweighting.
Given a utility $U(p,c,x)$ assigned to a candidate continuation $x$ from context $c$, define
\[
\Pi_{\beta,p}(x \mid c)
:=
\frac{\exp(\beta\, U(p,c,x))}
{\sum_y \exp(\beta\, U(p,c,y))}.
\]
Here $\beta>0$ controls the sharpness of selection: small $\beta$ gives a permissive rule, while large $\beta$ approaches winner-take-all behaviour.
In this sense, hard publication is the singular limit of soft publication.

The same principle appears in probabilistic modelling: conditioning on an event that can have zero or near-zero probability is often unstable, whereas continuous likelihood reweighting is much better behaved.

\subsection{Theorem 2: fixed points under selection}\label{sec:tut3-theorem2}

The key question is whether repeated publication can settle to a stable distribution.

\begin{theorem}[Fixed points under selection]\label{thm:tut3-fixedpoint}
Let a population of $n$-gram agents with $L$-step lookahead (equivalently, $r$-gram agents with $r = n + L$) publish into a shared corpus.

\emph{Part (a): Descriptive publication.}
A fixed-point corpus $r$-gram distribution $\rho^\star$ satisfies
\[
\rho^\star = G_r(R_n(\rho^\star)),
\]
where $R_n$ refits an order-$n$ continuation law from the corpus and $G_r$ generates the corresponding $r$-gram rollout.
Equivalently, the fixed points are exactly the $n$-shallow distributions inside the fixed-point polytope $\mathcal F_r$: the corpus $r$-gram distribution coincides with the rollout from its induced order-$n$ continuation law.
Descriptive selection therefore drives the corpus toward $n$-shallowness, so lookahead becomes redundant at equilibrium.

\emph{Part (b): Normative publication.}
For the soft normative rules analysed here---those whose quality standard demands structure that no order-$n$ continuation law can produce---the fixed-point corpus is not $n$-shallow: the corpus $r$-gram distribution retains genuine structure beyond the $n$-gram window, and the KL divergence between the corpus distribution and the rollout from its induced order-$n$ continuation law is strictly positive at the fixed point.
Moreover, the KL divergence is bounded above by $L\log_2 s$ bits, and this bound is optimal: it is attained by cyclic de~Bruijn constructions (Section~\ref{sec:extremal-projectlift-gap}).
Normative selection is self-sustaining: it maintains deep structure that rewards continued lookahead.
\end{theorem}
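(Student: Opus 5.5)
The proof splits into the two parts of the statement, and in both I would work at the level of the exact distributional recursion ($M\to\infty$), as in Theorem~\ref{thm:circulation}.

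\textbf{Part (a).} Under descriptive publication the next-generation $r$-gram law is $G_r(R_n(\rho))$. Whatever internal deliberation the agent performs, it publishes rollouts of the order-$n$ continuation law refitted from the corpus. So the fixed-point equation is $\rho^\star=G_r(R_n(\rho^\star))$ by definition, and the work lies in identifying this set. The argument runs in four steps.
\begin{itemize}
\item \emph{Fixed points are $n$-shallow.} Write $\rho^\star_n$ for the $n$-marginal of $\rho^\star$. Then $G_r(R_n(\rho^\star))=\mathrm{lift}_r(\rho^\star_n)$, so the fixed-point equation is exactly the project--lift criterion of Section~\ref{sec:shallow-text}.
\item \emph{They lie in $\mathcal F_r$.} Any rollout of a stationary order-$n$ chain is shift-invariant, so its left and right $(r{-}1)$-marginals agree. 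This is the flow-balance condition, so $\rho^\star\in\mathcal F_r$.
\item \emph{Conversely, $n$-shallow points of $\mathcal F_r$ are fixed.} If $\rho\in\mathcal F_r$ is $n$-shallow, its $n$-marginal lies in $\mathcal F_n$. Refitting and rolling out then returns $\mathrm{lift}_r(\rho_n)=\rho$.
\item \emph{Dimension.} The map $\rho\mapsto\rho_n$ is a bijection between this set and $\mathcal F_n$, with inverse $\mathrm{lift}_r$. By Theorem~\ref{thm:circulation}(a) the dimension is therefore $s^{n-1}(s-1)$.
\end{itemize}
Non-convexity for $r>n$ follows from a two-point example. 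Mix two $n$-shallow cycle distributions that share an $(n{-}1)$-context but have different successors. The mixture is in $\mathcal F_r$, but its lift randomises at the shared context, so the mixture is not $n$-shallow. Redundancy of lookahead is immediate: at a fixed point the published law equals the order-$n$ rollout.

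\textbf{Part (b).} The key steps are existence of a fixed point, strict positivity of the KL divergence, and the upper bound with its extremal construction.
\begin{itemize}
\item \emph{Existence.} For a soft (Gibbs) rule $\Pi_{\beta,p}$, the update map is a continuous self-map of the compact convex set $\mathcal F_r$, provided the utility $U$ is continuous in $p$, which is the stated regularity assumption. Brouwer then gives a fixed point.
\item \emph{Strict positivity.} This is essentially the hypothesis. If the fixed point $\rho^\star$ were $n$-shallow, it would equal $\mathrm{lift}_r(\rho^\star_n)$, which is the rollout of some order-$n$ law. That contradicts a quality standard demanding structure no order-$n$ law produces. The quantity $\KL(\rho^\star\,\|\,\mathrm{lift}_r(\rho^\star_n))$ vanishes iff shallowness holds, so it is strictly positive.
\item \emph{Upper bound.} Write $\rho=\rho^\star$. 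The chain rule gives $\KL(\rho\,\|\,\mathrm{lift}_r(\rho_n))=\sum_{i=n+1}^{r}\bigl[H(X_i\mid X_{i-n+1}^{i-1})-H(X_i\mid X_1^{i-1})\bigr]$. The first $n$-block terms cancel because the two laws share the $n$-marginal. Each summand is a conditional mutual information, bounded by $H(X_i\mid X_{i-n+1}^{i-1})\le\log_2 s$. There are $L$ summands, which gives $L\log_2 s$.
\item \emph{Attainment.} Take the uniform distribution on the $s^{r-1}$ cyclic $r$-windows of a de~Bruijn word of order $r{-}1$. This is wrong as stated; I would instead use the construction making the $n$-marginal uniform with each conditional uniform, while the true law is deterministic beyond the first $n$ tokens. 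Concretely, take the windows of a de~Bruijn sequence of order $n$ placed on a cycle so that $X_{n+1},\dots,X_r$ are determined by $X_1^n$. Then each term $H(X_i\mid X_1^{i-1})$ is $0$ and each $H(X_i\mid X_{i-n+1}^{i-1})$ is $\log_2 s$.
\end{itemize}

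The main obstacle is the attainment step. I must choose the cyclic construction so that the $n$-marginal has uniform conditionals, with every $n$-window of the de~Bruijn cycle of order $n$ appearing exactly once, so the induced order-$(n{-}1)$ continuation law is uniform. At the same time the full $r$-window must be deterministic given its first $n$ tokens. Checking this simultaneously is the delicate point, along with verifying that the extremal law lies in $\mathcal F_r$ and is realisable as a limit of soft normative rules. I would verify it first on $s=2$, $n=2$, $L=1$ before generalising.
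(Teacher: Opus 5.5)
Your part~(b) is the paper's argument. Strict positivity is read off from the hypothesis on the normative standard, as in Section~\ref{sec:section5-brief}. The bound is the same chain-rule decomposition into conditional mutual informations $I_\rho(X_t;X_1^{t-n}\mid X_{t-n+1}^{t-1})\le\log_2 s$. Your corrected witness is exactly the paper's uniform law on the $s^n$ cyclic $r$-windows of an order-$n$ de~Bruijn word.

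The step you call the main obstacle is immediate:
\begin{itemize}
\item Each $n$-gram occurs once per period (Lemma~\ref{lem:debruijn-euler}). So every length-$n$ sub-window of this law is uniform, the induced continuation law is uniform, and the lift is uniform on $\Sigma^r$.
\item The $s^n$ windows are distinct because their length-$n$ prefixes are. So each has mass $s^{-n}$, and $\KL=\log_2(s^{-n}/s^{-r})=(r-n)\log_2 s$.
\item Membership in $\mathcal F_r$ is automatic for a cyclic window law.
\item You need not realise the witness through a soft normative rule, since sharpness refers to the bound over all corpus laws.
\end{itemize}

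For part~(a) you are more explicit than the paper. The paper treats the equivalence as definitional, works out only $s=2$, $n=2$, $r=3$, and asserts $\dim\mathcal M_{r,n}=s^{n-1}(s-1)$ without proof. Your map $\rho\mapsto\rho_n$ with inverse $\mathrm{lift}_r$ does justify this count once you add that $\mathrm{lift}_r$ is continuous on $\mathcal F_n$. This holds because a path through a context of mass $\varepsilon$ has stationary mass at most $\varepsilon$, so the bijection is a homeomorphism.

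You should state explicitly, though, that two of your steps rely on shift-invariance of the window law, i.e.\ on $G_r$ rolling out from the stationary context law as in Section~\ref{sec:worked-examples}. These are your \emph{they lie in $\mathcal F_r$} step (and hence the dimension count for the fixed-point set) and your chain-rule identity. The prefix-marginal rollout used in the proof of Proposition~\ref{prop:extremal-projectlift-gap} behaves differently in two ways:
\begin{itemize}
\item Rollouts from full-support but unbalanced $n$-gram laws are also fixed points.
\item For non-stationary $\rho$ the cross-entropy terms can exceed the conditional entropies. A point mass on $0^{r-1}1$ with $n=2$ has infinite KL gap.
\end{itemize}
The paper relies on the same tacit assumption.

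Two supplementary steps need repair.

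\textbf{Non-convexity.} Your example is false as stated. Randomising at a shared context changes the lift only if an $r$-window can see where the two cycles' histories differ. Take $s=2$, $n=3$, $r=4$, and the cycles $V_4$ ($\ldots001\ldots$) and $V_6$ ($\ldots0011\ldots$). They share the context $01$ with different successors, but both reach it along $10\to00\to01$. A direct check shows that every mixture $\lambda\,\mathrm{lift}_4(V_4)+(1-\lambda)\,\mathrm{lift}_4(V_6)$ coincides with the lift of its own trigram marginal. So the whole segment is $3$-shallow, and likewise for $r=5$. A correct example uses cycles that also enter the shared context $c$ along different edges $xc\neq x'c$. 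Then the $(n{+}1)$-gram $x\,c\,a_2$, with $a_2$ the second cycle's successor of $c$, has positive lift mass but zero mixture mass. For instance, $001$ works for $V_1$ and $V_3$ at $n=2$, $r=3$.

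\textbf{Existence.} Your Brouwer step is not part of the paper's argument, since Theorem~\ref{thm:tut3-fixedpoint}(b) presupposes the fixed point. As written it also does not go through, for two reasons. First, $R_n$ is discontinuous where context masses vanish, so continuity of $U$ in $p$ does not give continuity in $\rho$. Second, normative output need not lie in $\mathcal F_r$; the paper's random Dirichlet target generically does not. If you keep this step, work on the whole simplex and assume continuity of the composite update.
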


\paragraph{Worked example (Part~(a)): descriptive publication by bigram agents with one-step lookahead.}
We illustrate Part~(a) of Theorem~2 by expanding the binary trigram example from Section~\ref{sec:worked-binary-trigram}.
Consider bigram agents ($n=2$) with one-step lookahead ($L=1$, so $r=3$) under descriptive publication.
The polytope $\mathcal F_3$ is the $4$-dimensional trigram polytope with six extreme points $V_1,\dots,V_6$ described earlier, and the descriptive fixed-point condition becomes
\[
\rho^\ast = G_3(R_2(\rho^\ast)),
\]
so the trigram distribution must equal the trigram rollout from its induced bigram continuation law.

Away from the singular corner $(\theta_0,\theta_1)=(1,0)$, a binary bigram continuation law $p$ is parameterised by two probabilities,
\[
\theta_0=p(0\mid 0),\qquad \theta_1=p(0\mid 1),
\]
with the complementary transitions $p(1\mid 0)=1-\theta_0$ and $p(1\mid 1)=1-\theta_1$.
The unique stationary distribution on contexts is
\[
\pi(0)=\frac{\theta_1}{1-\theta_0+\theta_1},\qquad
\pi(1)=\frac{1-\theta_0}{1-\theta_0+\theta_1}.
\]
The corresponding trigram distribution is $\rho^\ast(abc)=\pi(a)\,p(b\mid a)\,p(c\mid b)$, giving the eight values
\[
\begin{aligned}
\rho(000) &= \pi(0)\,\theta_0^2, &\quad
\rho(001) &= \pi(0)\,\theta_0(1{-}\theta_0), \\
\rho(010) &= \pi(0)\,(1{-}\theta_0)\,\theta_1, &\quad
\rho(011) &= \pi(0)\,(1{-}\theta_0)(1{-}\theta_1), \\
\rho(100) &= \pi(1)\,\theta_1\,\theta_0, &\quad
\rho(101) &= \pi(1)\,\theta_1\,(1{-}\theta_0), \\
\rho(110) &= \pi(1)\,(1{-}\theta_1)\,\theta_1, &\quad
\rho(111) &= \pi(1)\,(1{-}\theta_1)^2.
\end{aligned}
\]
As $(\theta_0,\theta_1)$ ranges over $[0,1]^2$, this traces out a $2$-dimensional surface $\mathcal M_{3,2}$ inside the $4$-dimensional polytope $\mathcal F_3$.
The three vertices of $\mathcal F_3$ that lie on this surface are easily verified:
$(\theta_0,\theta_1)=(1,1)$ gives $\pi(0)=1$ and $\rho(000)=1$ (vertex~$V_1$);
$(0,0)$ gives $\pi(1)=1$ and $\rho(111)=1$ ($V_2$);
$(0,1)$ gives $\pi(0)=\pi(1)=\tfrac{1}{2}$ and $\rho(010)=\rho(101)=\tfrac{1}{2}$ ($V_3$, the alternating cycle).
The full set $\mathcal M_{3,2}$ is this parameterised surface together with its limiting edge $\mathrm{conv}\{V_1,V_2\}$ as $(\theta_0,\theta_1)\to(1,0)$.

Three features are worth recording:
\begin{enumerate}[leftmargin=*]
\item \emph{Dimension drop.} $\mathcal M_{3,2}$ is $2$-dimensional inside the $4$-dimensional polytope $\mathcal F_3$. More generally, $\dim \mathcal M_{r,n}=s^{n-1}(s-1)$, the same as $\dim \mathcal F_n$.
\item \emph{Non-convexity.} $\mathcal M_{3,2}$ is not convex: a mixture of two $2$-shallow distributions is generically not $2$-shallow, because the condition $\rho^\ast(abc)=\pi(a)\,p(b\mid a)\,p(c\mid b)$ imposes quadratic constraints on the trigram probabilities.
\item \emph{Extreme-point filter.} Of the six vertices $V_1,\dots,V_6$ of $\mathcal F_3$, only $V_1$ (all zeros), $V_2$ (all ones), and $V_3$ (the alternating cycle) lie in $\mathcal M_{3,2}$. The distributions corresponding to the period-$3$ and period-$4$ cycles have genuine trigram structure that cannot be generated by any bigram law.
\end{enumerate}

For example, for the cycle $001001\ldots$ one has $\rho(001)=1/3$, so $\pi(0)>0$.
Since $\rho(000)=\pi(0)p(0\mid 0)^2=0$, this forces $p(0\mid 0)=0$.
But then
\[
\rho(100)=\pi(1)p(0\mid 1)p(0\mid 0)=0,
\]
contradicting $\rho(100)=1/3$.

\paragraph{Worked example (Part~(b)): why normative publication can sustain depth.}
We now illustrate Part~(b) using the same setting ($s=2$, $n=2$, $r=3$).
The three vertices of $\mathcal F_3$ that are \emph{not} $2$-shallow are $V_4$, $V_5$, and $V_6$.
For each, we can compute the induced bigram law, the bigram rollout, and the KL divergence that measures the depth lost by the bigram window.
Writing trigram probabilities in the order $(000,\,001,\,010,\,011,\,100,\,101,\,110,\,111)$:

\medskip\noindent\emph{Vertex $V_4$ (cycle $001\ldots$, period~$3$).}
The distribution is $\rho = (0,\,\tfrac{1}{3},\,\tfrac{1}{3},\,0,\,\tfrac{1}{3},\,0,\,0,\,0)$.
The induced bigram law has $\theta_0 = \tfrac{1}{2}$, $\theta_1 = 1$, with stationary distribution $\pi(0)=\tfrac{2}{3}$, $\pi(1)=\tfrac{1}{3}$.
The bigram rollout is $\tilde\rho = (\tfrac{1}{6},\,\tfrac{1}{6},\,\tfrac{1}{3},\,0,\,\tfrac{1}{6},\,\tfrac{1}{6},\,0,\,0)$, which spreads probability to trigrams $000$ and $101$ that do not appear in the cycle.
The KL divergence is $D_{\KL}(\rho\,\|\,\tilde\rho) = \tfrac{2}{3}$ bits.

\medskip\noindent\emph{Vertex $V_5$ (cycle $011\ldots$, period~$3$).}
By the bit-flip symmetry $0\leftrightarrow 1$, the calculation mirrors $V_4$: $\theta_0 = 0$, $\theta_1 = \tfrac{1}{2}$, and $D_{\KL} = \tfrac{2}{3}$ bits.

\medskip\noindent\emph{Vertex $V_6$ (cycle $0011\ldots$, period~$4$).}
The distribution is $\rho = (0,\,\tfrac{1}{4},\,0,\,\tfrac{1}{4},\,\tfrac{1}{4},\,0,\,\tfrac{1}{4},\,0)$.
Here $\theta_0 = \theta_1 = \tfrac{1}{2}$, so the stationary distribution is $\pi(0) = \pi(1) = \tfrac{1}{2}$ and the bigram rollout is the \emph{uniform} distribution $\tilde\rho = (\tfrac{1}{8},\,\ldots,\,\tfrac{1}{8})$.
The KL divergence is $D_{\KL}(\rho\,\|\,\tilde\rho) = 4 \times \tfrac{1}{4}\log_2 2 = 1$ bit---the largest gap of any distribution in~$\mathcal F_3$.

\medskip
The pattern is clear: the longer the period of the cycle, the more trigram structure is invisible to the bigram window, and the larger the KL divergence.
A numerical search over the full polytope confirms that $V_6$ achieves the global maximum of $1$~bit.
Under descriptive publication, all three gaps would collapse to zero as the corpus is driven toward $\mathcal M_{3,2}$.
Under normative publication, an acceptance criterion that favours the relevant pattern can counteract this drift, sustaining trigram structure that the bigram window alone cannot generate and keeping the KL divergence positive.

\begin{remark}
The parameters in this example are small ($s=2$, $n=2$, $r=3$), but every ingredient---the fixed-point polytope, the $n$-shallow surface, the induced continuation law, and the KL divergence---is defined by finite-dimensional linear algebra and elementary probability.
For any specific distribution in $\mathcal F_r$, the KL divergence between the distribution and its order-$n$ rollout can be computed in closed form for general $s$, $n$, and $r$.
In particular, for specific normative publication rules with an explicit target, the depth of the resulting fixed point is an explicit function of the parameters, not merely a qualitative prediction.
\end{remark}

\subsection{From selection to inheritance}

The acceptance-conditioned continuation law matters because it is what later learners see in the environment.
Once the public corpus has been shaped by selection, an ordinary next-token learner does not need to re-perform the selection step: it trains on the filtered environment itself.
This is why Theorem~\ref{thm:tut3-cebridge} is the natural bridge beyond $n$-grams.
The inheritance claim is therefore about the target conditional written into the corpus, not about the internal mechanism that first produced it.

\subsection{Theorem 3: cross-entropy inheritance}\label{sec:tut3-theorem3}

\begin{theorem}[Cross-entropy inheritance]\label{thm:tut3-cebridge}
Let $q$ be a public next-token conditional generated by an environment published as a collection of texts.
Training a later learner by expected next-token cross-entropy recovers $q$ whenever the model class contains $q$.
More generally, when the class does not contain $q$, cross-entropy minimisation recovers the KL-closest conditional in that class.
\end{theorem}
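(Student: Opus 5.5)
The plan is to reduce the statement to the standard decomposition of expected cross-entropy into entropy plus an expected KL divergence (Gibbs' inequality). First I fix the objects. Because the environment is published as a collection of texts, it induces a well-defined joint law over pairs (context $c$, next token $a$). Let $\nu$ be the marginal law of contexts, meaning the frequency with which context $c$ occurs at a prediction position when we average over the collection. Then $q(a \mid c)$ is the public conditional, so the pair law is $\nu(c)\,q(a\mid c)$. The collection-of-texts formulation is used here rather than a single trajectory. Empirical pair frequencies from many independent texts converge to $\nu(c)q(a\mid c)$ by the law of large numbers. So the population objective is
\[
\mathcal{L}(m) \;=\; \E_{c\sim\nu}\,\E_{a\sim q(\cdot\mid c)}\bigl[-\log m(a\mid c)\bigr]
\]
for a candidate conditional $m$ in the model class $\mathcal{M}$.

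Second, I rewrite the objective pointwise in $c$:
\[
\mathcal{L}(m) \;=\; \E_{c\sim\nu}\bigl[H(q(\cdot\mid c))\bigr] \;+\; \E_{c\sim\nu}\bigl[\KL\bigl(q(\cdot\mid c)\,\big\|\,m(\cdot\mid c)\bigr)\bigr].
\]
The first term does not depend on $m$. Minimising $\mathcal{L}$ over $\mathcal{M}$ is therefore exactly minimising the $\nu$-averaged conditional KL. This gives the second claim directly: the minimiser is the KL-closest conditional in the class, with closeness measured by $\E_\nu \KL(q\|m)$. Existence of a minimiser needs a compactness or closedness assumption on $\mathcal{M}$. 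Otherwise I would state the claim for infima or minimising sequences.

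Third, for the realisable case $q\in\mathcal{M}$, Gibbs' inequality gives $\KL\ge 0$, with equality if and only if the two distributions coincide. So $q$ attains the minimum value $\E_\nu H(q)$. Any other minimiser $m$ must satisfy $m(\cdot\mid c)=q(\cdot\mid c)$ for $\nu$-almost every $c$. I will state recovery in exactly this sense. On contexts with $\nu(c)=0$, which never appear in the public record, the learner is unconstrained, and this matches the paper's back-off discussion.

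The main obstacle is the first step, not the calculus. I need to justify that the environment genuinely defines a context law $\nu$ and a conditional $q$, and that finite-sample training converges to the population objective. For a finite vocabulary and bounded context length, I would handle this by showing that empirical pair frequencies over independent documents converge almost surely. I would then restrict to models with log-probabilities bounded below, or handle the unbounded case via lower semicontinuity, so that the empirical losses converge to $\mathcal{L}$. I would also remark why a single long dependent trajectory is excluded, citing Proposition~\ref{prop:tut2-markov}: ergodicity is needed there, and it may fail when the kernel is not irreducible.
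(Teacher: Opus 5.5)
Your proof is correct. It is a more general version of the argument the paper actually gives, built on the same mechanism.

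The paper gives no formal proof. Its justification after the statement covers only the tabular special case. For an unconstrained order-$n$ table, the per-context relative frequencies $\hat p(a\mid c)=N(c,a)/N(c)$ are exactly the minimisers of the \emph{empirical} next-token cross-entropy. So a table rich enough to represent $q$ recovers it in the large-data limit, and with finite data one gets the empirical conditional instead. The misspecified clause is simply asserted.

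You instead work with the population objective over an arbitrary class, using $\mathcal{L}(m)=\E_{\nu}H(q(\cdot\mid c))+\E_{\nu}\KL(q(\cdot\mid c)\,\|\,m(\cdot\mid c))$. This is the same Gibbs-inequality mechanism, but it proves both clauses at once and makes explicit what the statement leaves implicit:
\begin{itemize}
\item ``KL-closest'' means the $\nu$-weighted conditional KL, equivalently the KL between the joint pair laws $\nu q$ and $\nu m$.
\item Recovery holds only on contexts with $\nu(c)>0$.
\item In the misspecified case a minimiser exists once the class is closed. For finite vocabulary and bounded context length, the conditionals form a compact product of simplices and $m\mapsto\E_\nu\KL(q\,\|\,m)$ is lower semicontinuous.
\end{itemize}

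The paper's route gives an exact, non-asymptotic identity tied directly to the refit operator $R_n$ used throughout. Yours gives the architecture independence the theorem is meant to supply as the bridge beyond $n$-grams.

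Your finite-sample convergence step is extra rather than required, since the statement concerns the expected objective. It is needed only to connect the population minimiser to training on finite data.
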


This is the bridge beyond $n$-grams.
For an unconstrained order-$n$ $n$-gram table, the theorem is just relative-frequency estimation stated in cross-entropy language.
Fix a context $c$ and write $\hat p(a \mid c)=N(c,a)/N(c)$ for the empirical next-token frequencies in the published environment.
Among all conditional distributions on that context, $\hat p(\cdot \mid c)$ is exactly the one that minimises the empirical next-token cross-entropy.
So if the $n$-gram class is large enough to represent the target conditional, and the learner later generates by \emph{sampling} from its fitted table, then it selects the next token from exactly the same conditional distribution singled out by cross-entropy minimisation.
With finite data this is the empirical conditional rather than the population limit $q$, and greedy decoding is different again because it uses only $\arg\max_a \hat p(a \mid c)$ rather than the full distribution.

\paragraph{The architecture-independent message.}
The key consequence is that different architectures trained on the same filtered corpus converge to the same target conditional, to the extent permitted by their representational capacity and optimisation.
A smoothed trigram agent and a small neural network, both trained on an environment shaped by selection, move toward the same public conditional~$q$.
What is inherited is the public conditional; optimisation and approximation remain architecture-dependent.

\paragraph{Sampling versus argmax publication.}
The theorem is about the fitted conditional, not about how that conditional is later decoded into text.
If a learner recovers $q$ and then publishes by sampling, the published next-token law is $q$ itself.
If instead it publishes greedily, the induced policy is
\[
\pi_{\max}(a \mid c)=
\begin{cases}
1, & a \in \arg\max_u q(u \mid c)\ \text{with fixed tie-breaking},\\
0, & \text{otherwise.}
\end{cases}
\]
So a population of argmax agents does \emph{not} republish $q$; it republishes a deterministic policy derived from $q$.
The argmax map is discontinuous at ties and near-ties: a small change in $q$ can switch the winner and abruptly change what enters the corpus.
This is the same fragility discussed in Section~\ref{sec:tut3-hard-soft} for hard publication rules, and it means that multiple fixed points, path dependence, and abrupt winner switches are generic possibilities under greedy decoding.
Section~\ref{sec:heterogeneous} works out the geometry of the $T=0$ (argmax) fixed-point set in detail: in place of the convex circulation polytope that arises at $T=1$, one obtains a union of simplices built from vertex-disjoint cycles.

\subsection{Section 4 in brief}

This section introduced publication dynamics.
Descriptive publication (no quality standard applied) drives the corpus toward $n$-shallowness; normative publication (output scored or verified against some criterion) need not.
Theorem~2 makes this precise.
The acceptance-conditioned continuation law provides the formal selection mechanism; soft rules are analytically tractable while hard rules introduce discontinuities.
Theorem~3 (cross-entropy inheritance) completes the arc: later learners trained on the filtered corpus recover the public conditional, regardless of architecture.

\newpage
\section{Project--lift diagnostics for Theorem 2}\label{sec:info-diagnostics}

\subsection{Orientation}

Section~\ref{sec:shallow-deep} introduced the project--lift test: compare a corpus $r$-gram distribution with the rollout from its induced order-$n$ continuation law. This section applies that comparison dynamically along the recursion. The central distinction is between two questions that can come apart:
\begin{itemize}[leftmargin=*]
\item Does the recursion converge?
\item Does it converge to an $n$-shallow corpus?
\end{itemize}
The primary diagnostics are the $L^1$ and KL gaps between the corpus distribution and its project--lift rollout; entropy is a useful secondary contextual quantity.

\subsection{The project--lift diagnostics}

When the corpus distribution at generation $t$ is $\rho_t$ on length-$r$ blocks, the natural strong comparison object is the rollout from its induced order-$n$ continuation law:
\[
\widetilde{\rho}_t := G_r(R_n(\rho_t)).
\]
This gives two canonical diagnostics:
\[
D_t^{(1)} := \|\rho_t - \widetilde{\rho}_t\|_1,
\qquad
D_t^{(2)} := \KL(\rho_t \,\|\, \widetilde{\rho}_t).
\]

The $L^1$ gap measures total probability mismatch.
The KL divergence measures how surprising the corpus $r$-gram distribution would be if one insisted that it had been generated by the induced order-$n$ continuation law.
At a descriptive fixed point, both vanish.

Entropy is useful too, but it answers a different question.
The corpus $r$-gram entropy says how spread out the visible corpus is; the induced $n$-gram entropy says how spread out the fitted short-memory dynamics are.
Neither one, by itself, tells us whether the corpus $r$-gram distribution is actually order-$n$ generated.
For theorem testing, KL and $L^1$ are primary; entropy is secondary.

\subsection{Matched exact experiment: descriptive collapse versus normative plateau}

We illustrate the diagnostics on a deliberately matched pair of exact recursions, one descriptive and one normative, that differ only in the publication rule.
The experiment is designed to test whether the project--lift diagnostics can separate two situations that look superficially similar: both recursions converge, but only the descriptive one converges to an $n$-shallow corpus.

\paragraph{Construction.}
A seed order-$3$ continuation law $q_{\mathrm{seed}}$ is drawn by placing uniform mass on $25$ randomly chosen trigrams (out of $5^3 = 125$ possible).
The initial corpus $5$-gram distribution is the exact rollout $\rho_0 = G_5(q_{\mathrm{seed}})$, with $|\Sigma|=5$, $n=3$, $r=5$.
At each generation the entire corpus is replaced ($\alpha = 1$ in the notation of Section~\ref{sec:ngram-basics}): $80\%$ of the new material is contributed by the lookahead $r$-agents and $20\%$ by the order-$n$ agents who publish from the fitted continuation law.
In the descriptive case, the $r$-agents recycle the current corpus $5$-gram distribution.
In the normative case, they publish from a fixed target: an independent random $5$-gram distribution drawn from a Dirichlet prior over $30$ randomly chosen $5$-grams.
This target is genuinely $3$-deep---it was not generated by any order-$3$ continuation law---so a large persistent project--lift gap is structurally possible, and the normative run tests whether the diagnostics can detect it.
Both runs share the same initial corpus and the same random seed; only the publication rule differs.

(The figures in this section are generated by the exact script \path{build_theorem2_information_tutorial_case.py} in \path{GitHub/scripts/}.
The accompanying notebook \path{ngram_theorem2_strong_diagnostics_teaching.ipynb} in \path{GitHub/notebooks/basic_theory/} lets the reader experiment with different settings, including the vocabulary size, model order~$n$, block length~$r$, the lookahead mixing fraction, the seed initialisation mode (uniform, Dirichlet, or random text), and the choice between descriptive and normative publication rules.)

\paragraph{Results.}
By generation $40$, the descriptive run has driven the KL divergence below $10^{-5}$ and the $L^1$ gap below $3\times10^{-3}$.
The normative run, by contrast, has converged to a stable corpus distribution with KL divergence $2.57$ bits and $L^1$ gap $1.47$.
The two recursions therefore separate sharply: the descriptive diagnostics collapse, whereas the normative diagnostics stabilise at a nonzero plateau.
A KL gap of $2.57$ bits is not a small residual mismatch: on average, encoding a $5$-gram drawn from the normative equilibrium with the rollout from the induced trigram continuation law costs about $2.57$ extra bits per block.

\begin{figure}[t]
\centering
\includegraphics[width=\linewidth]{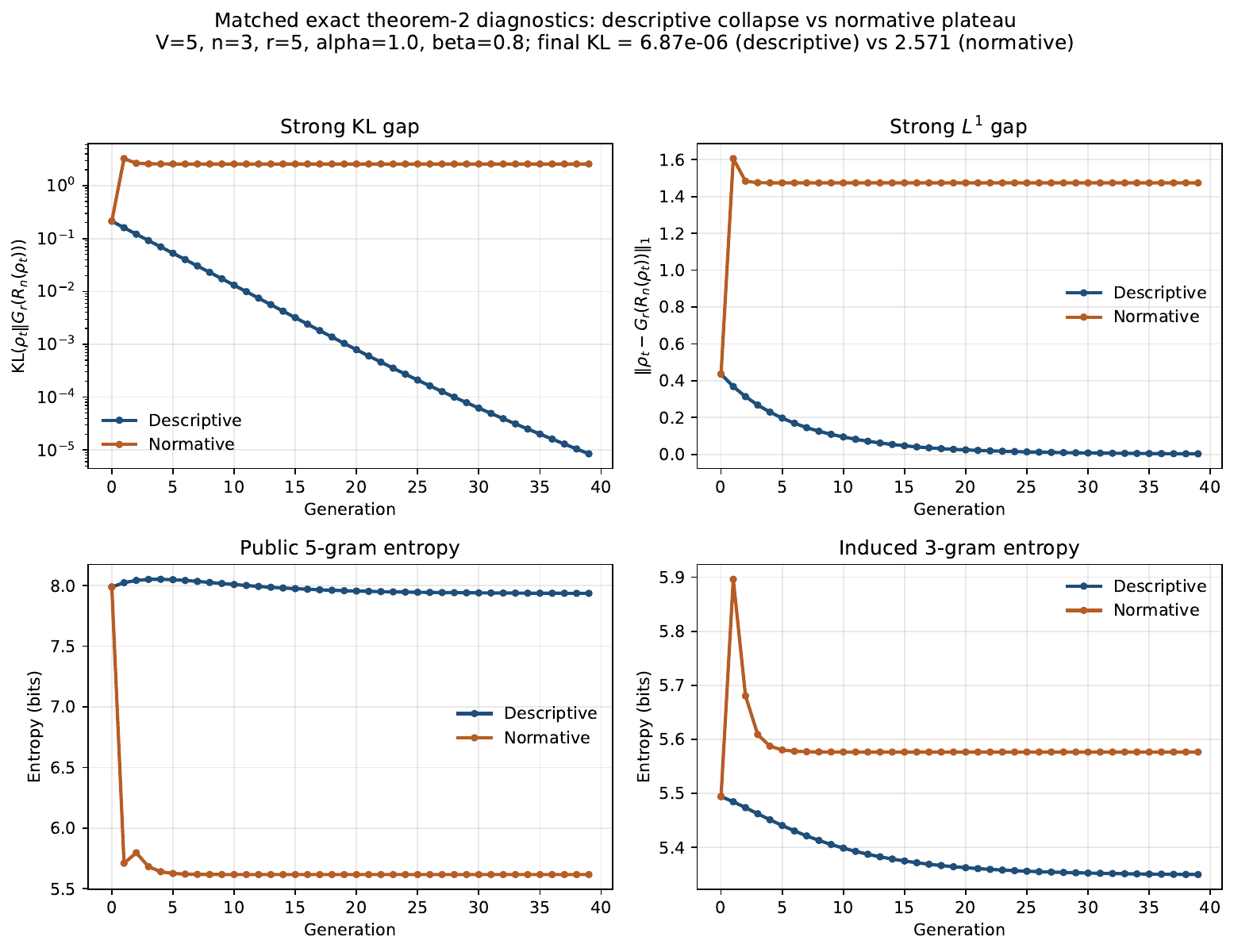}
\caption{\textbf{Matched exact theorem-2 diagnostics.}
The descriptive recursion drives both the KL divergence and the $L^1$ gap essentially to zero, while the normative recursion settles to a stable nonzero plateau.
The entropy panels show that the corpus distribution and induced trigram continuation law can both stabilise even when the KL divergence remains positive.}
\label{fig:tut4-info-diagnostics}
\end{figure}

\subsection{Convergence versus $n$-shallowness}

This is the right place to be careful with interpretation.
In the normative case, the issue is not failure of convergence.
The recursion does converge, but it converges to a corpus distribution that is not order-$n$ generated.
In the present matched run, the residual mismatch is also macroscopically large ($2.57$ bits of KL divergence), so this is not a near-shallow artefact but a genuine stable departure from $n$-shallowness.
This is exactly why convergence and $n$-shallowness are different questions: the normative recursion can stabilise without satisfying the descriptive fixed-point condition.

\subsection{Where the normative gap lives}

At the final normative equilibrium (recall that the tokens here are synthetic labels $0$--$4$, not real words), the largest single $5$-gram mismatch is on block \texttt{3-0-4-4-4}, whose probability differs by roughly $0.11$ between the corpus distribution and the rollout from the induced trigram continuation law.
Several other blocks, such as \texttt{1-3-2-2-4}, \texttt{4-0-3-3-3}, and \texttt{4-1-0-0-2}, retain comparably visible discrepancies.

Knowing that a gap exists is not enough; we want to know where it lives. Using the prefix/suffix decomposition from Section~\ref{sec:shallow-deep}, the total $L^1$ gap can be separated into a prefix-frequency term and a weighted suffix-conditional term. Any $5$-gram $w_1 w_2 w_3 w_4 w_5$ can be decomposed into its trigram prefix $w_1 w_2 w_3$ and the conditional suffix $w_4 w_5 \mid w_1 w_2 w_3$.

Because the normative target is genuinely $3$-deep, the gap now lives in both components: different prefix frequencies \emph{and} different conditional suffix laws given the same prefix.
Numerically, the prefix-frequency term contributes $1.06$ to the total $L^1$ gap, while the weighted suffix-conditional term contributes $1.11$.
This contrasts with a near-shallow target, where the suffix conditionals would already be aligned and only the prefix masses would differ.
Figure~\ref{fig:tut4-normative-gap} highlights the visible prefix contribution; the suffix-conditional component is also nonzero in this genuinely $3$-deep example.

\begin{figure}[t]
\centering
\includegraphics[width=\linewidth]{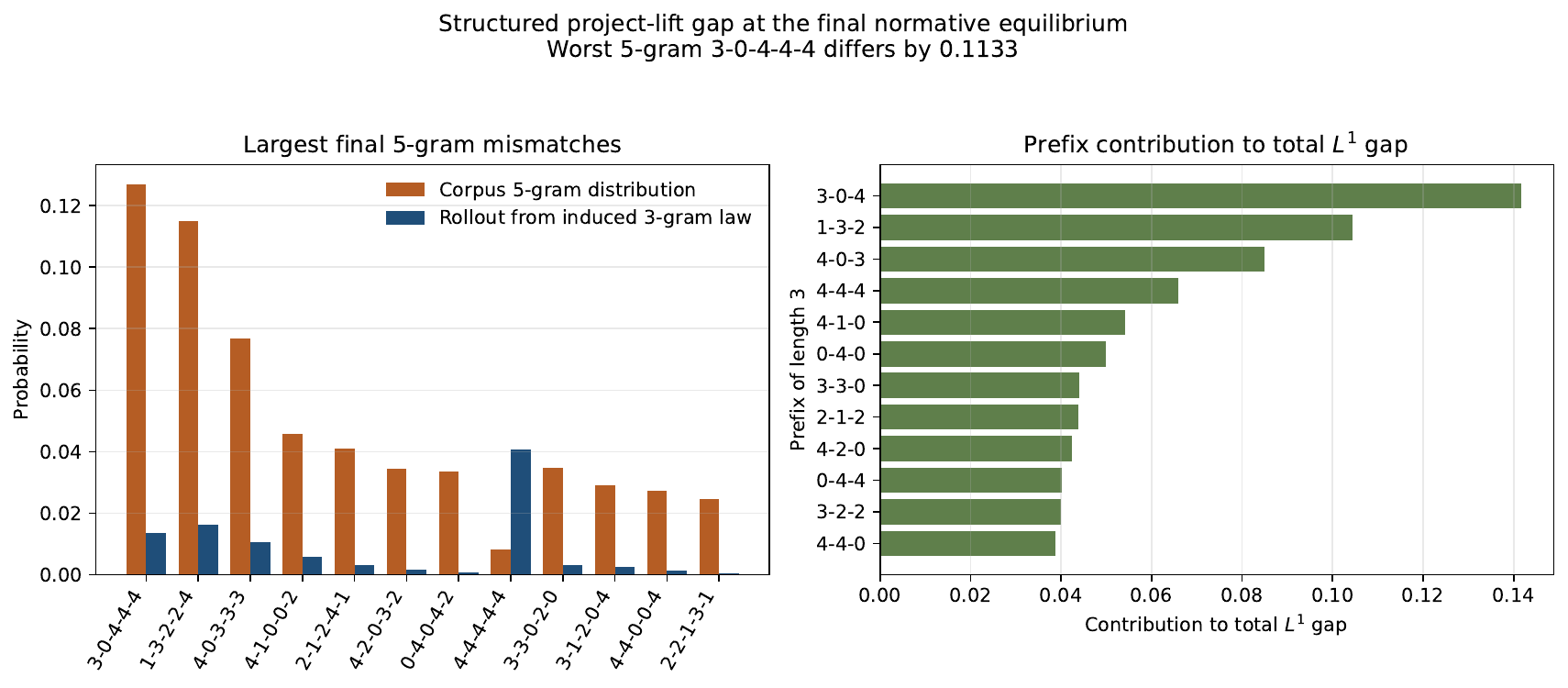}
\caption{\textbf{The stable normative gap has visible structure.}
Left: the largest final $5$-gram probability mismatches between the corpus distribution and the rollout from its induced trigram continuation law.
Right: the prefixes contributing most to the total $L^1$ gap.
The right panel visualises the prefix component only; in this genuinely $3$-deep target, the suffix-conditional component is also nonzero.}
\label{fig:tut4-normative-gap}
\end{figure}

This is why the project--lift diagnostics are useful.
They separate three questions that are easy to conflate:
\begin{itemize}[leftmargin=*]
\item Has the recursion converged?
\item Has the corpus $r$-gram distribution become order-$n$ generated?
\item If not, where is the residual mismatch located?
\end{itemize}

The matched exact run answers them cleanly.
In the descriptive case: yes, yes, nowhere substantial.
In the normative case: yes, no, and the residual mismatch lives in both prefix frequencies and suffix conditionals.

\subsection{An extremal project--lift KL gap}\label{sec:extremal-projectlift-gap}

The matched experiment above shows that a normative publication rule can converge to a stable corpus with
a substantial project--lift gap. It is natural to ask how large that gap can be in principle. The answer is
exact: for fixed alphabet size $s$ and hidden extra depth $L = r-n$, the block-level KL divergence is at most
$L \log_2 s$ bits, and this bound is sharp.

The sharpness witness is a cyclic de~Bruijn word. We first record why such a word exists.

\begin{lemma}[Eulerian de Bruijn cycle and induced cyclic word]\label{lem:debruijn-euler}
Let $\Sigma$ be an alphabet of size $s$, and let $B(n-1,s)$ be the de~Bruijn graph whose vertices are the
$(n-1)$-grams over $\Sigma$ and whose edges are the $n$-grams. Then:

\begin{enumerate}[leftmargin=*]
\item $B(n-1,s)$ has an Eulerian cycle of length $s^n$.
\item Any such Eulerian cycle induces a cyclic word $w$ of length $s^n$ whose cyclic length-$n$ windows are exactly the $s^n$ distinct $n$-grams over $\Sigma$, each appearing once.
\item The primitive period of $w$ is exactly $s^n$.
\end{enumerate}
\end{lemma}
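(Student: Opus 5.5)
We will prove the three parts in order, relying on Euler's theorem for directed graphs and the shift structure of $B(n-1,s)$.

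\emph{Part 1.} We check the two hypotheses of Euler's theorem: every vertex has equal in- and out-degree, and the graph is strongly connected.

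Each vertex $c=c_1\cdots c_{n-1}$ has exactly $s$ out-edges, namely the $n$-grams $c\cdot a$. It also has exactly $s$ in-edges, namely the $n$-grams $x\cdot c$. So in-degree equals out-degree equals $s$ everywhere. (This is the same flow-balance identity $L(c)=R(c)$ used in the proof of Theorem~\ref{thm:circulation}, applied here to the all-ones edge weighting.)

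For strong connectivity, take any two contexts $c,c'$. Starting from $c$ and appending the $n-1$ tokens of $c'$ one at a time reaches $c'$ in $n-1$ steps via the shift map $\sigma$.

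Euler's theorem for directed graphs then gives an Eulerian cycle. It traverses every edge exactly once, so its length is the number of edges, $s^n$.

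\emph{Part 2.} Write the Eulerian cycle as edges $e_0,\dots,e_{s^n-1}$, indices mod $s^n$, and let $w_i$ be the last token of $e_i$. We claim the cyclic window $w_{i-n+1}\cdots w_i$ equals $e_i$.

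Consecutive edges share a node: the head of $e_{i-1}$ equals the tail of $e_i$. Hence the first $n-1$ tokens of $e_i$ are the last $n-1$ tokens of $e_{i-1}$. Iterating this backwards $n-1$ times identifies each token of $e_i$ with the final token of one of $e_{i-n+1},\dots,e_i$, which proves the claim.

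Since the $e_i$ run through all $s^n$ edges exactly once, the cyclic $n$-windows of $w$ are all $s^n$ distinct $n$-grams, each appearing once.

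\emph{Part 3.} Let $p$ be the primitive period of the cyclic word; $p$ divides $s^n$. The window starting at position $i$ equals the window starting at position $i+p$. Because all $s^n$ windows are distinct, this forces $p\equiv 0 \pmod{s^n}$, so $p=s^n$.

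\emph{Main obstacle.} The only delicate point is the index bookkeeping in Part~2: verifying that the overlap propagates correctly around the cyclic wrap-around, including when $n-1\ge s^n$. The latter cannot happen for $s\ge2$, since $s^n\ge 2^n>n-1$. For $n=1$ the graph has a single vertex with $s$ loops, and the argument goes through trivially.
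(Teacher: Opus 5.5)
Your proposal is correct and follows the paper's proof step for step: in- and out-degree $s$ plus strong connectivity by appending the target context's symbols, Euler's criterion, reading off the last symbol of each edge, and distinctness of the $s^n$ windows forcing the primitive period. Your Part~3 shift argument (the windows at $i$ and $i+p$ coincide) is a direct version of the paper's count that a word of period $d$ has at most $d$ distinct cyclic windows. Your explicit index bookkeeping in Part~2 fills in what the paper states in one line about overlapping edges.
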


\begin{proof}
Each vertex has indegree and outdegree equal to $s$, since one may prepend or append any symbol of $\Sigma$.
The graph is strongly connected: from any vertex $c=c_1\cdots c_{n-1}$ to any vertex $d=d_1\cdots d_{n-1}$,
append $d_1,\dots,d_{n-1}$ one by one. Hence the directed Euler criterion applies, so $B(n-1,s)$ has an Eulerian
cycle. Its length is the number of edges, namely $s^n$.

Write the Eulerian cycle as a cyclic sequence of edges $e_0,\dots,e_{s^n-1}$. Consecutive edges overlap in
their last and first $(n-1)$ symbols, so reading the new symbol contributed by each edge produces a cyclic word
$w$ of length $s^n$. Its cyclic length-$n$ windows are exactly the edge labels, hence every $n$-gram appears
exactly once.

If $w$ had period $d<s^n$, then there could be at most $d$ distinct cyclic length-$n$ windows. But $w$ has
exactly $s^n$ such windows, one for each $n$-gram. Therefore $d=s^n$.
\end{proof}

\begin{proposition}[Extremal project--lift KL gap]\label{prop:extremal-projectlift-gap}
Fix an alphabet $\Sigma$ of size $s \ge 2$, integers $r > n \ge 1$, and any corpus $r$-gram distribution
$\rho$ on $\Sigma^r$. Let
\[
\widetilde{\rho} := G_r(R_n(\rho))
\]
be the rollout from the induced order-$n$ continuation law. Then
\begin{equation}\label{eq:projectlift-extremal-bound}
\KL(\rho \,\|\, \widetilde{\rho})
=
\sum_{t=n+1}^{r}
I_\rho\!\bigl(X_t;\,X_1^{\,t-n}\mid X_{t-n+1}^{\,t-1}\bigr)
\;\le\;
(r-n)\log_2 s
\quad\text{bits}.
\end{equation}
Equivalently, if $r=n+L$, then
\[
\KL(\rho \,\|\, \widetilde{\rho}) \le L \log_2 s \quad\text{bits}.
\]

The bound is sharp. Let $\omega$ be a cyclic de~Bruijn sequence of order $n$ over $\Sigma$, equivalently an
Eulerian cycle in the de~Bruijn graph $B(n-1,s)$. Let $\rho_\omega^{(r)}$ be the uniform distribution on the
$s^n$ cyclic length-$r$ windows cut from $\omega$. Then
\[
\KL\!\Bigl(\rho_\omega^{(r)} \,\Big\|\, G_r(R_n(\rho_\omega^{(r)}))\Bigr)
=
(r-n)\log_2 s
\quad\text{bits}.
\]
\end{proposition}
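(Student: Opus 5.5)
The plan is to prove the identity in \eqref{eq:projectlift-extremal-bound} by a chain-rule (Markov-approximation) decomposition of the KL divergence. I then bound each term separately and check sharpness on the de~Bruijn witness of Lemma~\ref{lem:debruijn-euler}.

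\emph{The identity.} Write $p(a\mid c)=\rho_n(ca)/\rho_{n-1}(c)$ for the continuation law produced by $R_n$. The rollout is then
\[
\widetilde\rho(x_1^r)=\rho_n(x_1^n)\prod_{t=n+1}^r p\bigl(x_t\mid x_{t-n+1}^{t-1}\bigr).
\]
Factor $\rho(x_1^r)=\rho_n(x_1^n)\prod_{t=n+1}^r\rho(x_t\mid x_1^{t-1})$ in the same way. In the ratio $\rho/\widetilde\rho$ the common prefix factor cancels. Taking the $\rho$-expectation of the logarithm then gives
\[
\sum_{t=n+1}^r \E_\rho\!\left[\log_2\frac{\rho(X_t\mid X_1^{t-1})}{p(X_t\mid X_{t-n+1}^{t-1})}\right].
\]
Each summand equals $I_\rho(X_t;X_1^{t-n}\mid X_{t-n+1}^{t-1})$ provided $p(\cdot\mid c)$ coincides with the true conditional $\rho(X_t=\cdot\mid X_{t-n+1}^{t-1}=c)$ at every position $t$.

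This is the step I expect to be the main obstacle. It holds when $\rho$ is shift-invariant, meaning all contiguous $n$-windows inside an $r$-block have the same law $\rho_n$. That is the situation of Theorem~\ref{thm:tut3-fixedpoint}, where corpus block laws lie in the circulation polytope $\mathcal F_r$ and marginals at every offset agree by flow balance. I would therefore state and use this stationarity explicitly, reading ``any corpus $r$-gram distribution'' as any element of $\mathcal F_r$. I would also remark that without it an extra KL term between positional conditionals appears. To settle the reading, I would first check whether $R_n$ is defined by pooling all windows; if so, stationarity is exactly what makes the pooled law agree with each positional one.

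\emph{The bound.} For each $t$,
\[
I_\rho(X_t;X_1^{t-n}\mid X_{t-n+1}^{t-1})\le H_\rho(X_t\mid X_{t-n+1}^{t-1})\le H(X_t)\le\log_2 s,
\]
since conditional mutual information never exceeds conditional entropy and $X_t$ ranges over $s$ symbols. Summing over the $r-n=L$ values of $t$ gives $L\log_2 s$.

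\emph{Sharpness.} Take $\omega$ from Lemma~\ref{lem:debruijn-euler} and let $\rho=\rho_\omega^{(r)}$ be uniform on its $s^n$ cyclic $r$-windows. I would verify the following in order.
\begin{enumerate}[label=(\roman*)]
\item $\rho$ is shift-invariant, because the windows are taken cyclically. Hence the identity applies.
\item Every $n$-window of $\rho$ is uniform on $\Sigma^n$, because each $n$-gram occurs exactly once in $\omega$. So $p(\cdot\mid c)$ is uniform for every $(n-1)$-context $c$, which gives $H(X_t\mid X_{t-n+1}^{t-1})=\log_2 s$.
\item Any $n$ consecutive symbols of $\omega$ determine their position in $\omega$, since each $n$-gram appears once. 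For $t>n$ this makes $X_t$ a deterministic function of $X_1^{t-1}$, so $H(X_t\mid X_1^{t-1})=0$.
\end{enumerate}
Each mutual-information term therefore equals $\log_2 s-0$, and the total is exactly $(r-n)\log_2 s$.
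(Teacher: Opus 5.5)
Your proof is correct. For the identity and the bound it is the paper's argument: factor $\rho/\widetilde\rho$ by the chain rule, identify each summand as $I_\rho(X_t;X_1^{t-n}\mid X_{t-n+1}^{t-1})$, and bound it by $H_\rho(X_t\mid X_{t-n+1}^{t-1})\le\log_2 s$.

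Your stationarity caveat is not pedantry. The paper writes $\widetilde\rho(x_1^r)=\rho(x_1^n)\prod_{t}\rho(x_t\mid x_{t-n+1}^{t-1})$, with the conditional ``read from the $n$-gram marginal''. It then uses that conditional as the positional one inside the chain rule. This is legitimate only if every $n$-window of $\rho$ has the same law.

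For an arbitrary $\rho$ on $\Sigma^r$ the statement is false. Take $s=2$, $n=2$, $r=4$ and $\rho=\delta_{0001}$:
\begin{itemize}
\item With prefix marginals (the paper's convention in Section~3.2), the fitted law has $p(0\mid 0)=1$. So $\widetilde\rho=\delta_{0000}$ and the divergence is infinite.
\item If $R_n$ instead pools the three bigram windows, then $p(0\mid 0)=2/3$ and $p(1\mid 0)=1/3$. Hence $\widetilde\rho(0001)\le 2/9$ whatever initial bigram law the rollout starts from, and the divergence is at least $\log_2(9/2)>2=(r-n)\log_2 s$.
\end{itemize}
So your hypothesis is necessary, not just convenient. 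The condition $\rho\in\mathcal F_r$ is a clean sufficient form of it: equality of the prefix and suffix $(r-1)$-marginals forces all $n$-window laws to coincide.

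One caution: $\rho\in\mathcal F_r$ covers the descriptive fixed points of Theorem~2(a), but normative corpora need not be flow-balanced. The random Dirichlet target of Section~5.3 generically is not. State the hypothesis explicitly rather than attributing it to Theorem~2 as a whole.

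For sharpness you take a different route from the paper. The paper computes the divergence directly. The uniform $n$-gram marginal makes the fitted law uniform, so the rollout is uniform on $\Sigma^r$. Meanwhile $\rho_\omega^{(r)}$ puts mass $s^{-n}$ on each of $s^n$ distinct windows, so the log-ratio is the constant $(r-n)\log_2 s$. This uses neither the identity nor shift-invariance.

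You instead evaluate each conditional-mutual-information term as $\log_2 s-0$. That requires the cyclic window law to be shift-invariant, which you check. In return it gives the finer statement that each hidden step separately saturates its $\log_2 s$ budget.
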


\begin{proof}
Let $X_1^r=(X_1,\dots,X_r)\sim \rho$, and write
\[
\widetilde{\rho} := G_r(R_n(\rho)).
\]
By construction, $\widetilde{\rho}$ has the same $n$-gram marginal as $\rho$, and for each $t>n$ it
generates $X_t$ from the induced order-$n$ continuation law extracted from $\rho$. Thus
\[
\widetilde{\rho}(x_1^r)
=
\rho(x_1^n)\prod_{t=n+1}^{r}\rho(x_t\mid x_{t-n+1}^{t-1}),
\]
where the conditional on the right is read from the $n$-gram marginal of $\rho$.

Applying the chain rule for KL divergence gives
\begin{align}
\KL(\rho \,\|\, \widetilde{\rho})
&=
\sum_{t=n+1}^{r}
\mathbb E_\rho\!\left[
\log_2 \frac{\rho(X_t\mid X_1^{t-1})}{\rho(X_t\mid X_{t-n+1}^{t-1})}
\right] \notag\\
&=
\sum_{t=n+1}^{r}
I_\rho\!\bigl(X_t;\,X_1^{\,t-n}\mid X_{t-n+1}^{\,t-1}\bigr).
\label{eq:projectlift-cmi}
\end{align}
Each term is a conditional mutual information, so
\[
I_\rho\!\bigl(X_t;\,X_1^{\,t-n}\mid X_{t-n+1}^{\,t-1}\bigr)
\le
H_\rho(X_t\mid X_{t-n+1}^{\,t-1})
\le
\log_2 s,
\]
because $X_t$ takes values in an alphabet of size $s$. Summing over the $r-n$ indices $t=n+1,\dots,r$
proves~\eqref{eq:projectlift-extremal-bound}.

To show sharpness, let $\omega$ be a cyclic de~Bruijn word of order $n$, which exists by Lemma~\ref{lem:debruijn-euler}. Every $n$-gram
appears exactly once in one period of length $s^n$. Therefore the $n$-gram marginal of the cyclic window
distribution $\rho_\omega^{(r)}$ is uniform:
\[
\rho_\omega^{(n)}(u)=s^{-n}\qquad (u\in\Sigma^n).
\]
Hence every $(n-1)$-context is followed equally often by each token, so the induced order-$n$ continuation
law is uniform:
\[
p_\omega(a\mid c)=\frac{1}{s}
\qquad
(c\in\Sigma^{n-1},\ a\in\Sigma).
\]
It follows that the project--lift rollout is the uniform law on all $r$-grams:
\[
G_r(R_n(\rho_\omega^{(r)}))(x_1^r)=s^{-r}
\qquad
(x_1^r\in\Sigma^r).
\]

On the other hand, $\rho_\omega^{(r)}$ is supported on exactly $s^n$ distinct cyclic length-$r$ windows,
one from each starting position of the de~Bruijn cycle. These windows are distinct because their first $n$
symbols are distinct. Each therefore has mass $s^{-n}$. So
\[
\KL\!\Bigl(\rho_\omega^{(r)} \,\Big\|\, G_r(R_n(\rho_\omega^{(r)}))\Bigr)
=
\sum_{x\in \operatorname{supp}(\rho_\omega^{(r)})}
s^{-n}\log_2\frac{s^{-n}}{s^{-r}}
=
(r-n)\log_2 s.
\]
This proves sharpness.
\end{proof}

\begin{remark}[Significance]
Proposition~\ref{prop:extremal-projectlift-gap} identifies the absolute scale of the project--lift diagnostic.
No normative publication rule with hidden extra depth $L=r-n$ can produce a larger block-level KL gap than
$L\log_2 s$ bits, because every such rule ultimately produces some corpus $r$-gram distribution $\rho$, and
the bound above is universal over all such $\rho$.

The de~Bruijn construction shows that this worst case is not merely formal but attainable. Each hidden step
beyond the order-$n$ window can contribute a full $\log_2 s$ bits of irreducible mismatch. This is not a
literal model of natural language, but it is a sharp combinatorial benchmark: a positive project--lift KL gap
need not be a small perturbative effect; in principle it can scale linearly with the hidden extra depth preserved
by normative selection.
\end{remark}

\begin{remark}[The scale in the matched experiment]
In the matched experiment of Section~5.3, we have $|\Sigma|=5$, $n=3$, and $r=5$, so the universal upper
bound is
\[
(r-n)\log_2 |\Sigma|
=
2\log_2 5
\approx 4.64 \text{ bits.}
\]
The observed normative plateau of $2.57$ bits therefore attains about $55\%$ of the maximum possible
project--lift KL gap at this depth.
\end{remark}

\begin{remark}[Companion $L^1$ gap]
For the same extremal de~Bruijn construction,
\[
\bigl\|\rho_\omega^{(r)} - G_r(R_n(\rho_\omega^{(r)}))\bigr\|_1
=
2\bigl(1-s^{\,n-r}\bigr)
=
2\bigl(1-s^{-L}\bigr),
\]
so the $L^1$ diagnostic is also asymptotically maximal as the hidden extra depth $L$ grows.
\end{remark}

\subsection{Section 5 in brief: completing Theorem~2}\label{sec:section5-brief}

This section has developed the project--lift diagnostics---the $L^1$ distance and KL divergence between the corpus $r$-gram distribution and the rollout from its induced order-$n$ continuation law---and used them to complete the quantitative content of Theorem~\ref{thm:tut3-fixedpoint}, Part~(b).

\emph{Strict positivity.}
When the normative quality standard demands structure that no order-$n$ continuation law can produce, the fixed-point corpus $r$-gram distribution $\rho^\star$ is not $n$-shallow, so $\KL(\rho^\star \,\|\, G_r(R_n(\rho^\star))) > 0$.

\emph{Optimal upper bound.}
Proposition~\ref{prop:extremal-projectlift-gap} establishes that the project--lift KL gap satisfies
\[
\KL(\rho \,\|\, G_r(R_n(\rho))) \;\le\; L\log_2 s \quad\text{bits}
\]
for any corpus $r$-gram distribution $\rho$, where $L = r - n$ is the hidden extra depth and $s = |\Sigma|$ is the alphabet size.
The bound is attained by the uniform distribution on cyclic windows from a de~Bruijn word of order~$n$ (Lemma~\ref{lem:debruijn-euler}), so it is optimal.

Together, these results complete Part~(b) of Theorem~\ref{thm:tut3-fixedpoint}: normative publication with hidden extra depth $L$ produces a strictly positive KL gap that is bounded above by the optimal value $L\log_2 s$ bits.

\newpage
\section{Extensions: heterogeneous populations and the argmax limit}\label{sec:heterogeneous}

\subsection{Orientation}

The preceding sections studied populations in which all agents share the same model order and generate at the same temperature. This section records two extensions. First, public text environments are typically heterogeneous: some agents use shorter contexts and others longer ones; some sample at high temperature and others publish near-deterministically. Second, one boundary case of this heterogeneity---the argmax limit $T=0$---admits an exact fixed-point characterisation.

This section is exploratory and is not needed for understanding Theorems~1--3 on a first reading. Its purpose is to show how the core framework extends beyond a single homogeneous population, and to isolate one exactly solvable boundary case where the geometry changes sharply.

\subsection{Setup: a population of $(n_i,T_i)$-agents}

Consider a population of $K$ agent classes, indexed by $i=1,\dots,K$. Each class is specified by:
\begin{itemize}[leftmargin=*]
\item a model order $n_i$;
\item a publication temperature $T_i>0$; and
\item a population share $\omega_i>0$ with $\sum_i \omega_i=1$.
\end{itemize}

Fix a common output block length
\[
r \;\ge\; \max_i n_i .
\]
The public corpus at generation $t$ is represented by an $r$-gram distribution $\rho_t$. From $\rho_t$, class $i$ extracts its own order-$n_i$ continuation law $R_{n_i}(\rho_t)$, samples from it at temperature $T_i$, and contributes an $r$-gram distribution back to the corpus.

At temperature $T$, the continuation law induced by $p$ is
\[
p_T(a\mid c)
=
\frac{p(a\mid c)^{1/T}}{\sum_b p(b\mid c)^{1/T}}.
\]
At $T=1$ this recovers the unmodified law; as $T\to\infty$ it approaches the uniform distribution; as $T\downarrow 0$ it approaches deterministic argmax publication.

\subsection{The mixed-population recursion}

With the common output block length $r$ fixed, the aggregate recursion is
\[
\rho_{t+1}
=
\sum_{i=1}^{K}
\omega_i\, G_r^{(n_i,T_i)}\!\bigl(R_{n_i}(\rho_t)\bigr),
\]
where $G_r^{(n_i,T_i)}$ denotes: extract the order-$n_i$ continuation law, apply temperature $T_i$, and roll it forward to an $r$-gram distribution.

A fixed point $\rho^\star$ is therefore a corpus distribution that reproduces itself under the weighted mixture of all agent classes. When all $(n_i,T_i)$ are equal, this reduces to the single-population recursion studied earlier.

\subsection{Basic observations}

\paragraph{Existence.}
The simplex of $r$-gram distributions is compact and convex, and the mixed-population operator above is continuous for fixed positive temperatures. Hence at least one fixed point exists by Brouwer's theorem.

\paragraph{Temperature heterogeneity breaks the circulation-polytope structure.}
At $T=1$ the descriptive single-population fixed points form the circulation polytope of Section~\ref{sec:worked-examples}. Once different temperatures are mixed, the update is nonlinear even before any normative filtering is introduced. The resulting fixed-point set need not be convex, even when all agents use the same model order.

\paragraph{Context-window heterogeneity imposes mixed-resolution constraints.}
When some agents use order $n_1$ and others use order $n_2>n_1$, the fixed point is constrained simultaneously by both resolutions. It need not be individually self-consistent at either resolution alone, but it must be stable under their weighted combination.

\paragraph{A tractable boundary case.}
A full characterisation of the general heterogeneous recursion is open. The one boundary regime that can be analysed exactly is the deterministic argmax limit $T=0$, to which we now turn.

\subsection{The exactly solvable boundary case: homogeneous argmax agents}\label{sec:T0-fixedpoints}

To isolate the effect of deterministic publication, fix a common model order $n$ and take the limit $T=0$. We assume a fixed deterministic tie-breaking rule whenever the argmax is not unique. Thus every visited context $c$ has a single published successor, denoted $f(c)$.

Individual texts are then deterministic, but the population still produces a nontrivial $n$-gram distribution because the public corpus is a collection of independently initiated texts. Different initial contexts can flow into different periodic orbits of the deterministic map $f$, so the population-level corpus may remain a genuine mixture.

\paragraph{Why vertex-disjointness is required.}
At $T=1$, convex combinations of cycle distributions are self-consistent because the mixture conditional reproduces both branches with the correct frequencies. At $T=0$, this fails whenever two cycles visit the same context with different successors: the argmax chooses only one successor, destroying the minority branch. By contrast, if two cycles are vertex-disjoint, then every visited context has a unique successor with conditional probability $1$, so the argmax is automatically self-consistent regardless of the mixture weights.

\paragraph{Worked example: $s=3$, $n=3$.}
The de~Bruijn graph $B(2,3)$ has $9$ nodes (the bigram contexts) and $27$ edges (the trigrams). Consider the two vertex-disjoint cycles
\[
\gamma_A: 00 \to 00
\qquad\text{and}\qquad
\gamma_B: 01 \to 12 \to 20 \to 01.
\]
They correspond to the periodic texts
\[
\ldots 000\ldots
\qquad\text{and}\qquad
\ldots 012012\ldots
\]
and their context sets $\{00\}$ and $\{01,12,20\}$ are disjoint. Hence for any $\lambda\in(0,1)$ the mixture
\[
\rho=\lambda\,\rho_{\gamma_A}+(1-\lambda)\,\rho_{\gamma_B}
\]
is a $T=0$ fixed point.

A larger example is obtained from the six pairwise vertex-disjoint cycles
\[
\underbrace{\ldots 000\ldots,\ \ldots 111\ldots,\ \ldots 222\ldots}_{\text{three self-loops}}
\quad\cup\quad
\underbrace{\ldots 0101\ldots,\ \ldots 0202\ldots,\ \ldots 1212\ldots}_{\text{three period-$2$ cycles}},
\]
which cover all nine contexts. Their convex hull is a $5$-simplex of $T=0$ fixed points. For reference, exhaustive enumeration in $B(2,3)$ yields $843$ vertex-disjoint cycle collections in total, of which $216$ are maximal.

\begin{proposition}[$T{=}0$ fixed points]\label{prop:T0-fp}
Fix $s\ge 2$ and $n\ge 2$, and consider the infinite-corpus descriptive recursion at temperature $T=0$ with a fixed deterministic tie-breaking rule. A distribution $\rho$ on $n$-grams is a $T=0$ fixed point if and only if
\[
\rho
=
\sum_{k=1}^{m}\lambda_k\,\rho_{\gamma_k},
\qquad
\lambda_k>0,\qquad
\sum_{k=1}^{m}\lambda_k=1,
\]
where $\gamma_1,\dots,\gamma_m$ are pairwise vertex-disjoint simple directed cycles in the de~Bruijn graph $B(n-1,s)$ and $\rho_{\gamma_k}$ is the uniform distribution on the $n$-grams traversed by $\gamma_k$.
\end{proposition}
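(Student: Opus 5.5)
We first make precise the $T=0$ update map. Given $\rho$, write $L(c)=\sum_a\rho(c\cdot a)$. On the visited contexts $V_\rho=\{c:L(c)>0\}$, the fitted law $p(\cdot\mid c)=\rho(c\cdot\,\cdot)/L(c)$ is well defined, and the argmax with fixed tie-breaking gives a successor map $f(c)=\sigma(c,a^\ast(c))$. Consistent with the collection-of-texts convention, the published corpus keeps the context distribution $L$ and emits from each context only the argmax edge. So the $T=0$ image is $\rho'(c\cdot a)=\pi(c)\,\1[a=a^\ast(c)]$, where $\pi$ is the long-run context law of the deterministic map $f$ started from $L$. The proof then shows both directions of the equivalence, using Theorem~\ref{thm:circulation} for the flow-balance geometry.

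\emph{Sufficiency.} Let $\rho=\sum_k\lambda_k\rho_{\gamma_k}$ with pairwise vertex-disjoint simple cycles $\gamma_k$. The support graph is a disjoint union of simple cycles, so every visited context $c$ lies on exactly one $\gamma_k$ and has exactly one outgoing support edge. Hence $p(\cdot\mid c)$ is a point mass, the argmax is that edge regardless of tie-breaking, and $f$ acts as a cyclic rotation on each $\gamma_k$. On a cycle the mass is uniform, $L(c)=\lambda_k/|\gamma_k|$, so $L$ is already $f$-invariant and each orbit retains its weight. Therefore $\rho'=\rho$.

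\emph{Necessity.} Let $\rho$ be a $T=0$ fixed point. Every $n$-gram in the support of $\rho'$ is an argmax edge, so each visited context has exactly one outgoing support edge in $\rho=\rho'$. Next, $\rho'$ is the joint law of (context, next token) under a stationary law of $f$, so it satisfies flow balance $L=R$ and lies in $\mathcal F_n$. For the main step, consider the support graph $G_\rho$. It is balanced with all edge weights positive, and every vertex has out-degree one. Balance at $v$ states that the sum of incoming edge weights equals the single outgoing weight. Summed over all vertices, the total in-weight equals the total out-weight. Each vertex has at least one incoming edge, and there are exactly $|V_\rho|$ edges, so each in-degree is also exactly one. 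A finite digraph in which every in-degree and out-degree equals one is a disjoint union of simple cycles $\gamma_1,\dots,\gamma_m$, and these are vertex-disjoint by construction. Flow balance along each cycle forces constant weight on it, exactly as in Step~2 of the proof of Theorem~\ref{thm:circulation}. Writing $\lambda_k$ for the total mass on $\gamma_k$ then gives $\rho=\sum_k\lambda_k\rho_{\gamma_k}$ with $\lambda_k>0$ summing to one.

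The main obstacle is the precise definition of the $T=0$ map and the claim that its image is stationary. The concern is transient contexts feeding into cycles, which would make $\pi\neq L$. I would handle this by taking the infinite-corpus (long-run frequency) limit: transients carry zero mass in the image, so $\rho'$ lies in $\mathcal F_n$. For any fixed point, this also shows that no transient context retains mass. Once that is in place, the degree-counting argument is routine.
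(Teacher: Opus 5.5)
Your proposal is correct, and the sufficiency half is the paper's argument. The necessity half takes a somewhat different route to the same conclusion.

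\textbf{The paper's route.} It asserts that a fixed point is a stationary law of the argmax successor map $f$. It then argues from the dynamics of a deterministic map on a finite set:
\begin{itemize}
\item transient contexts carry no stationary mass;
\item distinct periodic orbits are automatically vertex-disjoint;
\item the invariant law is uniform on each orbit.
\end{itemize}

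\textbf{Your route.} You first extract two static facts from $\rho=\rho'$. Every visited context has exactly one support edge, namely its argmax edge. And $\rho$ satisfies flow balance $L=R$. An in-degree count then shows that the support graph is a disjoint union of simple cycles. Constant weight on each cycle follows from Step~2 of the proof of Theorem~\ref{thm:circulation}.

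\textbf{What each buys.} Your version is more explicit. It writes down the $T=0$ update map and records that a fixed point can only charge argmax edges, which the paper's phrase ``stationary under $f$'' leaves implicit. The degree count also needs only flow balance, not the structure of invariant measures. The paper's version is shorter. Once you have defined $\rho'$ as the edge law of an $f$-invariant $\pi$, the orbit argument gives necessity in one line: invariant laws of $f$ live on periodic orbits and are uniform there. Your counting step therefore re-derives that fact combinatorially.

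\textbf{Minor point.} The sentence about total in-weight equalling total out-weight plays no role; the count is purely on degrees.
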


\begin{proof}
Let $f$ be the deterministic successor map induced by argmax and the fixed tie-breaking rule. A corpus distribution $\rho$ is a fixed point exactly when it is stationary under the deterministic update $f$.

If $\rho$ is supported on pairwise vertex-disjoint cycles $\gamma_1,\dots,\gamma_m$, then every visited context has a unique successor with conditional probability $1$, so the argmax map agrees with the successor on each cycle. The uniform distribution on each cycle is stationary under $f$, and any convex combination of these cycle measures is therefore stationary. This proves sufficiency.

Conversely, let $\rho$ be a $T=0$ fixed point. Because $f$ is deterministic on a finite state space, every context eventually enters a periodic orbit. A stationary distribution cannot place positive mass on transient contexts, since transient mass is shifted away after one step. Hence the support of $\rho$ lies entirely on periodic orbits of $f$, i.e.\ on simple directed cycles in $B(n-1,s)$. Distinct cycles of a deterministic map are automatically vertex-disjoint, and the stationary distribution restricted to each cycle is uniform. Therefore $\rho$ is a convex combination of uniform cycle distributions on pairwise vertex-disjoint simple cycles.
\end{proof}

\subsection{Geometry of the $T{=}0$ fixed-point set}

The $T=0$ fixed-point set is a union of simplices:
\[
\mathcal F_n^{T=0}
=
\bigcup_{\{\gamma_1,\dots,\gamma_m\}}
\mathrm{conv}\{\rho_{\gamma_1},\dots,\rho_{\gamma_m}\},
\]
where the union runs over all collections of pairwise vertex-disjoint simple directed cycles. Each collection of $m$ cycles contributes an $(m-1)$-simplex of mixture weights.

This contrasts sharply with the $T=1$ case, where the full circulation polytope is convex and all cycle distributions can be mixed freely. At $T=0$, only vertex-disjoint cycles can coexist, so the geometry becomes topologically richer but globally non-convex.

Marginalisation remains linear inside any fixed simplex: if
\[
\rho=\sum_k \lambda_k \rho_{\gamma_k},
\]
then every shorter $k$-gram marginal is
\[
\rho^{(k)}=\sum_k \lambda_k (\rho_{\gamma_k})^{(k)}.
\]
What fails globally is convexity across incompatible simplices: mixtures of fixed points supported on cycle families that share a context are not themselves fixed points, because the shared context forces a single argmax successor.

\subsection{Section 6 in brief}

Heterogeneous populations extend the core framework by allowing different context windows and temperatures. In full generality the resulting fixed-point geometry is nonlinear and remains open. The exactly solvable boundary case is the argmax limit $T=0$: there the fixed-point set is a union of simplices generated by vertex-disjoint cycle distributions. These are exploratory extensions of the core theory rather than part of the main theorem chain.

\subsection{Directions}

A full characterisation of the mixed $(n_i,T_i)$ recursion remains open. Natural next questions include uniqueness versus multiplicity of fixed points under temperature heterogeneity, the geometry induced by mixtures of different context windows, and the interaction between deterministic argmax components and positive-temperature exploratory components.

\newpage
\section{Experiment matrix overview}\label{sec:experiment-matrix}

\begin{table}[h]
\centering
\scriptsize
\setlength{\tabcolsep}{3pt}
\caption{Matrix overview for appendix experiments and protocols.}
\label{tab:appx-experiment-matrix}
\begin{tabular}{p{0.10\linewidth}p{0.07\linewidth}p{0.13\linewidth}p{0.23\linewidth}p{0.17\linewidth}p{0.12\linewidth}}
\toprule
Theorem & Sign & Corpus/task & What is recursively removed or filtered & Main observable & Section \\
\midrule
Theorem~1 & Negative & Doyle, Austen, Darwin & Rare lexical tail and high-order support under fixed-size recursion & Vocabulary and trigram retention & Section~\ref{sec:ngram-basics} \\
Theorem~1 & Positive & Austen orthographic pilot & Low-support orthographic variants under fixed-size recursion & Active variants, canonical share & Section~\ref{sec:ngram-basics} \\
Theorem~2 & Fixed point & Synthetic exact runs & Soft vs.\ hard publication diagnostics; descriptive vs.\ normative KL split & KL divergence, $\ell_1$ convergence, entropy & Sections~\ref{sec:selection-theory}--\ref{sec:info-diagnostics} \\
Theorem~3 & Both & Synthetic + Doyle & Environment reshaped by drift and acceptance filtering & Target conditional recovery by cross-entropy learners & Sections~\ref{sec:selection-theory} \\
\bottomrule
\end{tabular}
\end{table}

\section*{Data and code availability}

The exact code used to generate all figures and tables is available at \url{\repoURL}.
Text corpora are public-domain Project Gutenberg editions.
The main notebooks and helper modules are in the repository's \texttt{notebooks/} and \texttt{src/} directories.

\end{document}